\theoremstyle{definition}
\newtheorem{lems}{Lemma}
\newlength{\widebarargwidth}
\newlength{\widebarargheight}
\newlength{\widebarargdepth}
\DeclareRobustCommand{\widebar}[1]{%
  \settowidth{\widebarargwidth}{\ensuremath{#1}}%
  \settoheight{\widebarargheight}{\ensuremath{#1}}%
  \settodepth{\widebarargdepth}{\ensuremath{#1}}%
  \addtolength{\widebarargwidth}{-0.3\widebarargheight}%
  \addtolength{\widebarargwidth}{-0.3\widebarargdepth}%
  \makebox[0pt][l]{\hspace{0.3\widebarargheight}%
    \hspace{0.3\widebarargdepth}%
    \addtolength{\widebarargheight}{0.3ex}%
    \rule[\widebarargheight]{0.95\widebarargwidth}{0.1ex}}%
  {#1}}
\long\def\@makecaption#1#2{
        \vskip 0.8ex
        \setbox\@tempboxa\hbox{\small {\bf #1:} #2}
        \parindent 1.5em  
        \dimen0=\hsize
        \advance\dimen0 by -3em
        \ifdim \wd\@tempboxa >\dimen0
                \hbox to \hsize{
                        \parindent 0em
                        \hfil
                        \parbox{\dimen0}{\def\baselinestretch{0.96}\small
                                {\bf #1.} #2
                                }
                        \hfil}
        \else \hbox to \hsize{\hfil \box\@tempboxa \hfil}
        \fi
        }
\long\def\comment#1{}
\newcommand{\Exs}{\ensuremath{{\mathbb{E}}}}
\newcommand{\Prob}{\ensuremath{{\mathbb{P}}}}
\newcommand{\usedim}{\ensuremath{D}}
\DeclareMathOperator{\diag}{diag}
\DeclareMathOperator{\var}{var}
\DeclareMathOperator{\cov}{cov}
\DeclareMathOperator{\myspan}{span}
\newcommand{\thetastar}{\ensuremath{\theta^*}}
\newcommand{\thetahat}{\ensuremath{\widehat{\theta}}}
\newcommand{\thetatil}{\ensuremath{\widetilde{\theta}}}
\newcommand{\thetabar}{\ensuremath{\widebar{\theta}}}
\newcommand{\widgraph}[2]{\includegraphics[keepaspectratio,width=#1]{#2}}
\newcommand{\grad}[1]{\ensuremath{\nabla #1}}
\newcommand{\mydefn}{\ensuremath{: \, =}}
\newcommand{\real}{\ensuremath{\mathbb{R}}}
\newcommand{\order}{\ensuremath{\mathcal{O}}}
\newcommand{\Pmat}{\ensuremath{\mathbf{P}}}
\newcommand{\Phat}{\ensuremath{\widehat{\mathbf{P}}}}
\newcommand{\IdMat}{\ensuremath{\mathbf{I}}}
\newcommand{\Tstar}{\ensuremath{\mathcal{T}}}
\newcommand{\That}{\ensuremath{\widehat{\Tstar}}}
\newcommand{\Ttil}{\ensuremath{\widetilde{\Tstar}}}
\newcommand{\Jstar}{\ensuremath{\mathcal{J}}}
\newcommand{\V}{\ensuremath{\mathcal{V}}}
\newcommand{\iter}{\ensuremath{k}}
\newcommand{\TotalEpochs}{\ensuremath{M}}
\newcommand{\Avglength}{\ensuremath{N}}
\newcommand{\nobs}{\ensuremath{N}}
\newcommand{\epochNum}{\ensuremath{m}}
\newcommand{\Nm}{\ensuremath{\Avglength_{\epochNum}}}
\newcommand{\Dm}{\ensuremath{\mathfrak{D}_\epochNum}}
\newcommand{\Em}{\ensuremath{\mathfrak{E}_\epochNum}}
\newcommand{\pardelta}{\ensuremath{\delta}}
\newcommand{\unicon}{\ensuremath{c}}
\newcommand{\iterM}{\ensuremath{\epochNum}}
\newcommand{\NM}{\ensuremath{N_M}}
\newcommand{\stepsize}{\ensuremath{\lambda}}
\newcommand{\epochLen}{\ensuremath{{K}}}
\newcommand{\Klambda}{\ensuremath{\epochLen}}
\newcommand{\rhat}{\ensuremath{\widehat{r}}}
\newcommand{\LOGDMN}{\ensuremath{ \frac{\log (8D M/ \delta)}{N} }}
\newcommand{\LOGDMNSQ}{\ensuremath{ \sqrt{\frac{\log (8D M / \delta)}{N} }}}
\newcommand{\LOGDMNm}{\ensuremath{ \frac{\log (8DM / \delta)}{N_m} }}
\newcommand{\LOGterm}[1]{\ensuremath{ \frac{\log (8DM / \delta)}{N_{#1} } }}
\newenvironment{carlist}
 {\begin{list}{$\bullet$}
 {\setlength{\topsep}{0in} \setlength{\partopsep}{0in}
  \setlength{\parsep}{0in} \setlength{\itemsep}{\parskip}
  \setlength{\leftmargin}{0.15in} \setlength{\rightmargin}{0.08in}
  \setlength{\listparindent}{0in} \setlength{\labelwidth}{0.08in}
  \setlength{\labelsep}{0.1in} \setlength{\itemindent}{0in}}}
 {\end{list}}
 \newcommand{\bcar}{\begin{carlist}}
\newcommand{\ecar}{\end{carlist}}
\newcommand{\W}{\ensuremath{W}}
\newcommand{\constStep}{\ensuremath{\tdstep}}
\newcommand{\base}{\ensuremath{2}}
\newcommand{\baseSq}{\ensuremath{4}}
\newcommand{\baseSqrt}{\ensuremath{\sqrt{2}}}
\newcommand{\TwiceBaseSq}{\ensuremath{8}}
\newcommand{\TwiceBaseSqPlusOne}{\ensuremath{9}}
\newcommand{\EpochSamples}{\ensuremath{\{ \That_{i} \}_{i \in \Em}}}
\newcommand{\Dim}{\ensuremath{D}}
\newcommand{\rtil}{\ensuremath{\widetilde{r}}}
\newcolumntype{P}[1]{>{\centering\arraybackslash}p{#1}}
\newcommand{\problem}{\mathcal{P}}
\newcommand{\tv}{{\rm TV}}
\newcommand{\obsX}{\mathbf{Z}}
\newcommand{\obsr}{R}
\newcommand{\remainder}{\mathcal{R}}
\newcommand{\problemclass}{\mathcal{S}}
\newcommand{\one}{\mathbf{1}}
\newcommand{\Umat}{\ensuremath{\mathbf{U}}}
\newcommand{\Zsamp}{\mathbf{Z}}
\newcommand{\discount}{\ensuremath{\gamma}}
\newcommand{\Nsamp}{\nobs}
\newcommand{\lbmaxrew}{\nu}
\newcommand{\betastar}{\ensuremath{\beta^*}}
\newcommand{\reward}{\ensuremath{r}}
\newcommand{\qproblem}{\ensuremath{\mathcal{Q}}}
\newcommand{\tdstep}{\ensuremath{\alpha}}
\newcommand{\parMRP}{\ensuremath{\lambda}}
\newcommand{\PmatPrime}{\ensuremath{\Pmat^\prime}}
\newcommand{\hardProblem}{\widebar{\problem}}
\newcommand{\hardPmat}{\bar{\Pmat}}
\newcommand{\tauOne}{\ensuremath{\tau}}
\newcommand{\tauTwo}{\ensuremath{\eta}}
\newcommand{\DeltaPPrime}{\ensuremath{\Delta_{\Pmat}}}
\newcommand{\MatDiff}[1]{\ensuremath{M_{#1}}}
\newcommand{\NumEpoch}{\ensuremath{\TotalEpochs}}
\newcommand{\epiter}{\ensuremath{m}}
\newcommand{\neighbor}{\mathfrak{N}}
\newcommand{\p}{\ensuremath{p}}
\newcommand{\Aux}{\ensuremath{V}}
\newcommand{\vartarget}{\ensuremath{\vartheta}}
\newcommand{\varthetaprime}{\ensuremath{\vartheta'}}
\newcommand{\betahat}{\ensuremath{\hat{\beta}}}
\newcommand{\Fisher}{\ensuremath{J}}
\newcommand{\Amat}{\ensuremath{\mathbf{A}}}
\begin{document}


\begin{center}

{\bf{\LARGE{\mbox{Is Temporal Difference Learning Optimal?} \\[.1cm]
      An Instance-Dependent Analysis}}}

\vspace*{.2in}

{\large{
 \begin{tabular}{ccc}
  Koulik Khamaru$^{\diamond}$ & Ashwin Pananjady$^{\dagger}$ &  Feng Ruan$^{\dagger}$ \\
 \end{tabular}
 \begin{tabular}
 {ccc}
  Martin J. Wainwright$^{\diamond, \dagger, \ddagger}$ & Michael I. Jordan$^{\diamond, \dagger}$ \textbf{}
 \end{tabular}
}}

 \vspace*{.2in}

 \begin{tabular}{c}
 Department of Statistics$^\diamond$ \\
 Department of Electrical Engineering and Computer Sciences$^\dagger$\\
 UC Berkeley\\
 \end{tabular}

 \vspace*{.1in}
 \begin{tabular}{c}
 The Voleon Group$^\ddagger$
 \end{tabular}
\vspace*{.2in}
\textbf{}
\vspace*{.2in}

\today

\vspace*{.2in}

\begin{abstract}
We address the problem of policy evaluation in discounted Markov decision
processes, and provide instance-dependent guarantees on the $\ell_\infty$-error
under a generative model.  We establish both asymptotic and
non-asymptotic versions of local minimax lower bounds for policy
evaluation, thereby providing an instance-dependent baseline by which
to compare algorithms. Theory-inspired simulations show that the widely-used
temporal difference (TD) algorithm is strictly suboptimal when evaluated
in a non-asymptotic setting, even when combined with Polyak-Ruppert
iterate averaging.  We remedy this issue by introducing and analyzing
variance-reduced forms of stochastic approximation, showing that they
achieve non-asymptotic, instance-dependent optimality up to
logarithmic factors.
\end{abstract}

\end{center}

\section{Introduction}
\label{sec:intro}

Reinforcement learning (RL) is a class of methods for the optimal control of dynamical
systems~\cite{Bertsekas_dyn1,Bertsekas_dyn2,SutBar18,bertsekas1996neuro} that has begun 
to make inroads in a wide range of applied problem domains.  This empirical research has 
revealed the limitations of our theoretical understanding of this class of methods---popular 
RL algorithms exhibit a variety of behavior across domains and problem instances, and 
existing theoretical bounds, which are generally based on worst-case assumptions, fail to 
capture this variety. An important theoretical goal is to develop \emph{instance-specific} 
analyses that help to reveal what aspects of a given problem make it ``easy'' or ``hard,'' and 
allow distinctions to be drawn between ostensibly similar algorithms in terms of their 
performance profiles.  The focus of this paper is on developing such a theoretical 
understanding for a class of popular stochastic approximation algorithms used for 
policy evaluation.

RL methods are generally formulated in terms of a Markov decision process (MDP). 
An agent operates in an environment whose dynamics are described by an MDP but 
are unknown: at each step, it observes the current
state of the environment, and takes an action that changes the state
according to some stochastic transition function. The eventual goal of the agent is to 
learn a \emph{policy}---a mapping from states to actions---that optimizes the reward 
accrued over time. In the typical setting, rewards are assumed to be additive over time, 
and are also discounted over time. Within this broad context, a key sub-problem is 
that of \emph{policy evaluation}, where the goal is estimate the long-term expected 
reward of a fixed policy based on observed state-to-state transitions and one-step rewards. 
It is often preferable to have $\ell_\infty$-norm guarantees for such an estimate, 
since these are particularly compatible with policy-iteration methods.
In particular, policy iteration can be shown to converge at a geometric
rate when combined with policy evaluation methods that are accurate in
$\ell_\infty$-norm~(see, e.g.,\cite{agarwal2019reinforcement,bertsekas1996neuro}).

In this paper, we study a class of stochastic approximation algorithms
for this problem under a generative model for the underlying MDP, with
a focus on developing instance-dependent bounds. Our results
complement an earlier paper by a subset of the
authors~\cite{pananjady2019value}, which studied the least squares
temporal difference (LSTD) method through such a lens.


\subsection{Related work}

We begin with a broad overview of related work, categorizing that work as involving asymptotic analysis, non-asymptotic analysis, or instance-dependent analysis.

\paragraph{Asymptotic theory:}

Markov reward processes have been the subject of
considerable classical study~\cite{Feller2,durrett1999essentials}.  
In the context of reinforcement learning and stochastic control, 
the policy evaluation problem for such processes has been tackled by various approaches based on
stochastic approximation.  Here we focus on past work that studies 
the \emph{temporal difference} (TD) update and its relatives; see~\cite{dann2014policy} 
for a comprehensive survey.  The TD update was originally proposed by 
Sutton~\cite{sutton1988learning}, and is typically used in conjunction with an 
appropriate parameterization of
value functions.  Classical results on the algorithm are typically
asymptotic, and include both convergence 
guarantees~\cite{jaakkola1994convergence,borkar2009stochastic,borkar2000ode}
and examples of divergence~\cite{baird1995residual}; see the seminal
work~\cite{tsitsiklis1997analysis} for
conditions that guarantee asymptotic convergence.

It is worth noting that the TD algorithm is a form of linear
stochastic approximation, and can be fruitfully combined with the
iterate-averaging procedure put forth independently by
Polyak~\cite{polyak1992acceleration} and Ruppert~\cite{Rup88}.  In
this context, the work of Polyak and
Juditsky~\cite{polyak1992acceleration} deserves special mention, since
it shows that under fairly mild conditions, the TD algorithm converges
when combined with Polyak-Ruppert iterate averaging.  To be clear, in
the specific context of the policy evaluation problem, the results in the
Polyak-Juditsky paper~\cite{polyak1992acceleration} allow noise only
in the observations of rewards (i.e., the transition function is
assumed to be known).  However, the underlying techniques can be
extended to derive results in the setting in which we only observe 
samples of transitions; for instance, see the work of 
Tadic~\cite{tadic2004almost} for results of this type.


\paragraph{Non-asymptotic theory:}
Recent years have witnessed significant interest in understanding TD-type
algorithms from the non-asymptotic standpoint.  Bhandari et
al.~\cite{bhandari2018finite} focus on proving $\ell_2$-guarantees for
the TD algorithm when combined with Polyak-Ruppert iterate averaging.
They consider both the generative model as well as the Markovian noise
model, and provide non-asymptotic guarantees on the expected
error. Their results also extend to analyses of the popular
TD($\lambda$) variant of the algorithm, as well as to $Q$-learning in
specific MDP instances.  Also noteworthy is the analysis of
Lakshminarayanan and Szepesvari~\cite{lakshminarayanan2018linear},
carried out in parallel with Bhandari et al.~\cite{bhandari2018finite};
it provides similar guarantees on the TD$(0)$ algorithm with constant
stepsize and averaging.  Note that both of these analyses focus on
$\ell_2$-guarantees (equipped with an associated inner product), and
thus can directly leverage proof techniques for stochastic
optimization~\cite{BacMou11,NemJudLanSha09}.

Other related results\footnote{There were some errors in the results
  of Korda and La~\cite{korda2015td} that were pointed out by both
  Lakshminarayanan and Szepesvari~\cite{lakshminarayanan2018linear}
  and Xu et al.~\cite{xu2020reanalysis}.} include those of Dalal et
al.~\cite{dalal2018finite}, Doan et al.~\cite{doan2019finite}, Korda
and La~\cite{korda2015td}, and also more contemporary papers~\cite{xu2020reanalysis,wai2019variance}. The latter three of these papers introduce a
variance-reduced form of temporal difference learning, a variant of
which we analyze in this paper.

\paragraph{Instance-dependent results:}

The focus on instance-dependent guarantees for TD algorithms is
recent, and results are available both in the $\ell_2$-norm
setting~\cite{bhandari2018finite,lakshminarayanan2018linear,dalal2018finite,
  xu2020reanalysis} and the $\ell_\infty$-norm
settings~\cite{pananjady2019value}.  In general, however, the guarantees
provided by work to date are not sharp.  For instance, the bounds in~\cite{dalal2018finite} scale exponentially in relevant
parameters of the problem, whereas the
papers~\cite{bhandari2018finite,lakshminarayanan2018linear,xu2020reanalysis}
do not capture the correct ``variance'' of the problem instance at
hand. A subset of the current authors~\cite{pananjady2019value}
derived $\ell_\infty$ bounds on policy evaluation for the plug-in
estimator.  These results were shown to be locally minimax
optimal in certain regions of the parameter space. There has also been
some recent focus on obtaining instance-dependent guarantees in online
reinforcement learning settings~\cite{maillard2014hard}. This has
resulted in more practically useful algorithms that provide, for
instance, horizon-independent regret bounds for certain episodic
MDPs~\cite{zanette2019tighter,jiang2018open}, thereby improving upon
worst-case bounds~\cite{AzaOsbMun17}.  Recent work has also
established some instance-dependent bounds, albeit not sharp over the
whole parameter space, for the problem of state-action value function
estimation in Markov decision processes, for both ordinary
$Q$-learning~\cite{wainwright2019stochastic} and a variance-reduced
improvement~\cite{wainwright2019variance}. \nocite{sidford2018near}


\subsection{Contributions}
\label{sec:contrib}

In this paper, we study stochastic approximation algorithms for
evaluating the value function of a Markov reward process in the
discounted setting.  Our goal is to provide a sharp characterization
of performance in the $\ell_\infty$-norm, for procedures that are
given access to state transitions and reward samples under the
generative model.  In practice, temporal difference learning is
typically applied with an additional layer of (linear) function
approximation.  In the current paper, so as to bring the instance dependence into sharp focus, we study the algorithms without this function approximation step.  In this context, we tell a story with three parts, as detailed below:

\paragraph{Local minimax lower bounds:}   Global minimax analysis
provides bounds that hold uniformly over large classes of models.  In
this paper, we seek to gain a more refined understanding of how the
difficulty of policy evaluation varies as a function of the instance.
In order to do so, we undertake an analysis of the local minimax risk
associated with a problem.  We first prove an asymptotic statement
(Proposition~\ref{prop:lower-bound-local-asymp-minimax-risk}) that
characterizes the local minimax risk up to a logarithmic factor; it
reveals the relevance of two functionals of the instance that we
define.  In proving this result, we make use of the classical
asymptotic minimax theorem~\cite{Hajek72, LeCam72, LeCamYa00}.  We
then refine this analysis by deriving a \emph{non-asymptotic} local
minimax bound, as stated in
Theorem~\ref{thm:lower-bound-local-minimax-risk-alt}, which is derived
using the non-asymptotic local minimax framework of Cai and
Low~\cite{cai2004adaptation}, an approach that builds upon the seminal concept of hardest local alternatives that can be traced back to Stein~\cite{Stein56a}.

\paragraph{Non-asymptotic suboptimality of iterate averaging:}
Our local minimax lower bounds raise a natural question: Do standard
procedures for policy evaluation achieve these instance-specific
bounds?  In Section~\ref{sec:suboptimality_of_averaging}, we address
this question for the TD(0) algorithm with iterate averaging.  Via a
careful simulation study, we show that for many popular stepsize
choices, the algorithm \emph{fails} to achieve the correct instance-dependent rate in the non-asymptotic setting, even when the sample size is quite large. This is true for both the constant stepsize, as well as polynomial stepsizes of various orders. Notably, the algorithm
with polynomial stepsizes of certain orders achieves the local risk in the asymptotic setting (see Theorem 1).

\paragraph{Non-asymptotic optimality of variance reduction:}

In order to remedy this issue with iterate averaging, we propose and
analyze a variant of TD learning with variance reduction, showing both
through theoretical (see Theorem 2) and numerical results (see
Figure~\ref{fig:VRPE_simulation}) that this algorithm achieves the
correct instance-dependent rate provided the sample size is larger
than an explicit threshold.  Thus, this algorithm is provably better
than TD$(0)$ with iterate averaging.

\subsection{Notation}

For a positive integer $n$, let $[n] := \{1, 2, \ldots, n\}$. For a
finite set $S$, we use $|S|$ to denote its cardinality.  We use $c, C,
c_1, c_2, \dots$ to denote universal constants that may change from
line to line.  We let $\ones$ denote the all-ones vector in
$\real^{\Dim}$.  Let $e_j$ denote the $j$th standard basis vector in
$\real^{\Dim}$.  We let $v_{(i)}$ denote the $i$-th order statistic of
a vector $v$, i.e., the $i$-th largest entry of~$v$.  For a pair of
vectors $(u, v)$ of compatible dimensions, we use the notation $u
\preceq v$ to indicate that the difference vector $v - u$ is
entrywise non-negative. The relation $u \succeq v$ is defined
analogously. We let $|u|$ denote the entrywise absolute value of a
vector $u \in \real^{\Dim}$; squares and square-roots of vectors are,
analogously, taken entrywise. Note that for a positive scalar
$\lambda$, the statements $|u | \preceq \lambda \cdot \ones$ and $\| u
\|_{\infty} \leq \lambda$ are equivalent. Finally, we let $\|
\mathbf{M} \|_{1, \infty}$ denote the maximum $\ell_1$-norm of the
rows of a matrix $\mathbf{M}$, and refer to it as the $(1,
\infty)$-operator norm of a matrix.  More generally, for scalars $q, p \geq 1$, 
we define $\matrixnorm{\mathbf{M}}_{p, q} \defeq \sup_{\norm{x}_p \le 1} \norm{ \mathbf{M}
  x}_q$. We let $\mathbf{M}^{\dagger}$ denote the Moore-Penrose
pseudoinverse of a matrix $\mathbf{M}$.


\section{Background and problem formulation}

We begin by introducing the basic mathematical formulation of Markov reward processes (MRPs) and generative observation models.

\subsection{Markov reward processes and value functions}

We study MRPs defined on a finite set of $\Dim$ states, which we index
by the set $[\Dim] \equiv \{1, 2, \ldots, \Dim\}$. The
state evolution over time is determined by a set of transition
functions, $\{P(\cdot | i), \; i \in [\Dim]\}$. Note that each such transition
function can be naturally associated with a $\Dim$-dimensional vector;
denote the $i$-th such vector as $p_i$. We let $\mathbf{P} \in[0,1]^{D
  \times D}$ denote a row-stochastic (Markov) transition matrix, where
row $i$ of this matrix contains the vector $p_{i}$. Also associated
with an MRP is a population reward function, $r: [\Dim] \mapsto
\mathbb{R}$, possessing the semantics that a transition from state $i$ results in the reward
$r(i)$. For convenience, we engage in a minor abuse of notation by
letting $r$ also denote a vector of length $\Dim$; here $r_{i}$
corresponds to the reward obtained at state $i$.

We formulate the long-term value of a state in the MRP in terms of the infinite-horizon, discounted reward. This value function (denoted
here by the vector $\thetastar \in \real^{\Dim}$) can be computed as
the unique solution of the Bellman fixed-point relation, $\thetastar =
r + \discount \mathbf{P} \thetastar$.

\subsection{Observation model}
\label{SecObs}

In the learning setting, the pair $(\mathbf{P}, r)$ is unknown, and we
accordingly assume access to a black box that generates samples from the
transition and reward functions. In this paper, we operate under a
setting known as the synchronous or generative
setting~\cite{KeaSin99}; this setting is also often referred to as the
``i.i.d.\ setting'' in the policy evaluation literature. For a given sample
index, $k \in \{1,2, \ldots, \nobs\}$ and for each state $j \in[D],$ we observe
a random next state
\begin{subequations}
  \label{eq:obs}
  \begin{align}
    \label{eq:obs-trans}
    X_{k, j} \sim P(\cdot | j) \qquad \mbox{for $j \in [\Dim]$.}
  \end{align}
We collect these transitions in a matrix $\Zsamp_k$, which by
definition contains one $1$ in each row: the $1$ in the $j$-th row
corresponds to the index of state $X_{k, j}$. We also observe a random
reward vector $R_{k} \in \real^{\Dim}$, where the rewards are
generated independently across states with\footnote{All of our upper
  bounds extend with minor modifications to the sub-Gaussian reward
  setting.}
\begin{align}
  \label{eq:obs-rew}
R_{k, j} \sim \mathcal{N}(r_{j}, \sigma_r^2).
\end{align}
\end{subequations}

Given these samples, define the $k$-th (noisy) linear operator
$\That_k: \real^{\Dim} \mapsto \real^{\Dim}$ whose evaluation at the
point $\theta$ is given by
\begin{align}
\label{eqn:That-defn}
\That_k (\theta) & = R_k + \discount \Zsamp_k \theta.
\end{align}
The construction of these operators is inspired by the fact that we
are interested in computing the fixed point of the population operator,
\begin{align}
\label{eqn:Tstar-defn}
\Tstar: \theta \mapsto r + \discount \Pmat \theta,
\end{align}
and a classical and natural way to do so is via a form of stochastic
approximation known as temporal difference learning, which we describe
next.

\subsection{Temporal difference learning and its variants}

Classical temporal difference (TD) learning algorithms are parametrized by a sequence of stepsizes, $\{\tdstep_k
\}_{k \geq 1}$, with $\tdstep_k \in (0, 1]$.  Starting with an
  initial vector $\theta_1 \in \real^\Dim$, the TD updates take the
  form
\begin{align}
  \label{eq:TD}
\theta_{k + 1} = (1 - \tdstep_k) \theta_k + \tdstep_k \That_k
(\theta_k) \quad \mbox{for $k = 1, 2, \ldots$.}
\end{align}
In the sequel, we discuss three popular stepsize choices:
\begin{subequations}
  \begin{align}
    \texttt{Constant stepsize:} & \qquad \tdstep_k = \tdstep, \qquad 
    \text{where} \qquad 0 < \tdstep \leq \tdstep_{\max}.
    \label{eqn:constant_step} \\
    \texttt{Polynomial stepsize:} & \qquad \tdstep_k = \frac{1}{k^\omega} \qquad 
    \mbox{for some $\omega \in (0, 1)$.}
    \label{eqn:poly_step} \\
    \texttt{Recentered-linear stepsize:} & \qquad
    \tdstep_k = \frac{1}{1 + (1 - \discount)k}.
    \label{eqn:recentered-linear} 
  \end{align}   
\end{subequations}

In addition to the TD sequence~\eqref{eq:TD}, it is also natural to
perform \emph{Polyak-Ruppert averaging}, which produces a parallel
sequence of averaged iterates
\begin{align}
\label{eq:TD-avg}
\thetatil_{k} = \frac{1}{k} \sum_{j = 1}^k \theta_j \quad \mbox{for $k
  = 1, 2, \ldots$.}
\end{align}
Such averaging schemes were introduced in the context of general
stochastic approximation by
Polyak~\cite{polyak1992acceleration} and Ruppert~\cite{Rup88}.  A large body of theoretical literature demonstrates that such an
averaging scheme improves the rates of convergence of stochastic
approximation when run with overly ``aggressive'' stepsizes.


\section{Main results}
\label{SecMain}

We turn to the statements of our main results and discussion of
their consequences.  All of our statements involve certain measures of
the local complexity of a given problem, which we introduce first.  We
then turn to the statement of lower bounds on the $\ell_\infty$-norm
error in policy evaluation.  In Section~\ref{SecLower}, we prove two
lower bounds. Our first result, stated as
Proposition~\ref{prop:lower-bound-local-asymp-minimax-risk}, is
asymptotic in nature (holding as the sample size $\Nsamp \rightarrow
+\infty$).  Our second lower bound, stated as
Theorem~\ref{thm:lower-bound-local-minimax-risk-alt}, provides a
result that holds for a range of finite sample sizes.  Given these
lower bounds, it is then natural to wonder about known algorithms that
achieve them.  Concretely, does the TD$(0)$ algorithm combined with
Polyak-Ruppert averaging achieve these instance-dependent bounds?  In
Section~\ref{sec:suboptimality_of_averaging}, we undertake a careful
empirical study of this question, and show that in the non-asymptotic
setting, this algorithm fails to match the instance-dependent bounds.
This finding sets up the analysis in Section~\ref{sec:VRPE}, where we
introduce a variance-reduced version of TD$(0)$, and prove that it
does achieve the instance-dependent lower bounds from
Theorem~\ref{thm:lower-bound-local-minimax-risk-alt} up to a
logarithmic factor in dimension.


\paragraph{Local complexity measures:}
Recall the generative observation model described in
Section~\ref{SecObs}.  For a transition matrix $\Pmat$, we write
$\Zsamp \sim \Pmat$ to denote a draw of a random matrix with $\{0,1 \}$ entries
and a single one in each row (with the position of the one in row
$\Zsamp_j$ determined by sampling from the multinomial distribution
specified by $p_j$). For a fixed vector $\theta \in
\real^{\Dim}$, note that $(\Zsamp - \Pmat) \theta$ is a random vector
in $\real^\Dim$, and define its covariance matrix as follows:
\begin{align}
\label{eqn:def-Sigma}
\Sigma_{\Pmat}(\theta) = \cov_{\Zsamp \sim \Pmat} \left((\Zsamp-\Pmat)
\theta\right).
\end{align}
We often use $\Sigma(\theta)$ as a shorthand for $\Sigma_{\Pmat}
(\theta)$ when the underlying transition matrix $\Pmat$ is clear from
the context.

With these definitions in hand, define the complexity measures
 \begin{subequations}
 \label{eq:local-complexity}
 \begin{align}
\nu(\Pmat, \theta) & \mydefn \max_{\ell \in [D]}\left(e_{\ell}^{\top}
(\IdMat - \discount \Pmat)^{-1} \Sigma (\theta)(\IdMat - \discount
\Pmat)^{-\top} e_{\ell}\right)^{1/2}, \quad \mbox{and} \\
\rho(\Pmat, r) & \mydefn \sigma_r \norm{(\IdMat - \discount
  \Pmat)^{-1}}_{2, \infty} \; \equiv \; \sigma_r \max_{\|u\|_2 = 1}
\|(\IdMat - \discount \Pmat)^{-1} u\|_\infty.
 \end{align}

 Note that $\nu(\Pmat, \theta)$ corresponds to the maximal variance of
 the random vector $(\IdMat - \discount \Pmat)^{-1} (\Zsamp - \Pmat)
 \theta$.  As we demonstrate shortly, the quantities $\nu(\Pmat,
 \thetastar)$ and $\rho(\Pmat, r)$ govern the local complexity of
 estimating the value function $\thetastar$ induced by the instance
 $(\Pmat, r)$ under the observation model~\eqref{eq:obs}.  A portion
 of our results also involve the quantity
\begin{align}
  b(\theta) & \mydefn \frac{\| \theta \|_{\myspan}}{1 - \discount},
\end{align}
\end{subequations}
 where $\|\theta\|_{\myspan} = \max \limits_{j \in [D]} \theta_j -
 \min \limits_{j \in [D]} \theta_j$ is the span seminorm.


\subsection{Local minimax lower bound}
\label{SecLower}

Throughout this section, we use the letter $\problem$ to denote an
individual problem instance, $\problem = (\Pmat, r)$, and use $\theta(\problem)
:= \thetastar = (\IdMat - \discount \Pmat)^{-1}r$ to denote the
\emph{target} of interest. The aim of this section is to establish
\emph{instance-specific} lower bounds for estimating
$\theta(\problem)$ under the observation model~\eqref{eq:obs}. In
order to do so, we adopt a local minimax approach.

The remainder of this the section is organized as follows.  In
Section~\ref{sec:asymptotic-lower-bound}, we prove an asymptotic local
minimax lower bound, valid as the sample size $\Nsamp$ tends to
infinity. It gives an explicit Gaussian limit for the rescaled error
that can be achieved by any procedure.  The asymptotic covariance in
this limit law depends on the problem instance, and is very closely
related to the functionals $\nu(\Pmat,\thetastar)$ and $\rho(\Pmat,
r)$ that we have defined.  Moreover, we show that this limit can be
achieved---in the asymptotic limit---by the TD algorithm combined with
Polyak-Ruppert averaging.  While this provides a useful sanity check, in practice we implement estimators using a finite number of samples $\Nsamp$, so it is important to obtain non-asymptotic lower bounds for a full understanding.  With this motivation,
Section~\ref{sec:non-asymptotic-lower-bound} provides a new,
\emph{non-asymptotic} instance-specific lower bound for the policy
evaluation problem.  We show that the quantities
$\nu(\Pmat,\thetastar)$ and $\rho(\Pmat, r)$ also cover the
instance-specific complexity in the finite-sample setting.  In proving
this non-asymptotic lower bound, we build upon techniques in the
statistical literature based on constructing hardest one-dimensional
alternatives~\cite{Stein56a, Birge83, DonohoLi87, DonohoLi91a,
  CaiLo15}.  As we shall see in later sections, while the TD algorithm
with averaging is instance-specific optimal in the asymptotic setting,
it \emph{fails} to achieve our non-asymptotic lower bound. 

\subsubsection{Asymptotic local minimax lower bound}
\label{sec:asymptotic-lower-bound}

Our first approach towards an instance-specific lower bound is an
asymptotic one, based on classical local asymptotic minimax theory.
For regular and parametric families, the H\'{a}jek--Le Cam local
asymptotic minimax theorem~\cite{Hajek72, LeCam72, LeCamYa00} shows
that the Fisher information---an instance-specific
functional---characterizes a fundamental asymptotic limit.  Our model
class is both parametric and regular (cf.\ Eq.~\eqref{eq:obs}),
and so this classical theory applies to yield an asymptotic local
minimax bound.  Some additional work is needed to relate this statement to the more transparent complexity measures $\nu(\Pmat,\thetastar)$ and
$\rho(\Pmat, \reward)$ that we have defined.

In order to state our result, we require some additional notation. Fix
an instance $\problem = (\Pmat, \reward)$.  For any $\epsilon > 0$, we
define an $\epsilon$-neighborhood of problem instances by
\begin{align*}
\neighbor(\problem; \epsilon) = \left\{\problem' = (\Pmat', \reward'):
\norm{\Pmat - \Pmat'}_F + \norm{r - r'}_2 \le \epsilon \right\}.
\end{align*}
Adopting the $\ell_\infty$-norm as the loss function, the \emph{local
  asymptotic minimax risk} is given by
\begin{align}
\minimax_{\infty}(\problem) \equiv \minimax_{\infty}(\problem;
\norm{\cdot}_\infty) = \lim_{c \to \infty} \lim_{\nobs \to \infty}
\inf_{\hat{\theta}_\nobs} \sup_{\qproblem \in \neighbor(\problem;
  c/\sqrt{\nobs})} \E_{\qproblem} \left[ \sqrt{\nobs}
  \norm{\hat{\theta}_{\nobs} - \theta(\qproblem)}_\infty\right].
\end{align}
Here the infimum is taken over all estimators $\thetahat_{\nobs}$ that
are measurable functions of $\nobs$ i.i.d. observations drawn
according to the observation model~\eqref{eq:obs}.

Our first main result characterizes the local asymptotic risk
$\minimax_{\infty}(\problem) $ exactly, and shows that it is attained
by stochastic approximation with averaging. Recall the Polyak-Ruppert
(PR) sequence $\{ \thetatil_k \}_{k \geq 1}$ defined in
Eq.~\eqref{eq:TD-avg}, and let $\{ \thetatil^{\,\omega}_k \}_{k
  \geq 1}$ denote this sequence when the underlying SA algorithm is
the TD update with the polynomial stepsize
sequence~\eqref{eqn:poly_step} with exponent $\omega$.

\begin{proposition}
\label{prop:lower-bound-local-asymp-minimax-risk}
Let 
$Z \in \real^D$ be a multivariate Gaussian with zero mean and
covariance matrix
\begin{subequations}
\begin{align}
  (\IdMat - \discount \Pmat)^{-1} (\discount^2
  \Sigma_{\Pmat}(\theta(\problem) ) + \sigma_r^2 \IdMat) (\IdMat -
  \discount \Pmat)^{-\top}.
\end{align}
Then the local asymptotic minimax risk at problem instance $\problem$
is given by
\begin{align}
\label{EqnAsympLower} 
\minimax_{\infty}(\problem) = \E[ \| Z \|_\infty ].
\end{align}
Furthermore, for each problem instance $\problem$ and scalar $\omega
\in (1/2, 1)$, this limit is achieved by the TD algorithm with
an $\omega$-polynomial stepsize and PR-averaging:
\begin{align}
  \label{eq:SA-asymp-opt}
\lim_{\nobs \to \infty} \; \sqrt{\nobs} \cdot \E \left[ \|
  \thetatil^{\,\omega}_{\nobs} - \theta(\problem) \|_\infty \right] =
\E[ \| Z \|_\infty ].
\end{align}
\end{subequations}
\end{proposition}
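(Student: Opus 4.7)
The plan is to attack the lower bound via classical local asymptotic minimax theory and the upper bound via the Polyak--Juditsky central limit theorem for stochastic approximation, and then to verify that the two limiting covariances coincide and equal the stated expression.

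For the lower bound, I would first observe that the observation model~\eqref{eq:obs} is a regular parametric family: fixing state $j$, the row $\Zsamp_{\cdot,j}$ is multinomial with parameter $p_j$, independently across $j$ and $k$, and $R_{k,j}$ is Gaussian with mean $r_j$ and fixed variance $\sigma_r^2$. Differentiability in quadratic mean and non-singular Fisher information hold throughout the relative interior of the parameter space. The functional of interest $\theta(\problem) = (\IdMat-\discount\Pmat)^{-1}r$ is smooth in $(\Pmat,r)$: a quick computation with the identity $\mathrm{d}\Pmat^{-1} = -\Pmat^{-1}(\mathrm{d}\Pmat)\Pmat^{-1}$ shows that a perturbation $(\delta\Pmat,\delta r)$ produces
\begin{align*}
\mathrm{d}\theta = (\IdMat-\discount\Pmat)^{-1}\bigl(\discount\,(\delta\Pmat)\theta + \delta r\bigr).
\end{align*}
Combining this Jacobian with the Fisher information of the reward channel (which contributes $\sigma_r^2 \IdMat$) and the multinomial channel (whose variance term, after passing through the Jacobian, reduces to $\discount^2 \Sigma_{\Pmat}(\theta(\problem))$ by the definition~\eqref{eqn:def-Sigma}), the H\'ajek--Le Cam local asymptotic minimax theorem applied with the continuous loss $\ell_\infty$ yields the asymptotic minimax lower bound $\E[\|Z\|_\infty]$ with $Z$ having the stated covariance. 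I would write this step in the ``hardest parametric submodel'' form: for any direction in parameter space, the one-dimensional subproblem has known Gaussian limit, take the supremum of all directions mapping linearly into $\theta$ via the Jacobian above, then invoke the version of the H\'ajek--Le Cam bound for vector-valued parameters with bowl-shaped loss.

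For the matching upper bound, rewrite the TD recursion~\eqref{eq:TD} in root-finding form as $\theta_{k+1} = \theta_k - \tdstep_k H_k(\theta_k)$ with
\begin{align*}
H_k(\theta) = (\IdMat-\discount \Zsamp_k)\theta - R_k,
\end{align*}
so that $\E[H_k(\theta)] = (\IdMat-\discount\Pmat)\theta - r$ and $H_k$ is linear in $\theta$ with constant Jacobian $A = \IdMat-\discount\Pmat$, whose eigenvalues all have positive real part because $\discount\in(0,1)$ and $\Pmat$ is a stochastic matrix. The noise at $\thetastar$ is $\xi_k = H_k(\thetastar) - \E H_k(\thetastar) = -\discount (\Zsamp_k-\Pmat)\thetastar - (R_k - r)$, whose covariance is exactly $\discount^2 \Sigma_{\Pmat}(\thetastar) + \sigma_r^2 \IdMat$. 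With polynomial stepsizes $\tdstep_k = k^{-\omega}$ for $\omega\in(1/2,1)$, the iterates $\theta_k$ converge almost surely to $\thetastar$, and the Polyak--Juditsky theorem applies to give
\begin{align*}
\sqrt{\nobs}\bigl(\thetatil^{\,\omega}_{\nobs} - \thetastar\bigr) \xrightarrow{d} \mathcal{N}\bigl(0,\, A^{-1}(\discount^2 \Sigma_{\Pmat}(\thetastar)+\sigma_r^2 \IdMat) A^{-\top}\bigr),
\end{align*}
which is precisely the law of $Z$. Combined with the continuous mapping theorem, $\sqrt{\nobs}\|\thetatil^{\,\omega}_{\nobs}-\thetastar\|_\infty \to \|Z\|_\infty$ in distribution.

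To upgrade this distributional convergence to the displayed convergence of expectations~\eqref{eq:SA-asymp-opt}, I would establish uniform integrability of $\sqrt{\nobs}\|\thetatil^{\,\omega}_{\nobs}-\thetastar\|_\infty$. This is the step I expect to be the main technical obstacle: one must control higher moments (e.g., a uniform $L^2$ bound), and the classical Polyak--Juditsky statement is in distribution only. I would handle this via a direct moment analysis of the linear SA recursion, using that $\thetatil^{\,\omega}_{\nobs} - \thetastar$ can be decomposed into a leading martingale term plus a lower-order remainder, bounding the martingale term's second moment by an explicit $O(1/\nobs)$ quantity using the linearity of $H_k$ and the sub-Gaussianity of the noise, and showing the remainder is $o_{L^2}(1/\sqrt{\nobs})$ under the stepsize condition $\omega\in(1/2,1)$. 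A Vitali-type argument then delivers convergence of first moments and hence~\eqref{eq:SA-asymp-opt}. The matching of the lower and upper constants is automatic because both derivations identify the same sandwich matrix $A^{-1}(\discount^2\Sigma_{\Pmat}(\thetastar)+\sigma_r^2\IdMat)A^{-\top}$, giving equality in~\eqref{EqnAsympLower}.
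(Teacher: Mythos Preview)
Your approach matches the paper's: Hájek--Le Cam for the minimax characterization, Polyak--Juditsky for the TD limit, and a direct calculation identifying the common covariance $(\IdMat-\discount\Pmat)^{-1}(\discount^2\Sigma_{\Pmat}(\thetastar)+\sigma_r^2\IdMat)(\IdMat-\discount\Pmat)^{-\top}$.

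One logical wrinkle is worth tightening. You use Hájek--Le Cam only as a \emph{lower} bound and plan to close the equality~\eqref{EqnAsympLower} via TD's performance. But showing that TD achieves $\E[\|Z\|_\infty]$ at the single instance $\problem$ does not upper-bound $\minimax_\infty(\problem)$, which takes a supremum over the shrinking neighborhood $\neighbor(\problem; c/\sqrt{\nobs})$; you would need the TD guarantee locally uniformly in the instance. The paper sidesteps this by invoking the local asymptotic minimax theorem as an \emph{equality} for $\minimax_\infty(\problem)$, and then proves~\eqref{eq:SA-asymp-opt} as a separate claim about one specific procedure.

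On the moment convergence in~\eqref{eq:SA-asymp-opt}, you are actually more careful than the paper. You correctly flag that Polyak--Juditsky delivers a CLT in distribution and that uniform integrability is needed to pass to convergence of expectations; your plan (direct $L^2$ control of the martingale part plus lower-order remainder) is the natural route. The paper is terser: it simply verifies Polyak--Juditsky's Assumptions 2.1--2.5 and cites their Theorem~1. In doing so it writes the noise as $\xi_k = (R_k-r) + (\Zsamp_k-\Pmat)\thetastar + (\Zsamp_k-\Pmat)(\theta_k-\thetastar)$ and handles the state-dependent piece by establishing $\E\|\theta_k-\thetastar\|_2^2 \to 0$ via the classical Robbins--Monro conditions $\sum\alpha_k=\infty$, $\sum\alpha_k^2<\infty$ for $\omega\in(1/2,1]$.
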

With the convention that $\thetastar \equiv \theta(\problem)$, a short
calculation bounding the maximum absolute value of sub-Gaussian random
variables (see, e.g., Ex. 2.11 in Wainwright~\cite{wainwright2019high})
yields the sandwich relation
\begin{align*}
\discount \nu(\Pmat,\thetastar) + \rho(\Pmat, r) \leq \E [ \| Z
  \|_\infty ] \leq \sqrt{2 \log \Dim } \cdot \left( \discount
\nu(\Pmat,\thetastar) + \rho(\Pmat, r) \right),
\end{align*}
so that Proposition~\ref{prop:lower-bound-local-asymp-minimax-risk}
shows that, up to a logarithmic factor in dimension $\Dim$, the local
asymptotic minimax risk is entirely characterized by the functional
$\discount \nu(\Pmat,\thetastar) + \rho(\Pmat, r)$.

It should be noted that lower bounds similar to
Eq.~\eqref{EqnAsympLower} have been shown for specific classes of
stochastic approximation
algorithms~\cite{tsypkin1974attainable}. However, to the best of our
knowledge, a local minimax lower bound---one applying to any procedure
that is a measurable function of the observations---is not available in the existing literature. 

Furthermore, Eq.~\eqref{eq:SA-asymp-opt} shows that stochastic
approximation with polynomial stepsizes and averaging attains the
exact local asymptotic risk. Our proof of this result essentially
mirrors that of Polyak and Juditsky~\cite{polyak1992acceleration}, and
amounts to verifying their assumptions under the policy evaluation
setting.  Given this result, it is natural to ask if averaging is
optimal also in the non-asymptotic setting; answering this question is the focus of the next two sections of the paper.

\subsubsection{Non-asymptotic local minimax lower bound}
\label{sec:non-asymptotic-lower-bound}

Proposition~\ref{prop:lower-bound-local-asymp-minimax-risk} provides
an instance-specific lower bound on $\theta(\problem)$ that holds
asymptotically.  In order to obtain a non-asymptotic guarantee, we
borrow ideas from the non-asymptotic framework introduced by Cai and
Low~\cite{CaiLo15} for nonparametric shape-constrained inference.
Adapting their definition of local minimax risk to our problem
setting, given the loss function $L(\theta - \thetastar) = \|\theta -
\thetastar\|_\infty$, the (normalized) \emph{local non-asymptotic
  minimax risk} for $\theta(\cdot)$ at instance $\problem = (\Pmat,
\reward)$ is given by
\begin{align}
  \label{eq:minimax-risk}
\minimax_{\nobs}( \problem) = \sup_{\problem^\prime}
\inf_{\hat{\theta}_\nobs} \max_{\qproblem \in \{\problem,
  \problem^\prime\}} \sqrt{\nobs} \cdot \E_{\qproblem} \left[\|\thetahat_{\nobs}-
  \theta(\qproblem)\|_\infty \right].
\end{align}
Here the infimum is taken over all estimators $\thetahat_{\nobs}$ that
are measurable functions of $\nobs$ i.i.d. observations drawn
according to the observation model~\eqref{eq:obs}, and the
normalization by $\sqrt{\nobs}$ is for convenience.  The
definition~\eqref{eq:minimax-risk} is motivated by the notion of the
hardest one-dimensional alternative~\cite[Ch. 25]{VanDerVaart98}. Indeed, given an instance
$\problem$, the local non-asymptotic risk $\minimax_{\nobs}(\problem)$
first looks for the hardest alternative $\problem^\prime$ against
$\problem$ (which should be local around $\problem$), then measures
the worst-case risk over $\problem$ and its (local) hardest
alternative $\problem^\prime$.

With this definition in hand, we lower bound the local
non-asymptotic minimax risk using the complexity measures
$\nu(\Pmat,\thetastar)$ and $\rho(\Pmat, r)$ defined in
Eq.~\eqref{eq:local-complexity}:
\begin{theorem}
\label{thm:lower-bound-local-minimax-risk-alt}
There exists a universal constant $c > 0$ such that for any instance
$\problem = (\Pmat, \reward)$, the local non-asymptotic minimax risk
is lower bounded as
\begin{align}
\label{eqn:main-lower-bound}
\minimax_\nobs(\problem) \ge c \Big(
\discount \nu(\Pmat, \thetastar) + \rho(\Pmat, r) \Big).
\end{align}
This bound is valid for all sample sizes $\nobs$ that satisfy
\begin{align}
\label{eqn:N-large-condition}
\nobs \geq \nobs_0 \mydefn \max \left
\{\frac{\discount^2}{(1-\discount)^2},
\frac{b^2(\thetastar)}{\nu^2(\Pmat, \thetastar)} \right\}.
\end{align}
\end{theorem}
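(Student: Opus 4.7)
The proof is a two-point application of Le Cam's method in the non-asymptotic style of Cai and Low: for any alternative instance $\problem'$ to $\problem = (\Pmat, r)$,
\begin{equation*}
\minimax_\nobs(\problem) \;\geq\; \frac{\sqrt\nobs}{2}\, \|\theta(\problem) - \theta(\problem')\|_\infty \cdot\bigl(1 - \tv(\Prob_\problem^{\otimes\nobs}, \Prob_{\problem'}^{\otimes\nobs})\bigr).
\end{equation*}
I exhibit two alternatives: one perturbing only the reward vector, producing a separation of order $\rho(\Pmat, r)/\sqrt\nobs$; and one perturbing only the transition matrix, producing a separation of order $\discount\nu(\Pmat, \thetastar)/\sqrt\nobs$. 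Since the $\sup_{\problem'}$ in the definition dominates each individually, the maximum of the two lower bounds controls $(\discount\nu + \rho)/2$, yielding \eqref{eqn:main-lower-bound}.

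\textbf{Reward alternative.} Choose a unit vector $u \in \real^{\Dim}$ achieving $\max_{\|u\|_2 = 1}\|(\IdMat - \discount\Pmat)^{-1}u\|_\infty = \rho(\Pmat, r)/\sigma_r$ and set $\problem'_1 = (\Pmat,\, r + (c\sigma_r/\sqrt\nobs)u)$. Because the reward observations are i.i.d.\ $\mathcal{N}(r_j, \sigma_r^2)$ across the $\nobs$ samples, the total KL equals $c^2/2$, so Pinsker's inequality bounds $\tv$ by a small universal constant. The target shift satisfies $\|\theta(\problem'_1) - \theta(\problem)\|_\infty = (c/\sqrt\nobs)\rho(\Pmat, r)$, delivering the $\rho$ contribution.

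\textbf{Transition alternative.} Let $\ell^*$ index the row attaining the maximum in the definition of $\nu$, and set $q \mydefn (\IdMat - \discount\Pmat)^{-\top} e_{\ell^*}$. Since the rows of $\Zsamp$ are independent, $\Sigma_{\Pmat}(\thetastar)$ is diagonal with entries $\sigma_i^2 = \var_{p_i}(\thetastar)$, and hence $\nu^2(\Pmat, \thetastar) = \sum_i q_i^2 \sigma_i^2$. Motivated by the Fisher-information-optimal tilt of a multinomial, define
\begin{equation*}
\Delta_{ij} \mydefn \kappa\, q_i\, p_{ij}\,(\thetastar_j - \bar\theta_i), \qquad \bar\theta_i \mydefn p_i^\top\thetastar, \qquad \kappa \mydefn \frac{c_0}{\nu(\Pmat, \thetastar)\sqrt\nobs},
\end{equation*}
and take $\problem'_2 = (\Pmat + \Delta, r)$. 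Each row of $\Delta$ sums to zero, and a chi-squared calculation gives total KL equal to $\tfrac{\nobs}{2}\sum_i \kappa^2 q_i^2 \sigma_i^2 = c_0^2/2$, so $\tv$ is controlled via Pinsker. A short computation yields $q^\top\Delta\thetastar = \kappa\sum_i q_i^2\sigma_i^2 = \kappa\nu^2$, so to first order $\theta(\problem'_2) - \theta(\problem) = \discount(\IdMat - \discount\Pmat)^{-1}\Delta\thetastar + \text{remainder}$, whose $\ell^*$-th coordinate equals $c_0\discount\nu(\Pmat,\thetastar)/\sqrt\nobs$.

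\textbf{Main obstacle.} The heart of the argument is certifying the transition alternative, and it is precisely there that the two branches of \eqref{eqn:N-large-condition} arise. For $\Pmat + \Delta$ to be a valid stochastic matrix we require $|\kappa q_i(\thetastar_j - \bar\theta_i)| \leq 1$; bounding $\|q\|_\infty \leq \|(\IdMat - \discount\Pmat)^{-1}\|_\infty \leq 1/(1-\discount)$ (row-stochasticity of $\Pmat$) together with $|\thetastar_j - \bar\theta_i| \leq \|\thetastar\|_\myspan$ reduces non-negativity to $\nobs \gtrsim \|\thetastar\|_\myspan^2/((1-\discount)^2 \nu^2) = b^2(\thetastar)/\nu^2(\Pmat,\thetastar)$. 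Separately, the Neumann expansion
\begin{equation*}
(\IdMat - \discount(\Pmat + \Delta))^{-1} - (\IdMat - \discount\Pmat)^{-1} = \discount(\IdMat-\discount\Pmat)^{-1}\Delta(\IdMat-\discount\Pmat)^{-1} + \mathcal{R},
\end{equation*}
has remainder $\mathcal{R}$ of operator-norm order $\bigl(\discount/(1-\discount)\bigr)^2 \|\Delta\|^2$, which is dominated by the $\discount\nu/\sqrt\nobs$ leading term precisely when $\nobs \gtrsim \discount^2/(1-\discount)^2$. Combining the two two-point bounds and absorbing constants gives \eqref{eqn:main-lower-bound}; the bookkeeping for these two conditions, plus the Fisher-optimality of the proposed tilt, is the main technical burden of the argument.
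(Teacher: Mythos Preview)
Your proposal is correct and mirrors the paper's argument: the paper splits into the same two subproblems (reward-only and transition-only alternatives), uses the identical multinomial tilt $\Delta_{ij} \propto p_{ij}(\thetastar_j - \bar\theta_i)\cdot[(\IdMat-\discount\Pmat)^{-\top}e_{\ell^*}]_i$ for the hard transition alternative, and derives the two branches of~\eqref{eqn:N-large-condition} from exactly the two obstacles you name (stochastic-matrix validity and control of the Neumann remainder). The only cosmetic difference is that the paper works throughout with Hellinger distance (upper-bounded by $\chi^2$) rather than KL/Pinsker.
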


A few comments are in order.  First, it is natural to wonder about the
necessity of condition~\eqref{eqn:N-large-condition} on the sample
size $\Nsamp$ in our lower bound.  Our past work provides upper bounds
on the $\ell_\infty$-error of the plugin
estimator~\cite{pananjady2019value}, and these results also require a
bound of this type.  In fact, when the rewards are observed with noise
(i.e., for any $\sigma_r > 0$), the condition $\nobs \gtrsim
\frac{\discount^2}{(1 - \discount)^2}$ is natural, since it is
necessary in order to obtain an estimate of the value function with
$\order(1)$ error.  On the other hand, in the special case of
deterministic rewards ($\sigma_r = 0$), it is interesting to ask how
the fundamental limits of the problem behave in the absence of this
condition.

Second, note that the non-asymptotic lower
bound~\eqref{eqn:main-lower-bound} is closely connected to the
asymptotic local minimax bound from
Proposition~\ref{prop:lower-bound-local-asymp-minimax-risk}.  In
particular, for any sample size $\Nsamp$ satisfying the lower
bound~\eqref{eqn:N-large-condition}, our non-asymptotic lower
bound~\eqref{eqn:main-lower-bound} coincides with the asymptotic lower
bound~\eqref{EqnAsympLower} up to a constant factor.  Thus, it cannot
be substantially sharpened.  The finite-sample nature of the lower
bound~\eqref{eqn:main-lower-bound} is a powerful tool for assessing
optimality of procedures: it provides a performance benchmark that
holds over a large range of finite sample sizes $\Nsamp$.  Indeed, in
the next section, we study the performance of the TD
learning algorithm with Polyak-Ruppert averaging.  While this
procedure achieves the local minimax lower bound asymptotically, as
guaranteed by Eq.~\eqref{eq:SA-asymp-opt} in
Proposition~\ref{prop:lower-bound-local-asymp-minimax-risk}, it falls
short of doing so in natural \emph{finite-sample} scenarios.


\subsection{Suboptimality of averaging}
\label{sec:suboptimality_of_averaging}

Polyak and Juditsky~\cite{polyak1992acceleration} provide a general
set of conditions under which a given stochastic-approximation (SA)
algorithm, when combined with Polyak-Ruppert averaging, is guaranteed
to have asymptotically optimal behavior.  For the current problem, the
bound~\eqref{eq:SA-asymp-opt} in
Proposition~\ref{prop:lower-bound-local-asymp-minimax-risk}, which is
proved using the Polyak-Juditsky framework, shows that SA with
polynomial stepsizes and averaging have this favorable asymptotic
property.

However, asymptotic theory of this type gives no guarantees in the
finite-sample setting. In particular, suppose that we are given a
sample size $\Nsamp$ that scales as $(1-\discount)^{-2}$, as specified
in our lower bounds.  Does the averaged TD$(0)$ algorithm exhibit
optimal behavior in this non-asymptotic setting?  In this section, we
answer this question in the negative.  More precisely, we describe a
parameterized family of Markov reward processes, and provide careful
simulations that reveal the suboptimality of TD without
averaging.

\begin{figure}[ht]
  \begin{center}
    \begin{tabular}{ccc}
          \raisebox{0.2in}{\widgraph{0.35\textwidth}{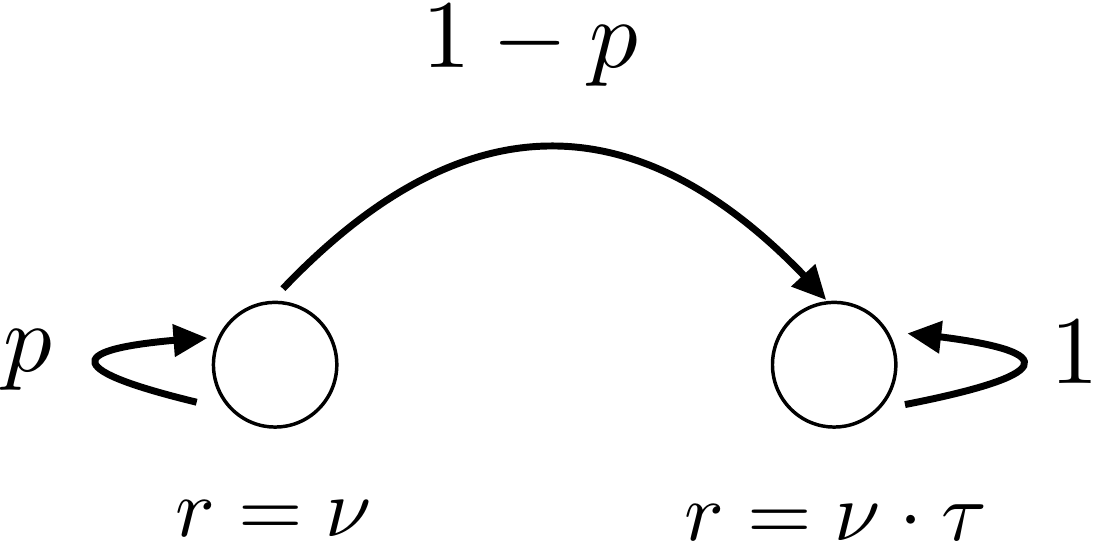}}
    \end{tabular}
    \caption{Illustration of the 2-state MRP used in the simulation.
      The triple of scalars $(p, \nu, \tau)$, along with the discount
      factor $\discount$, are parameters of the construction.  The chain
      remains in state $1$ with with probability $p$ and transitions to
      state $2$ with probability $1-p$; on the other hand, state $2$
      is absorbing.  The rewards in states $1$ and $2$ are
      deterministic, specified by $\nu$ and $\nu \tau$, respectively.}
    \label{fig:mrp}
  \end{center}
\end{figure}

\subsubsection{A simple construction}

The lower bound in
Theorem~\ref{thm:lower-bound-local-minimax-risk-alt} predicts a range
of behaviors depending on the pair $\nu(\Pmat, \thetastar)$ and
$\rho(\Pmat, r)$.  In order to observe a large subset of these behaviors, it suffices to consider a very simple
MRP, $\problem = (\Pmat, r)$ with $\Dim = 2$ states, as illustrated in
Figure~\ref{fig:mrp}. In this MRP, the transition matrix $\Pmat \in
\real^{2 \times 2}$ and reward vector $r \in \real^2$ take the form
\begin{align*}
\Pmat & = \begin{bmatrix} p & 1 - p \\ 0 & 1 
\end{bmatrix}, \quad \mbox{and} \quad \reward = \begin{bmatrix} \nu \\ \nu \tau 
\end{bmatrix}.
\end{align*}
Here the triple $(p, \nu, \tau)$, along with the discount factor
$\discount \in [0, 1)$, are parameters of the construction.

In order to parameterize this MRP in a scalarized manner, we vary the
triple $(p, \nu, \tau)$ in the following way.  First, we fix a scalar
$\parMRP \geq 0$, and then we set
 \begin{align*}
 p = \tfrac{4 \discount - 1}{3 \discount}, \qquad \lbmaxrew = 1 \quad
 \text{ and } \quad \; \tau = 1 - (1 - \discount)^{\parMRP}.
 \end{align*}
Note that this sub-family of MRPs is fully parametrized by the pair
$(\discount, \parMRP)$.  Let us clarify why this particular
scalarization is interesting. It can be shown via simple calculations
that the underlying MRP satisfies
\begin{align*}
 \nu(\Pmat, \thetastar) \sim \left( \frac{1}{1 - \discount}
 \right)^{1.5 - \parMRP}, \quad \rho(\Pmat, r) = 0 \quad \text{and}
 \quad \;\; b(\thetastar) \sim \left( \frac{1}{1 - \discount}
 \right)^{2 - \parMRP},
\end{align*}
where $\sim$ denotes equality that holds up to a constant pre-factor.
Consequently, by Theorem~\ref{thm:lower-bound-local-minimax-risk-alt}
the minimax risk, measured in terms of the $\ell_{\infty}$-norm,
satisfies
\begin{align}
\label{eqn:optimal_rate_in_alpha}
\minimax_\nobs(\problem) \geq c \cdot \left(
\frac{1}{1 - \discount} \right)^{1.5 - \parMRP}.
\end{align}
Thus, it is natural to study whether the TD$(0)$ algorithm with PR
averaging achieves this error.


\subsubsection{A simulation study}

In order to compare the behavior of averaged TD with the lower
bound~\eqref{eqn:optimal_rate_in_alpha}, we performed a series of
experiments of the following type.  For a fixed parameter $\parMRP$ in
the range $[0, 1.5]$, we generated a range of MRPs with different
values of the discount factor $\discount$. For each value of the
discount parameter $\discount$, we consider the problem of estimating
$\thetastar$ using a sample size $\Nsamp$ set to be one of two
possible values: namely, $\nobs \in \left\{ \lceil \frac{8}{(1 -
      \discount)^2} \rceil, \lceil \frac{8}{(1 -
      \discount)^3} \rceil \right\}$.

\begin{figure}[ht!]
  \begin{center}
    \begin{tabular}{ccc}
      \widgraph{0.4\textwidth}{Plots/alpha_point_5_poly_step_small_sample}
      &\hspace{.1in} &
      \widgraph{0.4\textwidth}{Plots/alpha_1_point_5_constant_step_large_sample}
      \\
    (a) $\parMRP = 0.5, \;\; \Nsamp = \lceil \frac{8}{(1 -
      \discount)^2} \rceil$. && (b)
      $\parMRP = 1.5, \;\; \Nsamp = \lceil \frac{8}{(1 -
      \discount)^3} \rceil$.
    \end{tabular}
  \caption{Log-log plots of the $\ell_\infty$-error versus the
    discount complexity parameter $1 / (1 - \discount)$ for various
    algorithms.  Each point represents an average over $1000$ trials,
    with each trial simulations are for the 2-state MRP depicted in
    Figure~\ref{fig:mrp} with the parameter choices $p =
    \tfrac{4\discount - 1}{3\discount}$, $\lbmaxrew = 1$ and $\tau = 1
    - (1 - \discount)^{\parMRP}$.  We have also plotted the
    least-squares fits through these points, and the slopes of these
    lines are provided in the legend. In particular, the legend
    contains the stepsize choice for averaged SA (denoted as
    $\tdstep_k$), the slope $\hat{\beta}$ of the least-squares line,
    and the ideal value $\betastar$ of the slope computed in
    equation~\ref{eqn:ideal-slope}. We also include the lower bound
    predicted by Theorem~\ref{thm:lower-bound-local-minimax-risk-alt}
    for these examples as a dotted line for comparison purposes.
    Logarithms are to the natural base.}
  \label{fig:avg_simulation}
  \end{center}
\end{figure}

In Figure~\ref{fig:avg_simulation}, we plot the $\ell_\infty$-error of
the averaged SA, for constant stepsize~\eqref{eqn:constant_step},
polynomial-decay stepsize~\eqref{eqn:poly_step} and recentered linear
stepsize~\eqref{eqn:recentered-linear}, as a function of
$\discount$. The plots show the behavior for \mbox{$\parMRP \in \{0.5,
  1.5\}$.}  Each point on each curve is obtained by averaging $1000$
Monte Carlo trials of the experiment.  Note that from our lower bound
calculations above~\eqref{eqn:optimal_rate_in_alpha}, the log
$\ell_\infty$-error is related to the complexity $\log \big(
\tfrac{1}{1 - \discount} \big)$ in a linear fashion; we use $\beta^*$
to denote the slope of this idealized line.  Simple algebra yields
\begin{align}
  \betastar = \frac{1}{2} - \parMRP \quad \text{for} \quad \nobs =
  \frac{1}{(1 - \discount)^2}, \quad \text{and} \quad \betastar =
  -\parMRP \qquad \text{for} \quad \nobs = \frac{1}{(1 - \discount)^3}.
  \label{eqn:ideal-slope}
\end{align}
In other words, for an algorithm which achieves the lower bound
predicted by our theory, we expect a linear relationship between the
log $\ell_\infty$-error and log discount complexity $\log \big(
\tfrac{1}{1 - \discount} \big)$, with the slope~$\betastar$.

Accordingly, for the averaged SA estimators with the stepsize
choices in~\eqref{eqn:constant_step}-\eqref{eqn:recentered-linear}, we
performed a linear regression to estimate the slopes between the log
$\ell_\infty$-error and the log discount-complexity $\log \big(
\tfrac{1}{1 - \discount} \big)$. The plot legend reports the stepsize
choices $\tdstep_{k}$ and the slope $\betahat$ of the fitted
regression line.  We also include the lower bound in the plots, as a
dotted line along with its slope, for a visual comparison. We see that
the slopes corresponding to the averaged SA algorithm are higher
compared to the ideal slopes of the dotted lines. Stated differently,
this means that the averaged SA algorithm does not achieve the lower
bound with either the constant step or the polynomial-decay step.
Overall, the simulations provided in this section demonstrate that 
the averaged SA algorithm, although guaranteed to be
asymptotically optimal by Eq.~\eqref{eq:SA-asymp-opt} in
Proposition~\ref{prop:lower-bound-local-asymp-minimax-risk},
does not yield the ideal non-asymptotic behavior.


\subsection{Variance-reduced policy evaluation}
\label{sec:VRPE}

In this section, we propose and analyze a variance-reduced version of
the TD learning algorithm. As in standard
variance-reduction schemes, such as
SVRG~\cite{johnson2013accelerating}, our algorithm proceeds in
epochs. In each epoch, we run a standard stochastic approximation
scheme, but we recenter our updates in order to reduce their
variance. The recentering uses an empirical approximation to the
population Bellman operator $\Tstar$.

We describe the behavior of the algorithm over epochs by a
sequence of operators, $\left \{ \V_{\epiter} \right \}_{\epiter \geq
1}$, which we define as follows. At epoch $\epiter$, the method uses a vector $\thetabar_\epiter$ in order to recenter the update, where
the vector $\thetabar_\epiter$ should be understood as the best current
approximation to the unknown vector $\thetastar$.
In the ideal scenario, such a recentering would involve the quantity
$\Tstar(\thetabar_\epiter)$, where $\Tstar$ denotes the population
operator previously defined in Eq.~\eqref{eqn:Tstar-defn}.  Since
we lack direct access to the population operator $\Tstar$, however, we use the Monte Carlo approximation
\begin{align}
\label{eqn:MC_avg_operator}
\Ttil_{\Nm}(\thetabar_\epiter) & \mydefn \frac{1}{\Nm} \sum_{i \in \Dm
} \That_{i}(\thetabar_\epiter),
\end{align}
where the empirical operator $\That_i$ is defined in
Eq.~\eqref{eqn:That-defn}.  Here the set $\Dm$ is a collection of
$\Nm$ i.i.d.\ samples, independent of all other randomness.

Given the pair $(\thetabar_\epiter, \Ttil_{\Nm}(\thetabar_\epiter))$
and a stepsize $\tdstep \in (0,1)$, we define the operator
$\V_{\iter}$ on $\real^{\usedim}$ as follows:
\begin{align}
\label{eqn:VR-SA-update}
\theta \mapsto \V_{k}\left(\theta ; \tdstep, \thetabar_\epiter, \Ttil_{\Nm} \right):=(1-\tdstep) \theta +
\tdstep\left\{\That_{k}(\theta) - \That_{k}(\thetabar_\epiter) + \Ttil_{\Nm} (\thetabar_\epiter)\right\}.
\end{align}
As defined in Eq.~\eqref{eqn:That-defn}, the quantity
$\That_\iter$ is a stochastic operator, where the randomness is independent of the set
of samples $\Dm$ used to define $\Ttil_{\Nm}$. Consequently, the
stochastic operator $\That_\iter$ is independent of the recentering
vector $\Ttil_{\Nm}(\thetabar_\epiter)$. Moreover, by construction,
for each $\theta \in \real^{\Dim}$, we have
\begin{align*}
\Exs \left[ \That_k (\theta) - \That_k (\thetabar_\epiter) + \Ttil_{\Nm} (\thetabar_\epiter)  \right]
= \Tstar(\theta).
\end{align*}
Thus, we see that $\V_k$ can be seen as an unbiased stochastic
approximation of the population-level Bellman operator.  As will be
clarified in the analysis, the key effect of the recentering steps is
to reduce its associated variance.


\subsubsection{A single epoch}
Based on the variance-reduced policy evaluation update defined in
Eq.~\eqref{eqn:VR-SA-update}, we are now ready to define a single
epoch of the overall algorithm.  We index epochs using the integers
\mbox{$\epiter = 1, 2, \ldots, \NumEpoch$,} where $\NumEpoch$
corresponds to the total number of epochs to be run. Epoch $m$
requires as inputs the following quantities:
\begin{itemize}
\item a vector $\thetabar$, which is chosen to be the output
  of the previous epoch,
\item a positive integer $\epochLen$ denoting the number of steps
  within the given epoch,
\item a positive integer $\Nm$ denoting the number of samples used to
  calculate the Monte Carlo update~\eqref{eqn:MC_avg_operator},
\item a sequence of stepsizes $\left\{ \tdstep_k \right\}_{k \geq 1}^{\Klambda}$
  with $\tdstep_k \in (0, 1)$, and
\item a set of fresh samples $\EpochSamples$, with $|\Em| = \Nm +
  \Klambda$. The first $\Nm$ samples are used to define the dataset
  $\Dm$ that underlies the Monte Carlo
  update~\eqref{eqn:MC_avg_operator}, whereas the remaining $\Klambda$
  samples are used in the $\Klambda$ steps within each epoch.
\end{itemize}
We summarize the operations within a single epoch in
{Algorithm~\ref{Algo:one_epoch}}.

\begin{algorithm}[ht!]
\caption{ $\;\;\;\;\;$ RunEpoch $(\thetabar ; \epochLen, \Nm, 
\left\{ \tdstep_{k} \right\}_{k = 1}^{\Klambda}, \EpochSamples$)}\label{Algo:one_epoch}
\begin{algorithmic}[1]
\STATE{Given (a) Epoch length $\epochLen \quad$, (b) Recentering vector $\thetabar \quad$, (c) Recentering sample size $\Nm$, \\ 
(d) Stepsize sequence $\left\{ \tdstep_{k} \right\}_{k \geq 1}^{\Klambda}$,
(e) Samples \EpochSamples}
\vspace{10pt}
\STATE{Compute the recentering quantity
  $\Ttil_{\Nm}(\thetabar)  \mydefn \frac{1}{\Nm} \sum \limits_{i \in \Dm
} \That_{i}(\thetabar)$} 
\STATE{ Initialize $\theta_{1}=\thetabar$ }
\FOR{ $k = 1, 2, \ldots, \epochLen$ } \STATE{Compute the
  variance-reduced update:
\begin{align*}
\theta_{k+1} = \V_{k}\left(\theta_{k} ; \tdstep_{k}, \thetabar,
\Ttil_{\Nm}\right)
\end{align*}
}
\ENDFOR
\end{algorithmic}
\end{algorithm}

The choice of the stepsize sequence $\{\tdstep_k\}_{k \geq 1}$ is
crucial, and it also determines the epoch length $\epochLen$. Roughly
speaking, it is sufficient to choose a large enough epoch length to
ensure that the error is reduced by a constant factor in each
epoch. In Section~\ref{sec:VRPE-thm-section} to follow, we study three
popular stepsize choices---the constant
stepsize~\eqref{eqn:constant_step}, the polynomial
stepsize~\eqref{eqn:poly_step} and the recentered linear
stepsize~\eqref{eqn:recentered-linear}---and provide lower bounds on
the requisite epoch length in each case.


\subsubsection{Overall algorithm}

We are now ready to specify our variance-reduced policy-evaluation (VRPE) algorithm.  The overall
algorithm has five inputs: (a) an integer $\TotalEpochs$, denoting the
number of epochs to be run, (b) an integer $\Klambda$, denoting the
length of each epoch, (c) a sequence of sample sizes $\{ \Nm \}_{m =
  1}^{\TotalEpochs}$ denoting the number of samples used for
recentering, (d) Sample batches $\{ \EpochSamples \} _{m =
  1}^{\TotalEpochs}$ to be used in $\epiter$ epochs, and (e) a
sequence of stepsize $\{ \tdstep_k \}_{k \geq 1}$ to be used in each
epoch.  Given these five inputs, we summarize the overall procedure in
Algorithm~\ref{Algo:variance_reduced_SA}:

\begin{algorithm}[ht]
\caption{ $\;\;\;\;\;$ Variance-reduced policy evaluation (VRPE)
}\label{Algo:variance_reduced_SA}
\begin{algorithmic}[1]
\STATE{Given (a) Number of epochs $\NumEpoch$, (b) Epoch length
  $\epochLen$ $\quad$, (c) Recentering sample sizes
  $\left\{\Nm\right\}_{\epiter=1}^\NumEpoch$, (d) Sample batches
  \EpochSamples, for $m = 1, \ldots, \TotalEpochs$,  (e) Stepsize
  $\{\tdstep_{k}\}_{k = 1}^{\Klambda}$}
  \vspace{10pt}
  \STATE{Initialize at $\thetabar_1$}
\FOR{ $\epiter = 1, 2, \ldots, \NumEpoch$ }
\STATE{
  $\thetabar_{\epiter + 1} =
  \operatorname{RunEpoch}\left(\thetabar_{\epiter} ; \epochLen,
  \Nm, \{ \tdstep \}_{k = 1}^{\Klambda}, \EpochSamples \right)$ 
  } 
  \ENDFOR \STATE{ Return $\thetabar_{\NumEpoch + 1}$ as
  the final estimate}
\end{algorithmic}
\end{algorithm}
In the next section, we provide a detailed description on how to
choose these input parameters for three popular stepsize
choices~\eqref{eqn:constant_step}--\eqref{eqn:recentered-linear}.
Finally, we reiterate that at epoch $\epiter$, the algorithm uses $\Nm
+ \Klambda$ new samples, and the samples used in the epochs are
independent of each other. Accordingly, the total number of samples
used in $\TotalEpochs$ epochs is given by $\Klambda \TotalEpochs +
\sum_{\epiter = 1}^{\TotalEpochs} \Nm$.

 
\subsubsection{Instance-dependent guarantees}
\label{sec:VRPE-thm-section}

Given a desired failure probability, $\delta \in (0,1)$, and a total
sample size $\nobs$, we specify the following choices of parameters in
Algorithm~\ref{Algo:variance_reduced_SA}:
\begin{subequations}
\label{eqn:lin-param}
\begin{align}
  \label{eqn:num_epochs}
  \texttt{Number of epochs :} \quad \TotalEpochs \mydefn \log_{2}
  \left(\frac{\nobs (1 - \discount)^2}{8 \log((8 D/\delta) \cdot
    \log\nobs) }\right)
  \end{align}
  \begin{align}
  \label{eqn:Nm}
  \texttt{Recentering sample sizes :} \quad \Avglength_{\epochNum} 
  \mydefn  \base^{\epochNum} \frac{4^2 \cdot \TwiceBaseSqPlusOne^2 \cdot  \log (8 M D/ \delta)}{(1-\discount)^{2}}
  \qquad \text{for } \epochNum = 1, \ldots, \TotalEpochs
  \end{align}
\begin{align}
  \label{eqn:Epoch-samples}
  \texttt{Sample batches:} \quad \text{Partition the } \nobs \text{ samples to obtain }
  \EpochSamples \text{ for }\epiter = 1, \ldots \TotalEpochs
\end{align}
\begin{align}
  \label{eqn:Epoch-length}
  \texttt{Epoch length:} \quad \Klambda = \frac{\nobs}{2 \TotalEpochs}
\end{align}  
\end{subequations}
In the following theorem statement, we use $(c_1, c_2, c_3, c_4)$ to
denote universal constants.
\begin{theorem}
\label{thm:value_function_estimation}
(a) Suppose that the input parameters of
Algorithm~\ref{Algo:variance_reduced_SA} are chosen according to
Eq.~\eqref{eqn:lin-param}.  Furthermore, suppose that the sample
size $\nobs$ satisfies one of the following three stepsize-dependent
lower bounds:
\begin{itemize}
	\item[(a)] $\frac{\nobs}{\TotalEpochs} \geq \unicon_1
          \frac{\log(8 \nobs \Dim/\delta)}{(1 - \discount)^3}$ \text{ for
            recentered linear stepsize } $\tdstep_k = \frac{1}{1 +
            (1 - \discount)k}$,
	\item[(b)] $\frac{\nobs}{\TotalEpochs} \geq \unicon_2 \log (8
          \nobs \Dim / \delta) \cdot \left( \frac{1}{1 - \discount} \right)^{
            \left(\frac{1}{1 - \omega} \vee \frac{2}{\omega} \right)}$
          \text{ for polynomial stepsize} $\tdstep_k =
          \frac{1}{k^\omega}$ with $0 < \omega < 1$, 
  \item[(c)] $\frac{\nobs}{\TotalEpochs} \geq  
   \frac{\unicon_3}{\log \left( \frac{1}{1 - \tdstep(1 - \discount)} \right)}$
          \text{ for constant stepsize} 
          $\tdstep_k = \tdstep \leq \frac{1}{5^2 \cdot 32^2} \cdot \frac{(1 - \discount)^2}{ \log \left( 8 \nobs \Dim / \delta \right)}$.
\end{itemize}
Then for any initilization $\thetabar_1$, the output $\thetabar_{M+1}$ satisfies
\begin{align}
\label{eqn:main-upper-bound}
\|\thetabar_{M+1} - \thetastar \|_\infty & \leq \unicon_4 \cdot
\left\| \widebar{\theta}_{1} - \thetastar\right\|_{\infty} \cdot
\frac{\log^2((8D/\delta) \cdot \log\nobs)}{\nobs^2 (1 - \discount)^4}
\nonumber \\ & \quad + \unicon_4 \cdot \left\{ \LOGDMNSQ \cdot \Big(
\discount \cdot \nu(\Pmat, \thetastar) + \rho(\Pmat, r) \Big) +
\LOGDMN \cdot b(\thetastar) \right\},
\end{align}
with probability exceeding $1 - \delta$.
\end{theorem}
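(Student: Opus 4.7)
The plan is to analyze the algorithm one epoch at a time, prove a per-epoch contractive inequality of the form $\|\thetabar_{\epiter+1} - \thetastar\|_\infty \leq \tfrac{1}{4}\|\thetabar_\epiter - \thetastar\|_\infty + s_\epiter$ on a high-probability event, and unroll it across all $\TotalEpochs$ epochs. Fix epoch $\epiter$, set $\bar\Delta_\epiter = \thetabar_\epiter - \thetastar$ and $\Delta_k = \theta_k - \thetastar$. Substituting the affine form $\That_k(\theta) = R_k + \discount\Zsamp_k\theta$ into the VR-SA update \eqref{eqn:VR-SA-update} and subtracting the fixed-point identity $\thetastar = (1-\tdstep)\thetastar + \tdstep\Tstar(\thetastar)$, the error recursion becomes
\[\Delta_{k+1} = A_{\tdstep_k}\Delta_k + \tdstep_k\,\xi_k + \tdstep_k\,E,\]
where $A_\tdstep = (1-\tdstep)\IdMat + \tdstep\discount\Pmat$ is entrywise nonnegative and $\ell_\infty$-contractive with factor $1-\tdstep(1-\discount)$, the increment $\xi_k = \discount(\Zsamp_k - \Pmat)(\theta_k - \thetabar_\epiter)$ is a zero-mean martingale difference whose conditional variance is governed by $\|\theta_k - \thetabar_\epiter\|_\infty$, and the recentering error $E = (\widehat{r} - r) + \discount(\widehat{\Pmat}_{\Nm} - \Pmat)\thetabar_\epiter$ is a single realization frozen at the start of the epoch (with $\widehat{r}, \widehat{\Pmat}_{\Nm}$ denoting empirical averages over $\Dm$). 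Using the identity $\tdstep(\IdMat - A_\tdstep)^{-1} = (\IdMat - \discount\Pmat)^{-1}$, unrolling the recursion (for constant $\tdstep$; the other schedules are treated analogously) yields
\[\Delta_{\Klambda+1} = A_\tdstep^{\Klambda}\bar\Delta_\epiter + \tdstep\sum_{k=1}^{\Klambda} A_\tdstep^{\Klambda-k}\xi_k + (\IdMat - \discount\Pmat)^{-1}(\IdMat - A_\tdstep^{\Klambda})E,\]
a clean split into an initial-error term, a martingale term, and a recentering term.

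I would then bound each piece in $\ell_\infty$-norm. The initial-error term is pointwise at most $(1-\tdstep(1-\discount))^{\Klambda}\|\bar\Delta_\epiter\|_\infty$, and the lower bounds (a)-(c) on $\Klambda = \nobs/(2\TotalEpochs)$ are calibrated precisely so that $\Klambda$ steps contract this by a sufficiently small constant. The martingale term is handled by a coordinatewise Freedman/Bernstein inequality; the key input is a uniform bound on $\max_k\|\theta_k - \thetabar_\epiter\|_\infty$ in terms of $\|\bar\Delta_\epiter\|_\infty$, producing a contribution of order $\sqrt{\log(\Dim\TotalEpochs/\delta)/\Klambda}\cdot\|\bar\Delta_\epiter\|_\infty/(1-\discount)$ that the stepsize bounds render negligible. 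For the recentering term, split $E = E_1 + E_2$ with $E_1 = (\widehat{r}-r) + \discount(\widehat{\Pmat}_{\Nm}-\Pmat)\thetastar$ involving only population quantities and $E_2 = \discount(\widehat{\Pmat}_{\Nm}-\Pmat)\bar\Delta_\epiter$ carrying an extra factor of $\bar\Delta_\epiter$. An entrywise Bernstein inequality applied to $(\IdMat - \discount\Pmat)^{-1}E_1$, using the fact that the coordinate variance of $(\IdMat - \discount\Pmat)^{-1}\discount(\Zsamp - \Pmat)\thetastar$ equals exactly $\discount^2\nu^2(\Pmat,\thetastar)$ and that the reward piece contributes $\rho^2(\Pmat,r)$, gives
\[\|(\IdMat - \discount\Pmat)^{-1}E_1\|_\infty \lesssim \sqrt{\tfrac{\log(\Dim\TotalEpochs/\delta)}{\Nm}}\bigl(\discount\nu(\Pmat,\thetastar) + \rho(\Pmat,r)\bigr) + \tfrac{\log(\Dim\TotalEpochs/\delta)}{\Nm}\,b(\thetastar),\]
with the $b(\thetastar)$ factor arising from the Bernstein boundedness term. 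The residual $(\IdMat - \discount\Pmat)^{-1}E_2$ is of order $\|\bar\Delta_\epiter\|_\infty\cdot\sqrt{\log(\Dim\TotalEpochs/\delta)/\Nm}\big/(1-\discount)$, which the choice $\Nm \gtrsim \log(\Dim\TotalEpochs/\delta)/(1-\discount)^2$ in \eqref{eqn:Nm} forces to be a small constant multiple of $\|\bar\Delta_\epiter\|_\infty$. Triangle-inequality gives the per-epoch contraction. Unrolling across $\TotalEpochs$ epochs and using the doubling schedule $\Nm = 2^{\epiter} N_1$ yields $\|\bar\Delta_{\TotalEpochs+1}\|_\infty \leq 4^{-\TotalEpochs}\|\bar\Delta_1\|_\infty + C\,s_{\TotalEpochs}$, the geometric sum $\sum_{m}4^{m-\TotalEpochs}s_m$ being dominated by its final term because $s_m \propto 1/\sqrt{\Nm}$ and $\Nm$ doubles each epoch. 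Substituting the choice \eqref{eqn:num_epochs} of $\TotalEpochs$ turns $4^{-\TotalEpochs}$ into the advertised $\log^2((8\Dim/\delta)\log\nobs)/(\nobs^2(1-\discount)^4)$ factor, and $s_{\TotalEpochs}$ with $\Nm \asymp \nobs/\TotalEpochs$ reproduces the second bracket of \eqref{eqn:main-upper-bound}.

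I expect the hardest step to be the self-consistent control of $\max_k\|\theta_k - \thetabar_\epiter\|_\infty$ that feeds the martingale bound, since $\xi_k$ has variance proportional to this very quantity while $\theta_k$ is itself built from past $\xi_j$'s, so a direct application of Freedman's inequality is circular. The standard workaround is a stopping-time / peeling device: analyze the iteration up to the first $k$ at which $\theta_k$ leaves a prescribed $\ell_\infty$-ball around $\thetabar_\epiter$, show this exit time exceeds $\Klambda$ with high probability on the good event, and upgrade to a deterministic bound. A secondary technicality is that each of the three stepsize schedules (a)-(c) requires its own calculation of the sum $\tdstep\sum_{k=1}^{\Klambda}A_{\tdstep_k}^{\Klambda-k}$ and of the product $\prod_k(1-\tdstep_k(1-\discount))$, since with the polynomial and recentered-linear choices the per-step matrices no longer factor as powers of a single one; the lower bounds on $\nobs/\TotalEpochs$ in each bullet are precisely what is needed to make all three schedules produce the same $1/4$-type contraction, and verifying this case by case is where most of the bookkeeping lives. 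A final union bound across the $\TotalEpochs$ epochs and the $\Dim$ coordinates converts the per-epoch high-probability event into the stated $1-\delta$ guarantee.
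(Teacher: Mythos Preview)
Your overall plan---per-epoch contraction plus unrolling across the doubling schedule $\Nm = 2^m N_1$---matches the paper, and your analysis of the recentering term $(\IdMat - \discount\Pmat)^{-1}E$ via the split $E = E_1 + E_2$ is exactly what the paper does (its Lemma~\ref{lem:epoch-deviation-lemma}). The route diverges, however, at precisely the step you flag as hardest. Instead of attacking $\Delta_k = \theta_k - \thetastar$ directly and confronting the circular dependence of the martingale increments $\xi_k = \discount(\Zsamp_k - \Pmat)(\theta_k - \thetabar_\epiter)$ on the iterate sequence, the paper introduces a \emph{surrogate fixed point}. Define the perturbed population operator $\Jstar^\epiter(\theta) = \Tstar(\theta) - \Tstar(\thetabar_\epiter) + \Ttil_{\Nm}(\thetabar_\epiter)$ and let $\thetahat_\epiter$ be its unique fixed point; then split
\[
\|\thetabar_{\epiter+1} - \thetastar\|_\infty \;\leq\; \|\theta_{\Klambda+1} - \thetahat_\epiter\|_\infty \;+\; \|\thetahat_\epiter - \thetastar\|_\infty.
\]
The second piece is exactly your $(\IdMat - \discount\Pmat)^{-1}E$ term. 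For the first piece, the paper observes that the stochastic operator $\Jstar_k^\epiter$ is $\discount$-contractive and \emph{monotone} with respect to the orthant ordering, so the $\ell_\infty$ stochastic-approximation bounds of Wainwright~\cite{wainwright2019stochastic} apply. These reduce everything to an auxiliary linear process $\Aux_{k+1} = (1-\tdstep_k)\Aux_k + \tdstep_k W_k$ in which the noise
\[
W_k \;=\; \Jstar_k^\epiter(\thetahat_\epiter) - \Jstar^\epiter(\thetahat_\epiter) \;=\; \discount(\Zsamp_k - \Pmat)(\thetahat_\epiter - \thetabar_\epiter)
\]
is evaluated at the \emph{fixed} point $\thetahat_\epiter$, not at the running iterate $\theta_k$. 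This yields the deterministic bound $\|W_k\|_\infty \leq 2\|\thetahat_\epiter - \thetabar_\epiter\|_\infty$ uniformly in $k$, and the circularity you identified simply disappears: a short inductive MGF computation (the paper's Lemma~\ref{lem:Pl-bound}) controls $\|\Aux_k\|_\infty$, and no stopping-time or peeling device is needed.

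Your direct route is workable, and the stopping-time argument you sketch is the standard fix. What the surrogate-fixed-point detour buys is a clean, circular-free reduction to existing $\ell_\infty$-SA machinery at the cost of one extra triangle inequality; what your approach buys is self-containment, at the cost of carrying the stopping-time bookkeeping through each of the three stepsize schedules separately.
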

\noindent See Section~\ref{proof:thm:value_function_estimation} for
the proof of this theorem.

\vspace{20pt}
\noindent A few comments on the upper bound provided in
Theorem~\ref{thm:value_function_estimation} are in order. In order to
facilitate a transparent discussion in this section, we use the notation 
$\gtrsim$ in order to denote a relation that holds up to logarithmic factors
in the tuple $\left( \nobs, \Dim, (1 - \discount)^{-1} \right) $.

\paragraph{Initialization dependence:}  The first term
on the right-hand side of the upper bound~\eqref{eqn:main-upper-bound}
depends on the initialization $\thetabar_1$. It should be noted
that when viewed as a function of the sample size $\nobs$, this
initialization-dependent term decays at a faster rate compared to the
other two terms.  This indicates that the performance of
Algorithm~\ref{Algo:variance_reduced_SA} does not depend on the
initialization $\thetabar_1$ in a significant way.  A careful look at
the proof (cf. Section~\ref{proof:thm:value_function_estimation})
reveals that the coefficient of $\| \widebar{\theta}_{1} - \thetastar
\|_{\infty}$ in the bound~\eqref{eqn:main-upper-bound} can be made
significantly smaller.  In particular, for any $\p \geq 1$ the first
term in the right-hand side of bound~\eqref{eqn:main-upper-bound} can
be replaced by
\begin{align*}
  \unicon_4 \cdot \frac{\| \widebar{\theta}_{1} - \thetastar
    \|_{\infty}}{\nobs^\p} \cdot \frac{\log^{\p}((8D/\delta) \cdot
    \log\nobs)}{(1 - \discount)^{2\p}},
\end{align*}
by increasing the recentering sample size~\eqref{eqn:Nm} by a
constant factor and changing the values of the absolute constants
$(\unicon_1, \unicon_2, \unicon_3, \unicon_4)$, with these values
depending only on the value of $\p$.
We have stated and proved a version for $\p =
2$.  Assuming the number of samples $\nobs$ satisfies $\nobs \geq (1 -
\discount)^{-(2 + \Delta)}$ for some $\Delta > 0$, the first term on
the right-hand side of bound~\eqref{eqn:main-upper-bound} can always
be made smaller than the other two terms. In the sequel we
show that each of the lower bound conditions (a)-(c) in the statement
of Theorem ~\ref{thm:value_function_estimation} requires a lower bound
condition $\nobs \gtrsim (1 - \gamma)^{-3}$.


\begin{figure}[t!]
  \begin{center}
    \begin{tabular}{cc}
      \widgraph{0.4\textwidth}{Plots/plot_VRPE_linear} &
      \widgraph{0.4\textwidth}{Plots/plot_VRPE_poly} \\ (a)
      Recentered-linear stepsize & (b) Polynomial stepsize
    \end{tabular}
  \caption{Log-log plots of the $\ell_\infty$-error versus the
    discount complexity parameter $1 / (1 - \discount)$ for the VRPE
    algorithm. Each point is computed from an average over $1000$
    trials.  Each trial entails drawing $\nobs = \lceil \frac{8}{(1 -
      \discount)^3} \rceil$ samples from the 2-state MRP in
    Figure~\ref{fig:mrp} with the parameter choices $p =
    \tfrac{4\discount - 1}{3\discount}$, $\lbmaxrew = 1$ and $\tau = 1
    - (1 - \discount)^{\parMRP}$.  Each line on each plot represents a
    different value of $\parMRP$, as labeled in the legend.  We have
    also plotted the least-squares fits through these points, and the
    slopes of these lines are also provided in the legend.  We also
    report the pair $(\betahat, \beta^*)$, where the coefficient
    $\betahat$ denotes the slope of the least-squares fit and
    $\beta^*$ denotes the slope predicted from the lower bound
    calculation~\eqref{eqn:ideal-slope}.  (a) Performance of VRPE for
    the recentered linear stepsize~\eqref{eqn:recentered-linear}. (b)
    Performance of VRPRE with polynomially decaying
    stepsizes~\eqref{eqn:poly_step} with $\omega = 2/3$.}
    \label{fig:VRPE_simulation}
  \end{center}
\end{figure}

\paragraph{Comparing the upper and lower bounds:}
The second and the third terms in~\eqref{eqn:main-upper-bound} show
the instance-dependent nature of the upper bound, and they are the
dominating terms.  Furthermore, assuming that the minimum sample size
requirements from
Theorems~\ref{thm:lower-bound-local-minimax-risk-alt}
and~\ref{thm:value_function_estimation} are met, we find that the
upper bound~\eqref{eqn:main-upper-bound} matches the lower
bound~\eqref{eqn:main-lower-bound} up to logarithmic terms.

It is worthwhile to explicitly compute the minimum sample size
requirements in Theorems~\ref{thm:lower-bound-local-minimax-risk-alt}
and~\ref{thm:value_function_estimation}.  Ignoring the logarithmic
terms and constant factors for the moment, unwrapping the lower bound
conditions (a)-(c) in Theorem~\ref{thm:value_function_estimation}, we
see that for both the constant stepsize and the recentered linear
stepsize the sample size needs to satisfy $\nobs \gtrsim (1 -
\gamma)^{-3}$.  For the polynomial stepsize $\stepsize_{k} =
\frac{1}{k^{\omega}}$, the sample size  has to be at least $(1
- \gamma)^{ -\left(\frac{1}{1 - \omega} \vee
  \frac{2}{\omega}\right)}$. Minimizing the last bound for different
values of $\omega \in (0, 1)$, we see that the minimum value is
attained at $\omega = 2/3$, and in that case the
bound~\eqref{eqn:main-upper-bound} is valid when $\nobs \gtrsim (1 -
\gamma)^{-3}$.  Overall, for all the three stepsize choices discussed
in Theorem~\ref{thm:value_function_estimation} we require $\nobs \gtrsim
(1 - \discount)^{-3}$ in order to certify the upper bound.  Returning
to Theorem~\ref{thm:lower-bound-local-minimax-risk-alt}, from
assumption~\eqref{eqn:N-large-condition} we see that in the best case
scenario, Theorem~\ref{thm:lower-bound-local-minimax-risk-alt} is
valid as soon as $\nobs \gtrsim (1 - \gamma)^{-2}$.  Putting together the
pieces we find that the sample size requirement for
Theorem~\ref{thm:value_function_estimation} is more stringent
than that of
Theorem~\ref{thm:lower-bound-local-minimax-risk-alt}.  Currently
we do not know whether the minimum sample size requirements in
Theorems~\ref{thm:lower-bound-local-minimax-risk-alt}
and~\ref{thm:value_function_estimation} are necessary; answering
this question is an interesting future research direction.

\paragraph{Simulation study:}  It is interesting to demonstrate the sharpness
of our bounds via a simulation study, using the same scheme as our
previous study of TD$(0)$ with averaging.  In
Figure~\ref{fig:VRPE_simulation}, we report the results of this study;
see the figure caption for further details.  At a high level, we see
that the VRPE algorithm, with either the recentered linear stepsize
(panel (a)) or the polynomial stepsize $t^{-2/3}$, produces errors
that decay with the exponents predicted by our instance-dependent
theory for $\parMRP \in \{0.5, 1.0, 2.0 \}$.  See the figure caption
for further details.
 

\section{Proofs}

We now turn to the proofs of our main results.

\subsection{Proof of Proposition~\ref{prop:lower-bound-local-asymp-minimax-risk}}
\label{sec:thm-lower-bound-local-asymp-minimax-risk}

Recall the definition of the matrix $\Sigma_{\Pmat}(\theta)$ from
Eq.~\eqref{eqn:def-Sigma}, and define the covariance matrix
\begin{align}
V_{\problem} = (\IdMat - \gamma \Pmat)^{-1} (\gamma^2
\Sigma_{\Pmat}(\theta) + \sigma_r^2 \IdMat) (\IdMat - \gamma \Pmat)^{-T}.
\end{align}
Recall that we use $Z$ to denote a multivariate
Gaussian random vector \mbox{$Z \sim \normal(0, V_{\problem})$},
and that the sequence $\{\thetatil^{\, \omega}_k\}_{k \geq 1}$
is generated by averaging the iterates of stochastic approximation
with polynomial stepsizes~\eqref{eqn:poly_step} with exponent $\omega$.  
With
this notation, the two claims of the theorem are:
%
\begin{subequations}
\begin{align}
\label{eqn:local-asymp-minimax-risk-result}
\minimax_{\infty}(\problem) & = \E [\norm{Z}_\infty], \quad \mbox{and}
\\
\label{eqn:TD-avg-optimal}
\lim_{\nobs \to \infty} \E \left[\sqrt{\nobs} \cdot
	\| \thetatil^{\, \omega}_{\nobs} - \thetastar \|_{\infty} \right] &= \E [\norm{Z}_\infty].
\end{align}
\end{subequations}

\noindent We now prove each of these claims separately.


\subsubsection{Proof of Eq.~\eqref{eqn:local-asymp-minimax-risk-result}}

For the reader's convenience, let us state a version of the
H\'{a}jek--Le Cam local asymptotic minimax theorem:
\begin{theorem}
\label{thm:Hajek-LeCam}
Let $\{P_{\varthetaprime}\}_{\varthetaprime \in \varTheta}$ be a
family of parametric models, quadratically mean differentiable with
Fisher information matrices $\Fisher_{\varthetaprime}$.  Fix some
parameter $\vartarget \in \varTheta$, and consider a function $\psi:
\varTheta \rightarrow \real^D$ that is differentiable at $\vartarget$.
Then for any quasi-convex loss $L: \R^D \to \R$, we have:
\begin{align}
\lim_{c\to \infty} \lim_{\nobs \to \infty}
\inf_{\hat{\vartheta}_{\nobs}} \sup_{ \substack{\vartheta'
    \\ \norm{\vartheta' - \vartarget}_2 \le c/\sqrt{\Nsamp}}}
\E_{\vartheta'} \Big[L \big(\sqrt{\nobs} \cdot (\hat{\vartheta}_\Nsamp -
  \vartheta') \big) \Big] = \E[L(Z)],
\end{align}
where the infimum is taken over all estimators
$\hat{\vartheta}_{\nobs}$ that are measurable functions of $\nobs$
i.i.d.\ data points drawn from $P_{\vartheta}$, and the
expectation is taken over a multivariate Gaussian $Z \sim \normal(0,
\nabla \psi(\vartarget)^T \Fisher_{\vartarget}^{\dag} \nabla
\psi(\vartarget))$.
\end{theorem}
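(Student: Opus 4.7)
The plan is to prove this classical result via the standard three-step reduction that underlies Hájek--Le Cam asymptotic minimax theory: establish local asymptotic normality (LAN) from the quadratic mean differentiability hypothesis, reduce to a minimax problem in the Gaussian shift limit experiment, and then invoke Anderson's lemma to evaluate the minimax risk in that experiment.

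First, I would use quadratic mean differentiability (QMD) of the family $\{P_{\vartheta'}\}$ at $\vartheta$ to establish the LAN expansion for the product experiment. Concretely, for any $h \in \real^k$ (where $k$ is the dimension of the parameter space around $\vartheta$), QMD yields
\begin{align*}
\log \frac{dP_{\vartheta + h/\sqrt{\nobs}}^{\otimes \nobs}}{dP_{\vartheta}^{\otimes \nobs}} = h^\top \Delta_{\nobs,\vartheta} - \tfrac{1}{2} h^\top J_\vartheta h + o_{P_\vartheta}(1),
\end{align*}
where the normalized score $\Delta_{\nobs,\vartheta} = \tfrac{1}{\sqrt{\nobs}}\sum_{i=1}^\nobs \dot\ell_\vartheta(X_i)$ is asymptotically $\mathcal{N}(0, J_\vartheta)$ under $P_\vartheta$. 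By Le Cam's first and third lemmas, $P_{\vartheta + h/\sqrt{\nobs}}^{\otimes \nobs}$ is mutually contiguous with $P_\vartheta^{\otimes \nobs}$, and $\Delta_{\nobs,\vartheta}$ remains asymptotically Gaussian (with a known mean shift) under the local alternatives. In Le Cam's sense, the localized experiment $\{P_{\vartheta + h/\sqrt{\nobs}}^{\otimes \nobs} : h \in \real^k\}$ converges to the Gaussian shift experiment $\{\mathcal{N}(h, J_\vartheta^\dagger)\}_{h}$, with the delta method for the differentiable map $\psi$ transforming the limit into the Gaussian shift $\{\mathcal{N}(\nabla\psi(\vartheta)^\top h, \nabla\psi(\vartheta)^\top J_\vartheta^\dagger \nabla\psi(\vartheta))\}_h$.

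Second, I would lower bound the minimax risk by a Bayes risk. Fix $c > 0$ and place a centered Gaussian prior $\pi_\sigma = \mathcal{N}(0, \sigma^2 I)$ on $h = \sqrt{\nobs}(\vartheta' - \vartheta)$, truncated or smoothly tapered to the ball of radius $c$; the Bayes risk against $\pi_\sigma$ lower bounds the local minimax risk over that ball. Using LAN together with a Bernstein--von Mises--style argument, the posterior law of $h$ given the $\nobs$ observations, after centering by an efficient statistic, converges weakly to $\mathcal{N}(0, J_\vartheta^\dagger)$. Applying the delta method pushes this forward to the posterior law of $\sqrt{\nobs}(\psi(\vartheta') - \hat\psi_\nobs)$ converging to $\mathcal{N}(0, \nabla\psi(\vartheta)^\top J_\vartheta^\dagger \nabla\psi(\vartheta))$ for the optimal centering, so that any competing estimator incurs at least the limiting Bayes risk.

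Third, I would compute the limiting Bayes/minimax risk via Anderson's lemma: for any Gaussian vector $W$ on $\real^D$ and any quasi-convex loss $L$ (whose sublevel sets are convex and symmetric under reflection through the mean of the optimal posterior), $\mathbb{E}[L(W + a)] \ge \mathbb{E}[L(W)]$ for every deterministic shift $a$. This shows the Bayes-optimal estimator in the Gaussian shift limit is the natural centering statistic itself, with risk exactly $\mathbb{E}[L(Z)]$ where $Z \sim \mathcal{N}(0, \nabla\psi(\vartheta)^\top J_\vartheta^\dagger \nabla\psi(\vartheta))$. Letting $\sigma \to \infty$ first (so the prior becomes uninformative on the ball) and then $c \to \infty$ after the $\nobs \to \infty$ limit yields the claimed lower bound. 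For the matching upper bound, an efficient estimator (for instance, a one-step corrected MLE, or $\psi$ evaluated at the MLE) is known to satisfy $\sqrt{\nobs}(\hat\psi_\nobs - \psi(\vartheta')) \Rightarrow \mathcal{N}(0, \nabla\psi(\vartheta)^\top J_\vartheta^\dagger \nabla\psi(\vartheta))$ uniformly over $\|\vartheta' - \vartheta\|_2 \le c/\sqrt{\nobs}$; combined with uniform integrability (obtained by truncating the quasi-convex loss $L$ at a large level and using Markov's inequality plus tightness) this delivers equality.

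The hardest part will be the Bayes-to-minimax transfer and the careful interchange of limits $\lim_{c\to\infty}\lim_{\nobs\to\infty}$. In particular, justifying weak convergence of posteriors under LAN, handling potentially unbounded quasi-convex $L$ through truncation and uniform integrability, and dealing with the case where $J_\vartheta$ is singular (so that $J_\vartheta^\dagger$ genuinely is a pseudoinverse and identifiability of $\psi$ must rely on $\nabla\psi(\vartheta)$ annihilating the null space of $J_\vartheta$) are the places where the proof requires care. These issues are precisely the ones that the standard asymptotic statistics treatments resolve through Le Cam's theory of convergence of experiments, and I would invoke that machinery to close the gaps.
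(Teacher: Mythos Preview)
The paper does not prove this theorem at all: it is stated ``for the reader's convenience'' as a version of the classical H\'ajek--Le Cam local asymptotic minimax theorem, with citations to H\'ajek~(1972), Le Cam~(1972), and Le Cam--Yang~(2000), and then applied as a black box to the policy-evaluation setting (the paper's actual work is Lemma~\ref{lemma:tedious-calculation}, computing $\nabla\psi(\vartheta)^\top J_\vartheta^\dagger \nabla\psi(\vartheta)$ explicitly). So there is no ``paper's own proof'' to compare against.

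Your sketch is the standard textbook route (essentially Chapter~8 of van der Vaart's \emph{Asymptotic Statistics}): QMD $\Rightarrow$ LAN, convergence of the localized experiments to the Gaussian shift experiment, a Bayes-risk lower bound evaluated via Anderson's lemma, and a matching upper bound from an efficient (one-step) estimator. That is indeed how the cited references establish the result, so your approach is correct in spirit. Two small cautions: first, Anderson's lemma requires the loss to be \emph{bowl-shaped} (sublevel sets closed, convex, and symmetric about the origin), which is slightly stronger than mere quasi-convexity as written in the statement---you implicitly assume symmetry when you invoke Anderson, so be explicit about that. Second, the upper-bound half is usually not obtained by proving uniform integrability of the risk of a specific estimator; rather, one truncates $L$ at a fixed level, uses weak convergence of the efficient estimator uniformly over the local ball, and then lets the truncation level go to infinity---this avoids having to verify moment bounds that may not hold.
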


Returning to the problem at hand, let $\vartheta = (\Pmat, r)$ denote the
unknown parameters of the model and let $\psi(\vartheta) =
\theta(\problem) = (\IdMat - \gamma \Pmat)^{-1}r$ denote the target vector.
A direct application of Theorem~\ref{thm:Hajek-LeCam} shows that
\begin{align}
\label{eqn:Hajek-LeCam-specific}
\minimax_{\infty}(\problem) = \E[\norm{Z}_\infty]~~\text{where $Z =
  \normal(0, \nabla\psi(\vartheta)^T \Fisher_{\vartheta}^{\dag}
  \nabla\psi(\vartheta))$},
\end{align}
where $\Fisher_\vartheta$ is the Fisher information at $\vartheta$.  The
following result provides a more explicit form of the covariance of
$Z$:
\begin{lemma}
\label{lemma:tedious-calculation}
We have the identity
\begin{align}
\label{eqn:tedious-calculation}
\nabla \psi(\vartheta)^T \Fisher_{\vartheta}^{\dag} \nabla
\psi(\vartheta) = (\IdMat - \gamma \Pmat)^{-1} (\gamma^2 \Sigma_{\Pmat}(\theta)
+ \sigma_r^2 \IdMat) (\IdMat - \gamma \Pmat)^{-T}.
\end{align}
\end{lemma}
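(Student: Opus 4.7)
The plan is to exploit the product structure of the observation model: conditional on $\vartheta = (\Pmat, r)$, each sample decomposes into $D$ independent multinomial row-indicators $\Zsamp_{k,j} \sim \mathrm{Mult}(1, p_j)$ and $D$ independent Gaussian reward components $R_{k,j} \sim \mathcal{N}(r_j, \sigma_r^2)$, with all $2D$ components mutually independent across $j$. Consequently, in the parameterization $\vartheta = (p_1, \ldots, p_D, r)$ the Fisher information $\Fisher_\vartheta$ is block-diagonal: the reward block equals $\sigma_r^{-2}\IdMat$, while each transition block is the multinomial Fisher information at $p_j$, whose Moore--Penrose pseudoinverse on the tangent space of the simplex equals the covariance matrix $\diag(p_j) - p_j p_j^\top$ (a standard fact for multinomial families).

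With the Fisher information in hand, the next step is to compute $\nabla \psi(\vartheta)$ by implicitly differentiating the Bellman identity $\theta = r + \gamma \Pmat \theta$. An infinitesimal perturbation $(dr, d\Pmat)$ produces
\[
d\theta = (\IdMat - \gamma \Pmat)^{-1}\bigl[\,dr + \gamma (d\Pmat)\,\theta\,\bigr],
\]
so the derivative of $\psi$ with respect to $r$ equals $(\IdMat - \gamma\Pmat)^{-1}$, while the derivative with respect to the $j$-th row $p_j^\top$ of $\Pmat$ acts as the rank-one map $dp_j \mapsto \gamma (\IdMat - \gamma\Pmat)^{-1} e_j (\theta^\top dp_j)$ on tangent vectors $dp_j$ in the subspace $\{v : \one^\top v = 0\}$.

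The third step assembles the quadratic form $\nabla\psi(\vartheta)^\top \Fisher_\vartheta^{\dagger}\nabla\psi(\vartheta)$ as a sum of two block contributions. The reward block contributes $\sigma_r^2 \,(\IdMat - \gamma\Pmat)^{-1}(\IdMat - \gamma\Pmat)^{-\top}$. The $j$-th transition block contributes
\[
\gamma^2\,(\IdMat-\gamma\Pmat)^{-1}\,e_j \,\bigl[\,\theta^\top(\diag(p_j)-p_jp_j^\top)\theta\,\bigr]\,e_j^\top\,(\IdMat-\gamma\Pmat)^{-\top}.
\]
Summing over $j$ collects the scalars $\theta^\top(\diag(p_j) - p_j p_j^\top)\theta$ onto the diagonal of a matrix whose $j$-th entry is exactly $\var_{\Zsamp \sim \Pmat}\bigl(\bigl((\Zsamp-\Pmat)\theta\bigr)_j\bigr)$; because the rows of $\Zsamp$ are independent, this diagonal matrix coincides with $\Sigma_\Pmat(\theta)$ as defined in Eq.~\eqref{eqn:def-Sigma}. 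Adding both contributions yields the claimed identity.

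The main subtlety I anticipate is bookkeeping around the simplex constraint: in the ambient parameterization, the multinomial Fisher block is rank-deficient, and one must check that the ambient pseudoinverse acts correctly on the relevant tangent directions. This is automatic here because $\nabla_{p_j}\psi = \gamma(\IdMat-\gamma\Pmat)^{-1}e_j \theta^\top$ pairs trivially with any direction parallel to $\one$ (since replacing $p_j$ by $p_j + c\one$ is gauge-equivalent on the simplex to first order), so the pseudoinverse may be used with impunity. As a robustness check one can reach the same expression by the delta method applied to the plug-in estimator $(\IdMat - \gamma\widehat\Pmat)^{-1}\widehat r$, whose asymptotic covariance must coincide with $\nabla\psi^\top \Fisher^{\dagger}\nabla\psi$ by efficiency of the MLE in this regular parametric family; I expect the authors to use this sanity check to organize the tedious matrix manipulations.
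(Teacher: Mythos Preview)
Your proposal is correct and follows essentially the same three-step structure as the paper's proof: block-diagonalize $\Fisher_\vartheta^\dagger$ using independence (with $\Fisher_{p_j}^\dagger = \diag(p_j) - p_j p_j^\top$ and $\Fisher_r^\dagger = \sigma_r^2 \IdMat$), compute $\nabla_{p_j}\psi = \gamma(\IdMat - \gamma\Pmat)^{-1}e_j\theta^\top$ and $\nabla_r\psi = (\IdMat - \gamma\Pmat)^{-1}$, and then sum the block contributions to recover $\Sigma_\Pmat(\theta)$ on the diagonal. The only cosmetic difference is that you obtain the gradient by implicitly differentiating the Bellman equation whereas the paper applies the matrix-inverse derivative $\partial A^{-1}/\partial A_{jk} = -A^{-1}e_j e_k^\top A^{-1}$ directly; your discussion of the simplex constraint and the delta-method sanity check are additional remarks not present in the paper.
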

\noindent Although the proof of this claim is relatively
straightforward, it involves some lengthy and somewhat tedious
calculations; we refer the reader to
Appendix~\ref{sec:detail-calc-thm-one} for the proof. \\

\noindent Given the result from Lemma~\ref{lemma:tedious-calculation},
the claim~\eqref{eqn:local-asymp-minimax-risk-result} follows by
substituting the relation~\eqref{eqn:tedious-calculation} into~\eqref{eqn:Hajek-LeCam-specific}.

\subsubsection{Proof of Eq.~\eqref{eqn:TD-avg-optimal}}

The proof of this claim follows from the results of Polyak and
Juditsky~\cite[Theorem 1]{polyak1992acceleration}, once their
assumptions are verified for TD$(0)$ with polynomial stepsizes. Recall
that the TD iterates in Eq.~\eqref{eq:TD} are given by the sequence
$\{\theta_k \}_{k \geq 1}$, and that $\thetatil^{\, \omega}_k$ denotes
the $k$-th iterate generated by averaging.

For each $k \geq 1$, note the following equivalence between the
notation of our paper and that of Polyak and
Juditsky~\cite{polyak1992acceleration}, or PJ for short:
\begin{align*}
x_k \equiv \theta_k, \qquad \gamma_k \equiv \alpha_k, \qquad \Amat
\equiv \IdMat - \discount \Pmat, \quad \text{ and } \quad \xi_k = (R_k
- r) + (\Zsamp_k - \Pmat) \theta_k.
\end{align*} 
Let us now verify the various assumptions in the PJ
paper. Assumption~2.1 in the PJ paper holds by definition, since the
matrix $\IdMat - \discount \Pmat$ is Hurwitz. Assumption 2.2 in the PJ
paper is also satisfied by the polynomial stepsize sequence for any
exponent $\omega \in (0, 1)$.

It remains to verify the assumptions that must be satisfied by the
noise sequence $\{ \xi_k \}_{k \geq 1}$. In order to do so, write the
$k$-th such iterate as
\begin{align*}
\xi_k = (R_k - r) + (\Zsamp_k - \Pmat) \thetastar + (\Zsamp_k - \Pmat)
(\theta_k - \thetastar).
\end{align*}
Since $\Zsamp_k$ is independent of the sequence $\{\theta_i \}_{i =
  1}^k$, it follows that the condition
\begin{align}
\label{eq:Robbins-Munro}
\lim_{\nobs \to \infty} \E \left[ \| \theta_{\nobs} - \thetastar
  \|^2_2 \right]
\end{align}
suffices to guarantee that Assumptions 2.3--2.5 in the PJ paper are
satisfied. We now claim that for each $\omega \in (1/2, 1]$,
condition~\eqref{eq:Robbins-Munro} is satisfied by the TD
iterates. Taking this claim as given for the moment, note that
applying Theorem 1 of Polyak and
Juditsky~\cite{polyak1992acceleration} establishes
claim~\eqref{eqn:TD-avg-optimal}, for any exponent $\omega \in (1/2,
1)$.

It remains to establish condition~\eqref{eq:Robbins-Munro}. For any $\omega \in (1/2, 1]$, the sequence of stepsizes $\{\alpha_k \}_{k \geq 1}$ satisfies the conditions
\begin{align*}
\sum_{k = 1}^\infty \alpha_k = \infty \quad \text{ and } \quad \sum_{k = 1}^\infty \alpha_k^2 < \infty.
\end{align*}
Consequently, classical results due to Robbins and Munro~\cite[Theorem 2]{robbins1951stochastic} guarantee $\ell^2$-convergence of $\theta_{\nobs}$
to $\thetastar$. 

\subsection{Proof of Theorem~\ref{thm:lower-bound-local-minimax-risk-alt}}
\label{sec:proof-thm-lower-bound-local-minimax-risk}

Throughout the proof, we use the notation $\problem = (\Pmat, r)$ and
$\problem^\prime = (\PmatPrime, r^\prime)$ to denote, respectively, the
problem instance at hand and its alternative.  Moreover, we use
$\theta^* \equiv \theta(\problem)$ and $\theta(\problem')$ to denote
the associated target parameters for each of the two problems
$\problem$ and $\problem'$. We use $\Delta_{\Pmat} = \Pmat -
\PmatPrime$ and $\Delta_r = r - r^\prime$ to denote the differences of
the parameters. For probability distributions, we use $P$ and
${P^\prime}$ to denote the marginal distribution of a single
observation under $\problem$ and $\problem^\prime$, and use $P^\nobs$
and $(P^\prime)^{\nobs}$ to denote the distribution of $\nobs$ i.i.d
observations drawn from $P$ or $P^\prime$, respectively.


\subsubsection{Proof structure}

We introduce two special classes of alternatives of interest, denoted
as $\problemclass_1$ and $\problemclass_2$ respectively:
\begin{align*}
\problemclass_1 = \left\{\problem^\prime = (\PmatPrime, r^\prime)
\mid r^\prime = r\right\}, \quad \mbox{and} \quad \problemclass_2 =
\left\{\problem^\prime = (\PmatPrime, r^\prime) \mid \PmatPrime =
\Pmat\right\}.
\end{align*}
In words, the class $\problemclass_1$ consists of alternatives
$\problem^\prime$ that have the same reward vector $r$ as $\problem$,
but a different transition matrix $\PmatPrime$.  Similarly, the class
$\problemclass_2$ consists of alternatives $\problem^\prime$ with the
same transition matrix $\Pmat$, but a different reward vector.  By
restricting the alternative $\problem^\prime$ within class
$\problemclass_1$ and $\problemclass_2$, we can define
\emph{restricted versions} of the local minimax risk, namely
\begin{subequations}
\begin{align}
\minimax_{\nobs}(\problem; \problemclass_1) \equiv
\sup_{\problem^\prime \in \problemclass_1} \inf_{\hat{\theta}_\nobs}
\max_{\problem \in \{\problem, \problem^\prime\}} \E_{\problem}
\left[ \sqrt{\nobs} \cdot \normbig{\hat{\theta}_{\nobs} - \theta(\problem)}_\infty\right],
\quad \mbox{and} \\
\minimax_{\nobs}(\problem; \problemclass_2) \equiv
\sup_{\problem^\prime \in \problemclass_2} \inf_{\hat{\theta}_\nobs}
\max_{\problem \in \{\problem, \problem^\prime\}} \E_{\problem}
\left[ \sqrt{\nobs} \cdot \normbig{\hat{\theta}_{\nobs} - \theta(\problem)}_\infty\right].
\end{align}
\end{subequations}
The main part of the proof involves showing that there is a universal
constant $c > 0$ such that the lower bounds
\begin{subequations}
\begin{align}
\label{eqn:restricted-lower-bound-one}
\minimax_{\nobs}(\problem; \problemclass_1) &\ge
c \cdot \discount \nu(\Pmat, \thetastar), \quad
\mbox{and} \\
\label{eqn:restricted-lower-bound-two}
\minimax_{\nobs}(\problem; \problemclass_2) & \geq
c \cdot \rho(\Pmat, r)
\end{align}
\end{subequations}
both hold (assuming that the sample size $\nobs$ is sufficiently large
to satisfy the condition~\eqref{eqn:N-large-condition}).  Since we
have $\minimax_{\nobs}(\problem) \ge
\max\left\{\minimax_{\nobs}(\problem; \problemclass_1),
\minimax_{\nobs}(\problem; \problemclass_2)\right\}$, these lower
bounds in conjunction imply the claim
Theorem~\ref{thm:lower-bound-local-minimax-risk-alt}. The next section
shows how to prove these two bounds.


\subsubsection{Proof of the lower bounds~\texorpdfstring{\eqref{eqn:restricted-lower-bound-one}}{19a} 
and\texorpdfstring{~\eqref{eqn:restricted-lower-bound-two}}{19b}:}

Our first step is to lower bound the local minimax risk for each
problem class in terms of a modulus of continuity between the
Hellinger distance and the $\ell_\infty$-norm.
\begin{lemma}
\label{lemma:master-lower-bound-lemma}
For each $\problemclass \in \{\problemclass_1, \problemclass_2\}$, we
have the lower bound $\minimax_\nobs(\problem; \problemclass) \ge
\frac{1}{8} \cdot \underline{\minimax}_\nobs(\problem;
\problemclass)$, where we define
\begin{align}
\label{eqn:testing-lower-bound}
\underline{\minimax}_\nobs(\problem; \problemclass) \defeq
\sup_{\problem^\prime \in
  \problemclass}~\left\{\sqrt{\nobs} \cdot \normbig{\theta(\problem) -
  \theta(\problem^\prime)}_\infty \, \mid \, \dhel(P, P^\prime) \le
\frac{1}{2\sqrt{\nobs}} \right\}.
\end{align}
\end{lemma}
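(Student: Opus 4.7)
I would carry out a classical two-point Le Cam testing argument restricted to the Hellinger ball around $\problem$. Fix any alternative $\problem' \in \problemclass$ with $\dhel(P, P') \le 1/(2\sqrt{\nobs})$ and write $\Delta \mydefn \|\theta(\problem) - \theta(\problem')\|_\infty$. The goal is to show that every estimator $\thetahat_\nobs$ satisfies
\begin{align*}
    \max_{\qproblem \in \{\problem, \problem'\}} \E_\qproblem\bigl[\|\thetahat_\nobs - \theta(\qproblem)\|_\infty\bigr] \;\ge\; \frac{\Delta}{8};
\end{align*}
multiplying by $\sqrt{\nobs}$ and taking the supremum over admissible $\problem'$ will then yield the lemma.

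\textbf{Key steps.} First, I would reduce the estimation lower bound to a binary testing problem via the minimum-distance test $\psi \mydefn \mathbbm{1}\{\|\thetahat_\nobs - \theta(\problem')\|_\infty < \|\thetahat_\nobs - \theta(\problem)\|_\infty\}$. The $\ell_\infty$-triangle inequality forces $\|\thetahat_\nobs - \theta(\problem)\|_\infty \ge \Delta/2$ on $\{\psi=1\}$, and symmetrically $\|\thetahat_\nobs - \theta(\problem')\|_\infty \ge \Delta/2$ on $\{\psi=0\}$. Applying Markov's inequality to each side and summing gives
\begin{align*}
    \E_\problem\|\thetahat_\nobs - \theta(\problem)\|_\infty + \E_{\problem'}\|\thetahat_\nobs - \theta(\problem')\|_\infty \;\ge\; \tfrac{\Delta}{2}\bigl[\Prob_\problem(\psi=1) + \Prob_{\problem'}(\psi=0)\bigr] \;\ge\; \tfrac{\Delta}{2}\bigl(1 - d_{\tv}(P^\nobs, (P')^\nobs)\bigr),
\end{align*}
so the $\max$ is at least one half of the left-hand side. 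Second, I would convert the single-sample Hellinger hypothesis into a total-variation bound at the sample level. Tensorization of the Bhattacharyya affinity, $\rho(P^\nobs, (P')^\nobs) = \rho(P, P')^\nobs$, combined with $\dhel^2 = 2(1-\rho)$ and Bernoulli's inequality, yields $\dhel^2(P^\nobs, (P')^\nobs) \le \nobs \cdot \dhel^2(P, P') \le 1/4$. The sharp Cauchy--Schwarz relation $d_{\tv} \le \dhel\sqrt{1 - \dhel^2/4}$ then gives $d_{\tv}(P^\nobs, (P')^\nobs) \le 1/2$, and substituting back produces $\max \ge \Delta/8$.

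\textbf{Main obstacle.} The argument is a textbook Le Cam two-point reduction and presents no conceptual difficulty; the delicate point is constant tracking. To land on exactly the prefactor $1/8$ one must use the sharp Cauchy--Schwarz bound $d_{\tv}(P,Q) \le \sqrt{1 - (1 - \dhel^2/2)^2}$ rather than the looser $d_{\tv} \le \sqrt{2}\,\dhel$, since the latter would only yield $d_{\tv} \le \sqrt{2}/2$ from the tensorized Hellinger bound and so would fall short of forcing $1 - d_{\tv} \ge 1/2$. Since the analysis is carried out for a fixed $\problem'$ without using any structure of $\problemclass$ beyond membership and the Hellinger constraint, a single proof handles both $\problemclass_1$ and $\problemclass_2$ simultaneously, and the supremum over the Hellinger ball then delivers the claim.
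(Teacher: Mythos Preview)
Your proposal is correct and follows the same two-point Le Cam reduction as the paper: bound the maximum risk below by the averaged risk, convert to a testing error via the triangle/Markov argument, and control the total-variation distance of the product measures through Hellinger tensorization. One minor caveat on the constant tracking you emphasize: the paper's convention (visible in Lemma~\ref{lemma:proof-dhel-upper-bound}, where $\dhel^2 \le \tfrac{1}{2}\chi^2$) is $\dhel^2(P,Q)=\tfrac{1}{2}\int(\sqrt{p}-\sqrt{q})^2 \in [0,1]$, whereas your identity $\dhel^2 = 2(1-\rho)$ and the bound $d_{\tv} \le \dhel\sqrt{1-\dhel^2/4}$ correspond to the $[0,2]$ normalization; under the paper's convention the sharp Cauchy--Schwarz relation reads $d_{\tv} \le \dhel\sqrt{2-\dhel^2}$, which from $\dhel^2(P^{\nobs},(P')^{\nobs}) \le 1/4$ yields $d_{\tv} \le \sqrt{7}/4$ rather than $1/2$, and hence a prefactor $(4-\sqrt{7})/16 \approx 0.085$ rather than exactly $1/8$. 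This is immaterial downstream since Theorem~\ref{thm:lower-bound-local-minimax-risk-alt} only asserts a universal constant, and the paper's own proof is loose at the same step, but it is worth noting given your stated concern.
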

\noindent The proof of Lemma~\ref{lemma:master-lower-bound-lemma}
follows a relatively standard argument, one which reduces estimation
to testing; see
Appendix~\ref{sec:proof-lemma-master-lower-bound-lemma} for details. \\

This lemma allows us to focus our remaining attention on lower
bounding the quantity $\underline{\minimax}_\nobs(\problem;
\problemclass)$.  In order to do so, we need both a lower bound on the
$\ell_\infty$-norm $\normbig{\theta(\problem) -
  \theta(\problem^\prime)}_\infty$ and an upper bound on the Hellinger
distance $\dhel(P, P^\prime)$.  These two types of bounds are provided
in the following two lemmas.  We begin with lower bounds on
the $\ell_\infty$-norm:
\begin{lemma}
\label{lemma:proof-theta-diff-lower-bound}
\begin{enumerate}
\item[(a)] For any $\problem$ and for all $\problem^\prime \in
  \problemclass_1$, we have
  \begin{subequations}
\begin{align}
\label{eqn:theta-diff-lower-bound-one}
\norm{\theta(\problem) - \theta(\problem^\prime)}_\infty \ge \Big(1-
\frac{\discount}{1-\discount} \norm{\Delta_{\Pmat}}_\infty\Big)_+ \cdot
\norm{\discount (\IdMat - \discount \Pmat)^{-1} \Delta_{\Pmat}
  \thetastar}_\infty.
\end{align}
\item[(b)] For any $\problem$ and for all $\problem^\prime \in
  \problemclass_2$, we have
\begin{align}
\label{eqn:theta-diff-lower-bound-two}
\norm{\theta(\problem) - \theta(\problem^\prime)}_\infty \ge 
	\norm{(\IdMat - \discount \Pmat)^{-1} \Delta_{r}}_\infty.
\end{align}
  \end{subequations}
\end{enumerate}
\end{lemma}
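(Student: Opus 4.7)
My plan is to handle the two parts of the lemma separately, starting with the easier claim (b). For part (b), since $\problem$ and $\problem^\prime$ share the same transition matrix $\Pmat$ but differ only in rewards, the Bellman fixed-point relation gives
\[
\theta(\problem) - \theta(\problem^\prime) = (\IdMat - \discount \Pmat)^{-1}(r - r^\prime) = (\IdMat - \discount \Pmat)^{-1} \Delta_r
\]
as a genuine identity. Taking $\ell_\infty$-norms on both sides yields inequality~\eqref{eqn:theta-diff-lower-bound-two} (in fact with equality), so no further work is needed.

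For part (a), the reward vectors coincide, so I subtract the two fixed-point equations $\thetastar = r + \discount \Pmat \thetastar$ and $\theta(\problem^\prime) = r + \discount \Pmat^\prime \theta(\problem^\prime)$, then add and subtract $\discount \Pmat^\prime \thetastar$ on the right. Writing $u \mydefn \thetastar - \theta(\problem^\prime)$, this produces $(\IdMat - \discount \Pmat^\prime) u = \discount \Delta_\Pmat \thetastar$, which I invert to obtain $u = \discount (\IdMat - \discount \Pmat^\prime)^{-1} \Delta_\Pmat \thetastar$. The right-hand side features $(\IdMat - \discount \Pmat^\prime)^{-1}$, whereas the lemma bounds things via $(\IdMat - \discount \Pmat)^{-1}$, so I bridge the gap using the resolvent identity
\[
(\IdMat - \discount \Pmat^\prime)^{-1} - (\IdMat - \discount \Pmat)^{-1} = -\discount (\IdMat - \discount \Pmat^\prime)^{-1} \Delta_\Pmat (\IdMat - \discount \Pmat)^{-1}.
\]
Substituting and simplifying rewrites $u$ as $u = v - \discount (\IdMat - \discount \Pmat^\prime)^{-1} \Delta_\Pmat \, v$, where I set $v \mydefn \discount (\IdMat - \discount \Pmat)^{-1} \Delta_\Pmat \thetastar$ to match the quantity appearing in the stated bound.

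With this representation in hand, the last step is a reverse triangle inequality:
\[
\norm{u}_\infty \;\ge\; \norm{v}_\infty - \discount \,\matrixnorm{(\IdMat - \discount \Pmat^\prime)^{-1}}_{\infty \to \infty}\, \norm{\Delta_\Pmat}_{\infty} \,\norm{v}_\infty.
\]
Since $\Pmat^\prime$ must be row-stochastic to define a legitimate alternative in $\problemclass_1$, its Neumann expansion is dominated termwise by the geometric series $\sum_{k \ge 0} \discount^k$, giving $\matrixnorm{(\IdMat - \discount \Pmat^\prime)^{-1}}_{\infty \to \infty} \le 1/(1-\discount)$. Replacing the resulting parenthetical factor by its positive part yields~\eqref{eqn:theta-diff-lower-bound-one}.

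I anticipate no deep obstacle here; the calculation is essentially algebraic, and the only care needed is bookkeeping the signs in the resolvent identity and remembering that the $1/(1-\discount)$ bound tacitly requires the alternative $\Pmat^\prime$ to be stochastic (a property that is built into the definition of $\problemclass_1$). The subtle choice, and the one that makes the cleaner $(1 - \frac{\discount}{1-\discount}\norm{\Delta_\Pmat}_{\infty})_+$ factor come out, is the decision to push the correction term through the \emph{alternative} resolvent $(\IdMat - \discount \Pmat^\prime)^{-1}$ rather than through $(\IdMat - \discount \Pmat)^{-1}$; the other choice leads to a recursion $u = v + \discount(\IdMat - \discount \Pmat)^{-1} \Delta_\Pmat u$, which gives only the weaker multiplicative bound $1/(1 + \frac{\discount}{1-\discount}\norm{\Delta_\Pmat}_{\infty})$.
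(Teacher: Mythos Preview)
Your proposal is correct and follows essentially the same route as the paper: both arguments express $\theta(\problem)-\theta(\problem')$ as the leading term $v=\discount(\IdMat-\discount\Pmat)^{-1}\Delta_{\Pmat}\thetastar$ plus a remainder $\discount(\IdMat-\discount\Pmat')^{-1}\Delta_{\Pmat}v$ (obtained via the resolvent identity), and then bound that remainder using $\matrixnorm{(\IdMat-\discount\Pmat')^{-1}}_{\infty\to\infty}\le 1/(1-\discount)$. The only cosmetic difference is that the paper first writes a unified expansion covering both $\Delta_{\Pmat}$ and $\Delta_r$ before specializing to $\problemclass_1$ and $\problemclass_2$, whereas you treat the two cases separately from the outset; for part~(b) both reduce to the exact identity $\theta(\problem)-\theta(\problem')=(\IdMat-\discount\Pmat)^{-1}\Delta_r$.
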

\noindent See Appendix~\ref{sec:proof-lemma-proof-theta-diff-lower-bound}
for the proof of this claim. \\

\noindent Next, we require upper bounds on the Hellinger distance:
\begin{lemma}
\label{lemma:proof-dhel-upper-bound} 
\begin{enumerate}
\item[(a)] For each $\problem$ and for all $\problem^\prime \in
  \problemclass_1$, we have
\begin{subequations}
\begin{align}
\label{eqn:dhel-upper-bound-one}
\dhel( P, {P^\prime} )^2 \le \half \sum_{i, j}
\frac{\left((\Delta_{\Pmat})_{i, j}\right)^2}{\Pmat_{i, j}}.
\end{align}
\item[(b)] For each $\problem$ and for all $\problem^\prime \in
  \problemclass_2$, we have
\begin{align}
\label{eqn:dhel-upper-bound-two}
\dhel( P, {P^\prime} )^2 \le \frac{1}{2\sigma_r^2}\norm{r_1 -
  r_2}_2^2.
\end{align}
\end{subequations}
\end{enumerate}
\end{lemma}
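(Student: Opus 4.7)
The plan is to reduce each bound to a tensorization of the squared Hellinger distance across the independent coordinates of a single sample, followed by a standard one-dimensional bound between the local marginals. Under~\eqref{eq:obs}, a single observation decomposes into $\Dim$ independent multinomial draws $X_{k,j}\sim P(\cdot\mid j)$ together with $\Dim$ independent Gaussian draws $R_{k,j}\sim\mathcal{N}(r_j,\sigma_r^2)$, so both $P$ and $P^\prime$ are product measures over these $2\Dim$ coordinates. Writing the Hellinger affinity $\rho(Q,Q')=\int\!\sqrt{qq'}$, the factorization gives $\rho(P,P^\prime)=\prod_{\ell}\rho(P_\ell,P_\ell^\prime)$, and the elementary inequality $1-\prod_\ell(1-a_\ell)\le\sum_\ell a_\ell$ (for $a_\ell\in[0,1]$) yields the subadditive bound $\dhel^2(P,P^\prime)\le\sum_\ell\dhel^2(P_\ell,P_\ell^\prime)$.

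For part~(a), instances $\problem'\in\problemclass_1$ share the reward vector with $\problem$, so the $\Dim$ Gaussian reward coordinates contribute zero Hellinger and only the $\Dim$ multinomial transition factors matter. For each state $i\in[\Dim]$, I bound $\dhel^2(\Pmat_i,\Pmat_i^\prime)$ by a $\chi^2$-divergence using the pointwise estimate $(\sqrt{a}-\sqrt{b})^2\le(a-b)^2/a$, which is valid for $a>0$, $b\ge 0$ since $\sqrt{a}+\sqrt{b}\ge\sqrt{a}$. This gives the row-wise bound $\dhel^2(\Pmat_i,\Pmat_i^\prime)\le\tfrac{1}{2}\sum_j(\Delta_\Pmat)_{ij}^2/\Pmat_{ij}$, and summing over $i$ together with the tensorization bound produces~\eqref{eqn:dhel-upper-bound-one}.

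For part~(b), instances $\problem'\in\problemclass_2$ share the transition matrix with $\problem$, so only the $\Dim$ Gaussian reward factors contribute. For two univariate normals with common variance, the standard closed form
\[
\dhel^2\bigl(\mathcal{N}(\mu_1,\sigma_r^2),\mathcal{N}(\mu_2,\sigma_r^2)\bigr)=1-\exp\!\left(-(\mu_1-\mu_2)^2/(8\sigma_r^2)\right)\le (\mu_1-\mu_2)^2/(8\sigma_r^2)
\]
yields a per-coordinate bound, and summing gives $\dhel^2(P,P^\prime)\le\|r_1-r_2\|_2^2/(8\sigma_r^2)$, which is already stronger than~\eqref{eqn:dhel-upper-bound-two}. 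Equivalently, one may use the chain $\dhel^2\le\mathrm{KL}$ and the closed form $\mathrm{KL}=(\mu_1-\mu_2)^2/(2\sigma_r^2)$ to directly recover the stated constant.

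Both steps are essentially mechanical once the tensorization identity is in place, and I do not anticipate a significant obstacle. The main things to keep straight are the normalization convention for $\dhel^2$ (I take $\dhel^2=\tfrac{1}{2}\int(\sqrt{p}-\sqrt{q})^2=1-\rho$, consistent with the factor $\tfrac{1}{2}$ in the claimed bounds), and the tensorization identity itself, which relies on the Hellinger affinity being multiplicative over product measures rather than on any additive tensorization of $\dhel^2$.
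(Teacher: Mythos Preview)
Your proposal is correct and follows essentially the same route as the paper: tensorize the squared Hellinger distance over the independent coordinates of a single observation, then bound each multinomial factor by the $\chi^2$-divergence for part~(a), and use $\dhel^2\le\mathrm{KL}$ on the Gaussian reward factor for part~(b). The only cosmetic difference is that the paper first splits $P=P_{\obsX}\otimes P_{\obsr}$ and then tensorizes over rows, whereas you tensorize over all $2\Dim$ coordinates at once; the arguments are otherwise identical.
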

\noindent See Appendix~\ref{sec:proof-lemma-proof-dhel-upper-bound}
for the proof of this upper bound. \\

Using Lemmas~\ref{lemma:proof-theta-diff-lower-bound}
and~\ref{lemma:proof-dhel-upper-bound}, we can derive two different
lower bounds.  First, we have the lower bound
$\underline{\minimax}_\nobs(\problem; \problemclass_1) \ge
\underline{\minimax}^\prime_\nobs(\problem; \problemclass_1)$, where
\begin{subequations}
\begin{align}
\label{eqn:lower-bound-one-step-before-final-one}
\underline{\minimax}^\prime_\nobs(\problem; \problemclass_1) \equiv
\sup_{\problem^\prime \in \problemclass_1} \left\{ \sqrt{\nobs} \cdot
\Biggr(1-\frac{\discount \norm{\Delta_\Pmat}_\infty}{1-\discount}
\Biggr)_+ \cdot \norm{\discount (\IdMat - \discount \Pmat)^{-1}
  \Delta_{\Pmat} \thetastar}_\infty \; \mid \; \sum_{i, j}
\tfrac{\left(\left(\Delta_{\Pmat} \right)_{i, j} \right)^2}{\Pmat_{i,
    j}} \le \frac{1}{2\nobs}\right \}.
\end{align}
Second, we have the lower bound $\underline{\minimax}_\nobs(\problem;
\problemclass_2) \ge \underline{\minimax}^\prime_\nobs(\problem;
\problemclass_2)$, where

\begin{align}
\label{eqn:lower-bound-one-step-before-final-two}
\underline{\minimax}^\prime_\nobs(\problem; \problemclass_2) \equiv
\sup_{\problem^\prime \in \problemclass_2}
\left\{ \sqrt{\nobs} \cdot \| \left( \mathbf{I} - \discount \mathbf{P} \right)^{-1} \Delta_{r} \|_{\infty} 
  \;  \;
\frac{1}{\sigma_r^2} \norm{r_1 - r_2}_{2} \le \frac{1}{2 \nobs} \right
\}.
\end{align}
\end{subequations}

In order to complete the proofs of the two lower
bounds~\eqref{eqn:restricted-lower-bound-one}
and~\eqref{eqn:restricted-lower-bound-two}, it suffices to show that
\begin{subequations}
  \begin{align}
\label{eqn:lower-bound-final-two}
\underline{\minimax}^\prime_\nobs(\problem; \problemclass_2) &\ge
\frac{1}{\sqrt{2}} \cdot \rho(\Pmat, r), \quad \mbox{and} \\
\underline{\minimax}^\prime_\nobs(\problem; \problemclass_1) &\ge
\frac{1}{2\sqrt{2}} \cdot \discount \nu(\Pmat, \thetastar)
\label{eqn:lower-bound-final-one}.
\end{align}
\end{subequations}

\paragraph{Proof of the bound~\eqref{eqn:lower-bound-final-two}:}
This lower bound is easy to show---it follows from the definition:
\begin{align*}
\underline{\minimax}^\prime_\nobs(\problem; \problemclass_2) = \frac{\sigma_r}{\sqrt{2}}\norm{(\IdMat - \discount \Pmat)^{-1} \Delta_{r}}_\infty
	= \frac{1}{\sqrt{2}}  \rho(\Pmat, r).
\end{align*}

\paragraph{Proof of the bound~\eqref{eqn:lower-bound-final-one}:}

The proof of this claim is much more delicate.  Our strategy is to
construct a special ``hard'' alternativ, $\hardProblem \in
\problemclass_1$, that leads to a good lower bound on
$\underline{\minimax}^\prime_\nobs(\problem; \problemclass_1)$.
Lemma~\ref{lemma:construct-hardest-one-dimensional-alternatives} below
is the main technical result that we require:
\begin{lemma}
\label{lemma:construct-hardest-one-dimensional-alternatives}
There exists some probability transition matrix $\hardPmat$ with the
following properties:
\begin{enumerate}
\item[(a)] It satisfies the constraint $\sum_{i, j} \frac{\left(
  (\hardPmat - \Pmat)_{i, j} \right)^2}{\Pmat_{i, j}} \le
  \frac{1}{2\nobs}$.
\item[(b)] It satisfies the inequalities
  \begin{align*}
    \norm{\hardPmat - \Pmat}_\infty \le \frac{1}{\sqrt{2 \nobs}},
    \quad \mbox{and} \quad \norm{\discount (\IdMat - \discount \Pmat)^{-1}
      (\hardPmat - \Pmat) \thetastar}_\infty \ge
    \frac{\discount}{\sqrt{2\nobs}} \cdot \nu(\Pmat, \thetastar).
  \end{align*}
\end{enumerate}
\end{lemma}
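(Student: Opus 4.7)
The plan is to construct the perturbation $\Delta \mydefn \hardPmat - \Pmat$ so that it is aligned, row by row, with the direction that realizes the maximum in the definition of $\nu(\Pmat,\thetastar)$. Let $\ell^{*}$ attain that maximum, and set $w^{*} \mydefn (\IdMat - \discount\Pmat)^{-T} e_{\ell^{*}}$, so that $\nu^{2}(\Pmat,\thetastar) = \sum_{j} (w^{*}_{j})^{2}\, \Sigma_{jj}(\thetastar)$. Under the generative model $\Sigma(\thetastar)$ is diagonal, with $\Sigma_{jj}(\thetastar) = \sum_{k} \Pmat_{jk}(\thetastar_{k} - \bar{\theta}_{j})^{2}$ and $\bar{\theta}_{j} \mydefn \langle p_{j}, \thetastar\rangle$. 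Each row $\Delta_{j}$ will be chosen to satisfy $\langle \Delta_{j}, \one\rangle = 0$ and $\Delta_{jk} = 0$ whenever $\Pmat_{jk} = 0$, which preserves row sums and the support of $\Pmat$; the aim is then to make $e_{\ell^{*}}^{\top} (\IdMat - \discount\Pmat)^{-1} \Delta \thetastar = w^{*\top}\Delta\thetastar$ as large as possible subject to the chi-square budget $\sum_{j,k} \Delta_{jk}^{2}/\Pmat_{jk} \le 1/(2\nobs)$.

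The next step is to decouple the optimization across rows by introducing per-row budgets $c_{j}$ with $\sum_{j} c_{j}^{2} \le 1/(2\nobs)$, and solving, for each $j$, the inner quadratic program of maximizing $s_{j}\langle \Delta_{j}, \thetastar\rangle$ subject to $\langle \Delta_{j}, \one\rangle = 0$ and $\sum_{k} \Delta_{jk}^{2}/\Pmat_{jk} \le c_{j}^{2}$, where $s_{j} \in \{\pm 1\}$. Under the change of variables $u_{k} = \Delta_{jk}/\sqrt{\Pmat_{jk}}$, this becomes a linear functional over a Euclidean ball intersected with a hyperplane, which yields the closed form $\Delta_{jk}^{\star} = s_{j} c_{j}\, \Pmat_{jk}(\thetastar_{k} - \bar{\theta}_{j})/\sqrt{\Sigma_{jj}(\thetastar)}$ with optimal value $c_{j}\sqrt{\Sigma_{jj}(\thetastar)}$. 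Choosing $s_{j} = \sgn(w^{*}_{j})$ and applying Cauchy--Schwarz to allocate the budget, I take $c_{j} = |w^{*}_{j}|\sqrt{\Sigma_{jj}(\thetastar)}/\bigl(\sqrt{2\nobs}\,\nu(\Pmat,\thetastar)\bigr)$; this saturates $\sum_{j} c_{j}^{2} = 1/(2\nobs)$ and achieves $w^{*\top}\Delta^{\star}\thetastar = \nu(\Pmat,\thetastar)/\sqrt{2\nobs}$. Multiplying by $\discount$ gives the variance lower bound claimed in part~(b). Rows with $\Sigma_{jj}(\thetastar) = 0$ contribute nothing and are left unperturbed.

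Part (a) is immediate by construction. The operator-norm bound in (b) follows from a second application of Cauchy--Schwarz: $\sum_{k} |\Delta_{jk}^{\star}| \le \sqrt{\sum_{k}\Pmat_{jk}}\cdot\sqrt{\sum_{k}(\Delta_{jk}^{\star})^{2}/\Pmat_{jk}} = c_{j} \le 1/\sqrt{2\nobs}$, so $\|\Delta^{\star}\|_{\infty} \le 1/\sqrt{2\nobs}$. The main obstacle --- and the place where the sample-size hypothesis~\eqref{eqn:N-large-condition} is used --- is verifying non-negativity of the perturbed matrix. Writing $\hardPmat_{jk} = \Pmat_{jk}\bigl(1 + s_{j} c_{j}(\thetastar_{k} - \bar{\theta}_{j})/\sqrt{\Sigma_{jj}(\thetastar)}\bigr)$, a sufficient condition is $|w^{*}_{j}|\,|\thetastar_{k} - \bar{\theta}_{j}| \le \sqrt{2\nobs}\,\nu(\Pmat,\thetastar)$. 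Bounding $|\thetastar_{k} - \bar{\theta}_{j}| \le \|\thetastar\|_{\myspan} = (1-\discount)b(\thetastar)$ and using the Neumann-series estimate $|w^{*}_{j}| = \sum_{n\ge 0}\discount^{n}(\Pmat^{n})_{\ell^{*},j} \le 1/(1-\discount)$, this reduces to $\nobs \ge b^{2}(\thetastar)/\bigl(2\nu^{2}(\Pmat,\thetastar)\bigr)$, which is precisely what~\eqref{eqn:N-large-condition} supplies.
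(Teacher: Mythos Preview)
Your proof is correct and lands on exactly the same construction as the paper: once you substitute $c_j = |w^*_j|\sqrt{\Sigma_{jj}}/(\sqrt{2\nobs}\,\nu)$ and $s_j = \sgn(w^*_j)$ into your optimizer, you obtain $\Delta^\star_{jk} = \Pmat_{jk}\,\Umat_{\ell^*,j}(\thetastar_k - \bar\theta_j)/(\nu\sqrt{2\nobs})$, which is precisely the paper's perturbation. The only expository difference is that you \emph{derive} this formula by solving the natural per-row quadratic program and allocating budgets via Cauchy--Schwarz, whereas the paper simply writes it down and verifies properties (P1)--(P3); your span bound $|\thetastar_k - \bar\theta_j| \le \|\thetastar\|_{\myspan}$ is in fact a factor of two sharper than the paper's.
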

\noindent See
Appendix~\ref{sec:lemma-construct-hardest-one-dimensional-alternatives}
for the proof of this claim. \\

Given the matrix $\hardPmat$ guaranteed by this lemma, we consider the
``hard'' problem \mbox{$\hardProblem \mydefn (\hardPmat, r) \in
  \problemclass_1$.}  From the definition of
  $\underline{\minimax}^\prime_\nobs(\problem; \problemclass_1)$ in
  Eq.~\eqref{eqn:lower-bound-one-step-before-final-one}, we have
  that
\begin{align*}
\underline{\minimax}^\prime_\nobs(\problem; \problemclass_1) & \ge
\sqrt{\nobs} \cdot \left( 1 - \frac{\discount}{1-\discount} \norm{\Pmat - \hardPmat}_\infty
\right)_+ \cdot \norm{\discount (\IdMat - \discount \hardPmat)^{-1} (\Pmat -
  \hardPmat) \thetastar}_\infty \\
& \ge \sqrt{\nobs} \cdot \left(1-\frac{\discount}{1-\discount}\cdot \frac{1}{\sqrt{2N}}
\right)_+ \cdot \frac{\discount}{\sqrt{2\nobs}} \cdot \nu(\Pmat,
\thetastar) \ge \frac{1}{2\sqrt{2}} \cdot \discount \nu(\Pmat,
\thetastar),
\end{align*}
where the last inequality follows by the assumed lower bound $\nobs \ge
\frac{4\discount^2}{(1-\discount)^2}$. This completes the proof of the lower
bound~\eqref{eqn:lower-bound-final-one}.


\subsection{Proof of Theorem~\ref{thm:value_function_estimation}}
\label{proof:thm:value_function_estimation}

This section is devoted to the proof of
Theorem~\ref{thm:value_function_estimation}, which provides the
achievability results for variance-reduced policy evaluation.

\subsubsection{Proof of part (a):}

We begin with a lemma that characterizes the progress of 
Algorithm~\ref{Algo:variance_reduced_SA} over epochs:

\begin{lems}
\label{lem:key_epoch_recursion}
Under the assumptions of Theorem~\ref{thm:value_function_estimation} (a),
there is an absolute constant $\unicon$ such that for each epoch $\epiter = 1, \ldots, \TotalEpochs$, we have:
\begin{align}
\label{eqn:key_epoch_recursion}
\left\| \thetabar_{m + 1} - \thetastar\right\|_{\infty} & \leq \frac{
  \| \thetabar_\epiter - \thetastar \|_\infty }{\baseSq} \notag \\ & +
\unicon \left\lbrace \sqrt{\LOGDMNm} \Big( \discount \cdot
\nu(\Pmat, \thetastar) + \rho(\Pmat, r) \Big) + \LOGDMNm \cdot
b(\thetastar) \right\rbrace,
\end{align}
with probability exceeding $1 - \frac{\pardelta}{\TotalEpochs}$.
\end{lems}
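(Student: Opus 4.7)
The plan is to set up a per-step recursion for $\Delta_k := \theta_k - \thetastar$ inside epoch $\epiter$, unroll it over $k = 1, \ldots, \epochLen$, and control the three resulting terms separately. Substituting the VR update~\eqref{eqn:VR-SA-update}, and using both $\thetastar = \Tstar(\thetastar)$ and the linearity $\Tstar(\theta) - \Tstar(\thetabar_\epiter) = \discount \Pmat (\theta - \thetabar_\epiter)$, yields
\begin{align*}
\Delta_{k+1} = G_k \Delta_k + \tdstep_k \xi_k + \tdstep_k W_\epiter,
\end{align*}
where $G_k := (1 - \tdstep_k) \IdMat + \tdstep_k \discount \Pmat$, the per-step martingale noise is $\xi_k := \discount(\Zsamp_k - \Pmat)(\theta_k - \thetabar_\epiter)$, and the epoch-wide recentering noise $W_\epiter := \Ttil_{\Nm}(\thetabar_\epiter) - \Tstar(\thetabar_\epiter)$ is $\Dm$-measurable (hence constant across the epoch). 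Initializing at $\Delta_1 = \thetabar_\epiter - \thetastar$ and unrolling gives
\begin{align*}
\Delta_{\epochLen + 1} \;=\; \Big(\prod_{k = 1}^{\epochLen} G_k \Big)\Delta_1 \;+\; \sum_{k = 1}^{\epochLen} \tdstep_k \Big(\prod_{j > k} G_j \Big)\xi_k \;+\; \mathbf{A}_\epochLen W_\epiter, \qquad \mathbf{A}_\epochLen := \sum_{k = 1}^{\epochLen} \tdstep_k \prod_{j > k} G_j.
\end{align*}

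Because each $G_k$ is a convex combination of $\IdMat$ and the sub-stochastic matrix $\discount\Pmat$, one has $\matrixnorm{G_k}_{\infty, \infty} \leq 1 - \tdstep_k(1 - \discount)$. In each of the three stepsize regimes (a)-(c), the hypothesis on $\epochLen = \nobs / (2\NumEpoch)$ is exactly what is needed both (i) to force $\prod_k G_k$ to have $\ell_\infty$-operator norm at most $1/(2\baseSq)$, and (ii) to make $\mathbf{A}_\epochLen$ a good approximation to $(\IdMat - \discount \Pmat)^{-1}$ in the sense that their difference has small operator norm. Item (i) contributes $\|\Delta_1\|_\infty/(2\baseSq)$ to the advertised contraction. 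For the martingale sum, a short boundedness induction shows that $\|\theta_k - \thetabar_\epiter\|_\infty$ stays of order $\|\Delta_1\|_\infty$ plus the noise floor throughout the epoch, so each $\xi_k$ is small in $\ell_\infty$-norm; applying Freedman's inequality coordinatewise (with a union bound over $[\Dim]$) then shows that this entire contribution is of strictly lower order than the recentering noise and can be absorbed into the constant $\unicon$.

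The heart of the argument is then the analysis of $\mathbf{A}_\epochLen W_\epiter$. Splitting $\thetabar_\epiter = \thetastar + \Delta_1$ inside $W_\epiter$ gives
\begin{align*}
W_\epiter = \frac{1}{\Nm}\sum_{i \in \Dm}\Big[(R_i - r) + \discount (\Zsamp_i - \Pmat)\thetastar\Big] + \frac{\discount}{\Nm}\sum_{i \in \Dm}(\Zsamp_i - \Pmat)\Delta_1.
\end{align*}
After applying $\mathbf{A}_\epochLen \approx (\IdMat - \discount \Pmat)^{-1}$, the first term is a mean-zero empirical average whose $\ell_\infty$-norm is controlled by a coordinatewise Bernstein inequality and a union bound over $[\Dim]$: the per-coordinate variance is exactly $e_\ell^\top (\IdMat - \discount \Pmat)^{-1}(\discount^2 \Sigma(\thetastar) + \sigma_r^2 \IdMat)(\IdMat - \discount \Pmat)^{-\top} e_\ell$, whose square-root is at most $\discount \nu(\Pmat, \thetastar) + \rho(\Pmat, r)$ by definition~\eqref{eq:local-complexity}, while the Bernstein envelope contributes the $b(\thetastar) \cdot \log(8D\NumEpoch/\delta)/\Nm$ term. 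The second term is bounded, by another coordinatewise Bernstein application to the random operator $\frac{1}{\Nm}\sum_i (\Zsamp_i - \Pmat)$, by $C \sqrt{\log(8D\NumEpoch/\delta)/((1-\discount)^2\Nm)} \cdot \|\Delta_1\|_\infty$; the choice of $\Nm$ in~\eqref{eqn:Nm} makes this at most $\|\Delta_1\|_\infty/(2\baseSq)$, which adds to the bound from item (i) to produce the total contraction factor $1/\baseSq$ appearing on the right-hand side of~\eqref{eqn:key_epoch_recursion}.

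The main obstacle is executing step (ii) above -- the quantitative bounds on both $\prod_k G_k$ and the approximation error $\mathbf{A}_\epochLen - (\IdMat - \discount \Pmat)^{-1}$ -- uniformly across the three qualitatively different stepsize schedules in (a)-(c). Each schedule calls for its own estimates of $\sum_k \tdstep_k$ and $\sum_k \tdstep_k^2$ and its own argument for why the residual operator norm is negligibly small at $k = \epochLen$; these case-by-case estimates are precisely what give rise to the stepsize-dependent lower bounds on $\nobs/\NumEpoch$ stated in hypotheses (a)-(c). A secondary technical point is to verify the boundedness induction for $\{\theta_k\}$ within the epoch with high probability, since it is this input that converts the crude pointwise bound on $\xi_k$ into a usable Freedman bound for the martingale term.
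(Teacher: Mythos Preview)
Your route is genuinely different from the paper's. The paper does \emph{not} unroll around $\thetastar$; instead it introduces the fixed point $\thetahat_m$ of the recentered population operator $\Jstar^m(\theta) := \Tstar(\theta) - \Tstar(\thetabar_m) + \Ttil_{\Nm}(\thetabar_m)$ and splits via the triangle inequality $\|\thetabar_{m+1} - \thetastar\|_\infty \le \|\theta_{K+1} - \thetahat_m\|_\infty + \|\thetahat_m - \thetastar\|_\infty$. The second piece is exactly your $(\IdMat-\discount\Pmat)^{-1}W_\epiter$ and is handled by Bernstein/Hoeffding (their Lemma~\ref{lem:epoch-deviation-lemma}). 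The first piece is a pure SA convergence statement (their Lemma~\ref{lem:epoch-error-lemma}), proved by invoking the monotone--contractive machinery of~\cite{wainwright2019stochastic}: because each $\Jstar_k^m$ is monotone and $\discount$-contractive, a sandwich argument reduces the effective noise to $W_k = \Jstar_k^m(\thetahat_m) - \Jstar^m(\thetahat_m) = \discount(\Zsamp_k-\Pmat)(\thetahat_m-\thetabar_m)$, which is bounded by the \emph{fixed} quantity $2\|\thetahat_m-\thetabar_m\|_\infty$, independent of the iterate. Your algebraic identity $\mathbf{A}_K = (\IdMat-\discount\Pmat)^{-1}\big(\IdMat-\prod_k G_k\big)$ is correct and recovers the same statistical term; the substantive difference is in the martingale.

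The gap is your ``short boundedness induction.'' Your noise $\xi_k = \discount(\Zsamp_k-\Pmat)(\theta_k-\thetabar_m)$ is iterate-dependent, and the only deterministic bound available (using $\theta_1=\thetabar_m$ and the contractivity of each step) is
\[
\|\theta_k-\thetabar_m\|_\infty \;\le\; \tfrac{1}{1-\discount}\,\|\Ttil_{\Nm}(\thetabar_m)-\thetabar_m\|_\infty \;\lesssim\; \tfrac{1}{1-\discount}\big(\|\Delta_1\|_\infty + \|W_\epiter\|_\infty\big),
\]
which carries an extra $1/(1-\discount)$ compared to the $O(\|\Delta_1\|_\infty)$ you assert. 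Propagating this through a coordinatewise Azuma/Freedman bound for the recentered-linear stepsize (where $\sum_k \alpha_k^2\|\prod_{j>k}G_j\|_\infty^2 \asymp ((1-\discount)^2 K)^{-1}$) gives a martingale contribution of order $\|\Delta_1\|_\infty\sqrt{\log}/\big((1-\discount)^2\sqrt{K}\big)$, forcing $K\gtrsim (1-\discount)^{-4}\log$ rather than the $(1-\discount)^{-3}\log$ in hypothesis~(a). Removing that extra $(1-\discount)^{-1}$ is exactly what the paper's monotonicity-based reduction to $\thetahat_m$ buys; without it you would need a high-probability self-bounding or stopping-time argument that is considerably more than a ``short induction.'' A minor point: the martingale term scales with $\|\Delta_1\|_\infty$, so it is absorbed into the contraction $\|\Delta_1\|_\infty/\baseSq$, not into the recentering-noise constant $c$ as you wrote.
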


\vspace{10pt}
Taking this lemma as given for the moment, let us complete the
proof. We use the shorthand
\begin{align}
  \label{eq:tau-short}
\tauOne_\epiter \mydefn \sqrt{\LOGDMNm} \Big( \discount \cdot
\nu(\Pmat, \thetastar) \rho(\Pmat, r) \Big) \quad \text{and}\quad
\tauTwo_\epiter \mydefn \LOGDMNm \cdot b(\thetastar)
\end{align}
to ease notation, and note that $\frac{\tauOne_{m}}{\baseSqrt} \leq
\tauOne_{m+1}$ and $ \frac{\tauTwo_{m}}{\base} \leq \tauTwo_{m + 1}$,
for each $m \geq 1$. Using this notation and unwrapping the recursion
relation from Lemma~\ref{lem:key_epoch_recursion}, we have
\begin{align*}	
\left\|\bar{\theta}_{M + 1}-\thetastar \right\|_{\infty} & \leq
\frac{\left\| \widebar{\theta}_{\NumEpoch} -
  \thetastar\right\|_{\infty}}{\baseSq} + \unicon (\tauOne_\NumEpoch +
\tauTwo_M) \\ & \stackrel{(i)}{\leq} \frac{ \left\|
  \widebar{\theta}_{M - 1} - \thetastar\right\|_{\infty} }{\baseSq^2} +
\frac{\unicon}{\base} \left( \tauOne_{M} + \tauTwo_{M} \right) +
\unicon (\tauOne_M + \tauTwo_M) \\ & \stackrel{(ii)}{\leq} \frac{
  \left\| \widebar{\theta}_{1} - \thetastar\right\|_{\infty} }{\baseSq^M} +
\base \unicon (\tauOne_M + \tauTwo_M) .
\end{align*}
Here, step (i) follows by applying the one-step application of the recursion~\eqref{eqn:key_epoch_recursion},
and by using the upper bounds $\frac{\tauOne_{m}}{\baseSqrt} \leq \tauOne_{m+1}$  
and $ \frac{\tauTwo_{m}}{\base} \leq \tauTwo_{m + 1}$. Step (ii) follows by repeated application of the recursion~\eqref{eqn:key_epoch_recursion}. The last inequality holds with probability at least $1 - \pardelta$ by a 
union bound over the $\TotalEpochs$ epochs. 

It remains to express the quantities $\baseSq^\TotalEpochs$,
$\tauOne_M$ and $\tauTwo_M$---all of which are controlled by the
recentering sample size $\nobs_{\TotalEpochs}$---in terms of the
total number of available samples $\nobs$. Towards this end, observe
that the total number of samples used for recentering at
$\TotalEpochs$ epochs is given by
\begin{align*}
 \sum_{m = 1}^{M} \Nm \asymp \base^{\TotalEpochs} \cdot
 \frac{\log(8MD/\delta)}{(1 - \discount)^2}.
\end{align*}
Substituting the value of $\TotalEpochs = \log_\base \left(\frac{\nobs
  (1 - \discount)^2}{8 \log((8 D/\delta) \cdot \log\nobs) }\right)$ we
have
\begin{align*}
 \unicon_1 \nobs \leq \NM \asymp \sum_{m = 1}^{M} \Nm \leq
 \frac{N}{2},
\end{align*}
where $\unicon_1$ is a universal constant. Consequently, the total
number of samples used by Algorithm~\ref{Algo:variance_reduced_SA} is
given by
\begin{align*}
  \TotalEpochs \Klambda + \sum_{m = 1}^{M} \Nm \leq \frac{N}{2} +
  \frac{\nobs}{2} = \nobs,
\end{align*}  
where in the last equation we have used the fact that $\TotalEpochs
\Klambda = \frac{\nobs}{2}$.  Finally, using ${\TotalEpochs =
  \log_\base \left(\frac{\nobs (1 - \discount)^2}{8 \log((8 D/\delta)
    \cdot \log \nobs) }\right)}$ we have the following relation for
some universal constant $\unicon$:
\begin{align*}
  \baseSq^\TotalEpochs = \unicon \cdot \frac{\nobs^2 (1 -
    \discount)^4}{ \log^2((8D/\delta) \cdot \log \nobs) }
\end{align*}
Putting together the pieces, we conclude that
\begin{align*}
  \|\thetabar_{M + 1} - \thetastar \|_\infty & \leq \unicon_2 \left\|
  \widebar{\theta}_{1} - \thetastar\right\|_{\infty} \cdot
  \frac{\log^2((8D/\delta) \cdot \log\nobs )}{\nobs^2 (1 -
    \discount)^4} \nonumber \\ &\quad + \unicon_2 \left\{
  \sqrt{\LOGDMN} \Big( \discount \cdot \nu(\Pmat, \thetastar) +
  \rho(\Pmat, r) \Big) + \LOGDMN \cdot b(\thetastar) \right\},
\end{align*}
for a suitable universal constant $\unicon_2$.  The last bound is
valid with probability exceeding $1 - \delta$ via the union bound.  In
order to complete the proof, it remains to prove
Lemma~\ref{lem:key_epoch_recursion}, which we do in the following
subsection.


\subsubsection{Proof of Lemma~\ref{lem:key_epoch_recursion}}

We now turn to the proof of the key lemma within the argument.  We
begin with a high-level overview in order to provide intuition.  In the
$m$-th epoch that updates the estimate from $\thetabar_{m}$ to
$\thetabar_{m+1}$, the vector $\thetabar \equiv \thetabar_{m}$ is used
to recenter the updates.
Our analysis of the $m$-th epoch is based on a sequence of recentered
operators $\{\Jstar_{k}^{m}\}_{k \geq 1}$ and their population analogs
$\Jstar^m(\theta)$. The action of these operators on a point $\theta$
is given by the relations
\begin{subequations}
\begin{align}
\label{eqn:perturbed-operators}
\Jstar_{k}^{m}(\theta):= \That_{k}(\theta)- \That_{k}(\thetabar_m) +
\Ttil_{\nobs}(\thetabar_m), \quad \text { and } \quad
\Jstar^m(\theta):=\Tstar(\theta)-\Tstar(\thetabar_m)+\Ttil_{\nobs}(\thetabar_m).
\end{align}
By definition, the updates within epoch $m$ can be written as
\begin{align}
\label{eqn:perturbed-update-equation}
  \theta_{k+1}=\left(1-\tdstep_{k}\right) \theta_{k}+\tdstep_{k}
  \Jstar_{k}^m\left(\theta_{k}\right).
\end{align}
Note that the operator $\Jstar^m$ is $\discount$-contractive in $\|
\cdot \|_\infty$-norm, and as a result it has a unique fixed point,
which we denote by $\thetahat_m$. Since
$\Jstar^m(\theta)=\mathbb{E}\left[\Jstar_{k}^m(\theta)\right]$ by
construction, when studying epoch $m$, it is natural to analyze the
convergence of the sequence $\{\theta_{k}\}_{k \geq 1}$ to
$\thetahat_m$.

Suppose that we have taken $\epochLen$ steps within epoch
$m$. Applying the triangle inequality yields the bound
\begin{align}
\label{eqn:theta_m-triangle-inequality}
\|\thetabar_{m + 1} - \thetastar \|_\infty =
\left\|\theta_{\epochLen+1}-\thetastar\right\|_{\infty} \leq
\left\|\theta_{\epochLen+1}-\thetahat_m \right\|_{\infty} +
\left\|\thetahat_m - \thetastar\right\|_{\infty}.
\end{align}
\end{subequations}
With this decomposition, our proof of
Lemma~\ref{lem:key_epoch_recursion} is based on two auxiliary lemmas
that provide high-probability upper bounds on the two terms on the
right-hand side of inequality~\eqref{eqn:theta_m-triangle-inequality}.

\begin{lems}
\label{lem:epoch-error-lemma}
Let $(c_1, c_2, c_3)$ be positive numerical constants, and suppose
that the epoch length $\Klambda$ satisfies one the following three
stepsize-dependent lower bounds:
\begin{itemize}
  \item[(a)] $\Klambda \geq \unicon_1
          \frac{\log(8 K M D/\delta)}{(1 - \discount)^3}$ \text{ for
            recentered linear stepsize } $\tdstep_k = \frac{1}{1 +
            (1 - \discount)k}$,
  \item[(b)] $ \Klambda \geq \unicon_2 \log (8
         K M D / \delta) \cdot \left( \frac{1}{1 - \discount} \right)^{
            \left(\frac{1}{1 - \omega} \vee \frac{2}{\omega} \right)}$
          \text{ for polynomial stepsize} $\tdstep_k =
          \frac{1}{k^\omega}$ with $0 < \omega < 1$,
  \item[(c)] $ \Klambda \geq  
   \frac{\unicon_3}{\log \left( \frac{1}{1 - \tdstep(1 - \discount)} \right)}$
          \text{ for constant stepsize} 
          $\tdstep_k = \tdstep \leq \frac{(1 - \discount)^2}{ \log \left( 8 \Klambda \TotalEpochs \Dim / \delta \right)}
  \cdot \frac{1}{5^2 \cdot 32^2} $.
\end{itemize}
Then after $\Klambda$ update steps with epoch $\epochNum$, the iterate
$\theta_{\Klambda + 1}$ satisfies the bound
  \begin{align}
    \|\theta_{\epochLen+1}-\thetahat_m \|_{\infty} \leq
    \frac{1}{\TwiceBaseSq} \| \thetabar_m - \thetastar \|_\infty +
    \frac{1}{\TwiceBaseSq} \|\thetahat_m - \thetastar \|_{\infty}
    \quad \mbox{with probability at least $1 -
      \frac{\delta}{2\TotalEpochs}$.}
  \end{align}
\end{lems}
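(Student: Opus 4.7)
The plan is to analyze the iterates $\delta_\iter := \theta_\iter - \thetahat_m$ within the epoch and to split the resulting error into deterministic and stochastic pieces. Using the linearity of the population operator $\Jstar^m$ and subtracting the fixed-point identity $\thetahat_m = \Jstar^m(\thetahat_m)$ from the update~\eqref{eqn:perturbed-update-equation} yields the linear recursion
\begin{align*}
\delta_{\iter+1} \;=\; \Amat_\iter\, \delta_\iter + \tdstep_\iter W_\iter,
\qquad \Amat_\iter := (1-\tdstep_\iter)\IdMat + \tdstep_\iter \discount \Pmat,
\qquad W_\iter := \discount(\Zsamp_\iter - \Pmat)(\theta_\iter - \thetabar_\epiter),
\end{align*}
where the reward-noise terms $R_\iter - r$ cancel identically between $\That_\iter(\theta_\iter)$ and $\That_\iter(\thetabar_\epiter)$---this cancellation is precisely where variance reduction bites. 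Unrolling gives
\begin{align*}
\delta_{\Klambda+1} \;=\; \Gamma_0^{\Klambda}\, \delta_1 \;+\; \sum_{j=1}^{\Klambda} \Gamma_j^{\Klambda}\, \tdstep_j\, W_j,
\qquad \Gamma_j^{\Klambda} := \Amat_{\Klambda}\,\Amat_{\Klambda-1}\cdots \Amat_{j+1}.
\end{align*}
Because each $\Amat_\iter$ has nonnegative entries with row sums equal to $1-\tdstep_\iter(1-\discount)$, the $\ell_\infty \to \ell_\infty$ operator norm satisfies $\vertiii{\Gamma_j^{\Klambda}} \le \prod_{i=j+1}^{\Klambda}(1-\tdstep_i(1-\discount))$, where I write $\vertiii{\cdot}$ for the $\ell_\infty$ operator norm throughout.

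\textbf{Deterministic contraction.} The initial-error contribution obeys $\|\Gamma_0^{\Klambda} \delta_1\|_\infty \le \prod_{j=1}^{\Klambda}(1-\tdstep_j(1-\discount)) \cdot \|\delta_1\|_\infty$. In each of the three stepsize regimes the hypothesized lower bound on $\Klambda$ makes this product at most $1/16$: the recentered-linear product telescopes to $(1+(1-\discount)\Klambda)^{-1}$; the polynomial product is at most $\exp\{-c(1-\discount)\Klambda^{1-\omega}\}$; and the constant-stepsize product equals $(1-\tdstep(1-\discount))^{\Klambda}$. Since $\theta_1 = \thetabar_\epiter$, the triangle inequality gives $\|\delta_1\|_\infty \le \|\thetabar_\epiter - \thetastar\|_\infty + \|\thetahat_\epiter - \thetastar\|_\infty$, so this piece alone contributes at most $\tfrac{1}{16}(\|\thetabar_\epiter-\thetastar\|_\infty + \|\thetahat_\epiter-\thetastar\|_\infty)$.

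\textbf{Stochastic term via bootstrap and Freedman.} Set $B := \|\thetabar_\epiter - \thetastar\|_\infty + \|\thetahat_\epiter - \thetastar\|_\infty$ and define the stability event $\mathcal{E}_k := \{\|\delta_j\|_\infty \le B \text{ for all } j \le k\}$. On $\mathcal{E}_k$, triangle inequality gives $\|\theta_j - \thetabar_\epiter\|_\infty \le 2B$, so $\|W_j\|_\infty \le 2\discount B$ almost surely, and since the rows of $\Zsamp_j - \Pmat$ are independent with $(Z_{j,\ell}-p_\ell)^\top v$ having variance $v^\top(\diag(p_\ell) - p_\ell p_\ell^\top) v$, each coordinate satisfies
\begin{align*}
\var\bigl((W_j)_\ell \mid \mathcal{F}_{j-1}\bigr) \;=\; \discount^2 (\theta_j - \thetabar_\epiter)^\top\!\bigl(\diag(p_\ell) - p_\ell p_\ell^\top\bigr)(\theta_j - \thetabar_\epiter) \;\le\; 4\discount^2 B^2.
\end{align*}
For each $\ell \in [\Dim]$ I would then apply Freedman's inequality to the scalar martingale $\sum_j (\Gamma_j^{\Klambda}\tdstep_j W_j)_\ell$. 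Exploiting $\sum_i (\Gamma_j^{\Klambda})_{\ell,i}^2 \le \vertiii{\Gamma_j^{\Klambda}}^2$, the predictable quadratic variation is at most $4\discount^2 B^2 \sum_j \tdstep_j^2 \vertiii{\Gamma_j^{\Klambda}}^2$, while each increment is bounded uniformly by $2\discount B\cdot \tdstep_j \vertiii{\Gamma_j^{\Klambda}}$. Under the stepsize-specific lower bounds (a)--(c) on $\Klambda$, both Freedman terms drop below $B/16$ with probability at least $1 - \pardelta/(8\TotalEpochs \Klambda \Dim)$; a union bound over $\ell \in [\Dim]$ and $k \le \Klambda$ propagates $\mathcal{E}_k$ inductively and gives the claimed bound at $k = \Klambda + 1$.

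\textbf{Main obstacle.} The delicate step is the Freedman analysis, i.e.\ verifying that each of the three epoch-length conditions really is calibrated to drive both the quadratic variation $\sum_j \tdstep_j^2 \vertiii{\Gamma_j^{\Klambda}}^2$ and the maximum increment $\max_j \tdstep_j \vertiii{\Gamma_j^{\Klambda}}$ below $\tfrac{1}{\log}$-factors of $B/16$. The recentered-linear case is cleanest because $\tdstep_j \vertiii{\Gamma_j^{\Klambda}} = (1+(1-\discount)\Klambda)^{-1}$ is uniform in $j$, yielding the $(1-\discount)^{-3}$ dependence in condition~(a). The polynomial case requires balancing $\tdstep_j^2 = j^{-2\omega}$ against $\exp\{-c(1-\discount)(\Klambda^{1-\omega} - j^{1-\omega})\}$; this balance is what produces the exponent $\max\{1/(1-\omega),\, 2/\omega\}$ in condition~(b), minimized at $\omega = 2/3$. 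The constant-stepsize case is a geometric sum whose condition~(c) emerges from matching the noise level to the per-step contraction factor. A final subtlety is that the failure probabilities accrued over the inner union bound on $k \le \Klambda$ must fit inside the $\log(\Klambda \TotalEpochs \Dim/\pardelta)$ factor already present in the hypotheses, which is exactly what pins down the absolute constants $\unicon_1,\unicon_2,\unicon_3$.
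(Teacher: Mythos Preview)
Your approach is correct but takes a genuinely different route from the paper's. The paper does \emph{not} linearize around the iterate and invoke Freedman; instead it invokes the monotone--contractive sandwich framework from~\cite{wainwright2019stochastic} (Corollaries~1--2 and Theorem~1 there), which bounds $\|\theta_{k+1}-\thetahat_m\|_\infty$ in terms of an auxiliary process $\Aux_k$ obeying $\Aux_{k+1}=(1-\tdstep_k)\Aux_k+\tdstep_k W_k$ with the noise evaluated at the \emph{fixed point}:
\[
W_k \;=\; \Jstar_k^m(\thetahat_m)-\Jstar^m(\thetahat_m) \;=\; \discount(\Zsamp_k-\Pmat)(\thetahat_m-\thetabar_m).
\]
Because $\thetahat_m$ and $\thetabar_m$ are fixed within the epoch, this $W_k$ is i.i.d.\ and deterministically bounded by $2\|\thetahat_m-\thetabar_m\|_\infty$, so no bootstrap is needed. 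A short sub-Gaussian MGF recursion (the paper's Lemma~\ref{lem:Pl-bound}) then gives $\|\Aux_{k+1}\|_\infty \le 4\|\thetahat_m-\thetabar_m\|_\infty\sqrt{\tdstep_k\log(8KMD/\delta)}$, which is plugged into the three stepsize-specific shells from~\cite{wainwright2019stochastic}. Your decomposition instead carries the iterate-dependent noise $W_k=\discount(\Zsamp_k-\Pmat)(\theta_k-\thetabar_m)$, which forces the martingale/stability machinery; this is more self-contained (no reliance on the external monotone-sandwich corollaries) but pays for it with the bootstrap.

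One caution on your bootstrap: the stability event $\mathcal{E}_k=\{\|\delta_j\|_\infty\le B\ \forall j\le k\}$ with $B=\|\thetabar_m-\thetastar\|_\infty+\|\thetahat_m-\thetastar\|_\infty$ does not quite close as stated. For small $k$ the deterministic piece $\|\Gamma_0^k\delta_1\|_\infty$ is only barely below $B$ (it equals $\prod_{j\le k}(1-\tdstep_j(1-\discount))\cdot\|\delta_1\|_\infty$, which is $(1-O(\tdstep))B$ for $k=O(1)$), so any nonzero Freedman contribution pushes $\|\delta_{k+1}\|_\infty$ past $B$. The standard fix is to run the induction at an inflated level, say $\{\|\delta_j\|_\infty\le 2B\}$, verify it propagates (this is where the stepsize upper bounds in (a)--(c) are used), and then apply the sharp Freedman bound only at $k=\Klambda$ to recover the final $\tfrac{1}{8}B$ estimate. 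The paper's fixed-point noise trick sidesteps this entirely, which is its main structural advantage.
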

\noindent
See Appendix~\ref{proof:lem:epoch-error-lemma} for the proof of this
claim. \\

\noindent Our next auxiliary result provides a high-probability bound
on the difference $\| \thetahat_m - \thetastar \|_{\infty}$.
\begin{lems}
\label{lem:epoch-deviation-lemma}
There is an absolute constant $\unicon_4$ such that for any
recentering sample size satisfying $\Avglength_{\iterM} \geq 4^2
\cdot \TwiceBaseSqPlusOne^2 \cdot \frac{\log (M D/
  \delta)}{(1-\discount)^{2}}$, we have
\begin{align*}
  \| \thetahat_m - \thetastar \|_{\infty} & \leq
  \tfrac{1}{\TwiceBaseSqPlusOne} \| \thetabar_m - \thetastar \|_\infty
  + \unicon_4 \left\{ \sqrt{\LOGDMNm} \Big( \discount \cdot \nu(\Pmat,
  \thetastar) + \rho(\Pmat, r) \Big) + \LOGDMNm \cdot b(\thetastar)
  \right\},
\end{align*}
with probability exceeding $1 - \frac{\delta}{2M}$.
\end{lems}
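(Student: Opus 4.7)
The starting point is the fixed-point equation for $\thetahat_m$: since $\Jstar^m$ is the affine contraction $\theta \mapsto r + \discount\Pmat\theta - \discount\Pmat\thetabar_m + \Ttil_{\Nm}(\thetabar_m)$, subtracting $\thetastar = \Tstar(\thetastar)$ from $\thetahat_m = \Jstar^m(\thetahat_m)$ and inverting $(\IdMat - \discount\Pmat)$ yields the closed-form identity
$$\thetahat_m - \thetastar = (\IdMat - \discount\Pmat)^{-1}\bigl[\Ttil_{\Nm}(\thetabar_m) - \Tstar(\thetabar_m)\bigr].$$
This reduces the lemma to controlling the $\ell_\infty$ norm of an empirical process passed through a fixed linear map, for which entrywise concentration inequalities suffice.

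To expose the instance-dependent terms, and to isolate a cross term that can be absorbed, I would re-centre the transition noise at $\thetastar$ rather than $\thetabar_m$:
$$\Ttil_{\Nm}(\thetabar_m) - \Tstar(\thetabar_m) = \frac{1}{\Nm}\sum_{i\in\Dm}\Bigl[\discount(\Zsamp_i - \Pmat)\thetastar + (R_i - r) + \discount(\Zsamp_i - \Pmat)(\thetabar_m - \thetastar)\Bigr].$$
Pre-multiplying by $(\IdMat - \discount\Pmat)^{-1}$ and using the triangle inequality splits $\|\thetahat_m - \thetastar\|_\infty$ into three contributions $\mathrm{I} + \mathrm{II} + \mathrm{III}$, and I would bound each one separately.

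For term $\mathrm{I}$, the $\ell$-th coordinate of each summand is mean-zero with variance at most $\nu(\Pmat,\thetastar)^2$ by the very definition of $\nu$, and is bounded in magnitude by $\|\thetastar\|_{\mathrm{span}}/(1-\discount) = b(\thetastar)$, using $\vecnorm{(\IdMat - \discount\Pmat)^{-1}}{\infty,\infty} \leq 1/(1-\discount)$. A coordinate-wise Bernstein inequality together with a union bound over $\ell \in [D]$ produces the $\sqrt{\log(8DM/\delta)/\Nm}\cdot \discount\nu(\Pmat,\thetastar) + \log(8DM/\delta)/\Nm\cdot b(\thetastar)$ contribution to the stated bound. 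Term $\mathrm{II}$ is a centred Gaussian vector with covariance $(\sigma_r^2/\Nm)(\IdMat - \discount\Pmat)^{-1}(\IdMat - \discount\Pmat)^{-\top}$, whose maximum diagonal is $\rho(\Pmat,r)^2/\Nm$; a standard maximal inequality for sub-Gaussian vectors then yields the $\sqrt{\log(DM/\delta)/\Nm}\cdot \rho(\Pmat,r)$ piece.

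The critical piece is term $\mathrm{III}$, since it is the only one carrying a factor of $\thetabar_m - \thetastar$ and since it is what forces the lower bound on $\Nm$ in the hypothesis. Each summand $(\IdMat - \discount\Pmat)^{-1}(\Zsamp_i - \Pmat)(\thetabar_m - \thetastar)$ has coordinate-wise variance controlled by $\|\thetabar_m - \thetastar\|_\infty^2/(1-\discount)^2$ and $\ell_\infty$ magnitude at most $2\|\thetabar_m - \thetastar\|_\infty/(1-\discount)$, so Bernstein gives an $\ell_\infty$ bound of the form $C\bigl(\sqrt{\log(DM/\delta)/\Nm} + \log(DM/\delta)/\Nm\bigr)\|\thetabar_m - \thetastar\|_\infty/(1-\discount)$. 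The hypothesis $\Nm \geq 4^2 \cdot 9^2\, \log(MD/\delta)/(1-\discount)^2$ is tuned precisely so that this prefactor is at most $\tfrac{1}{9}$, collapsing term $\mathrm{III}$ into the advertised contraction $\tfrac{1}{9}\|\thetabar_m - \thetastar\|_\infty$. A final union bound over the three concentration events gives total failure probability at most $\delta/(2M)$. The only real obstacle is bookkeeping---tracking the absolute constants in the three Bernstein applications carefully enough that their sum matches $c_4$ and the $\Nm$-threshold is tight with prefactor $\tfrac{1}{9}$ rather than some larger fraction---but this is conceptually routine rather than substantive.
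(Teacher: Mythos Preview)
Your proposal is correct and follows essentially the same approach as the paper: derive the closed-form $\thetahat_m - \thetastar = (\IdMat - \discount\Pmat)^{-1}(\widetilde r - r)$, split $\widetilde r - r$ into the reward-noise, transition-noise-at-$\thetastar$, and cross terms, and handle the three pieces with Bernstein/Gaussian concentration plus a union bound. The only cosmetic difference is that for term~III the paper first pulls out the deterministic factor via $\|(\IdMat-\discount\Pmat)^{-1}\|_{\infty,\infty}\le (1-\discount)^{-1}$ and then applies Hoeffding to $(\widehat\Pmat-\Pmat)(\thetabar_m-\thetastar)$, whereas you apply Bernstein directly to the pre-multiplied vector; both routes yield the same $\tfrac{1}{9}\|\thetabar_m-\thetastar\|_\infty$ bound under the stated $\Nm$ threshold.
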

\noindent See Appendix~\ref{proof:lem:epoch-deviation-lemma} for the
proof of this claim. \\

\vspace{10pt}

\noindent With Lemmas~\ref{lem:epoch-error-lemma}
and~\ref{lem:epoch-deviation-lemma} in hand, the remainder of the
proof is straightforward. Recall from Eq.~\eqref{eq:tau-short}
the shorthand notation $\tauOne_m$ and $\tauTwo_m$.  Using our earlier
bound~\eqref{eqn:theta_m-triangle-inequality}, we have that at the end
of epoch $m$ (which is also the starting point of epoch $m + 1$),
\begin{align*}
\left\| \thetabar_{m + 1} - \thetastar\right\|_{\infty} & \leq \|
\theta_{\epochLen + 1} - \thetahat_m \|_\infty + \|
\thetahat_m - \thetastar \|_\infty \\
& \stackrel{(i)}{\leq}
\left\{ \frac{\| \thetabar_m - \thetastar \|_\infty}{\TwiceBaseSq} +
\frac{1}{\TwiceBaseSq}\left\|\thetahat_m - \thetastar
\right\|_{\infty}\right\} + \left\|\thetahat_m -
\thetastar\right\|_{\infty} \notag \\
& = \frac{ \| \thetabar_m - \thetastar \|_\infty }{\TwiceBaseSq} + \frac{\TwiceBaseSqPlusOne}{\TwiceBaseSq}
\cdot \left\|\thetahat_m - \thetastar\right\|_{\infty} \\
& \stackrel{(ii)}{\leq} \frac{ \| \thetabar_m - \thetastar \|_\infty
}{\TwiceBaseSq} + \frac{1}{\TwiceBaseSq} \left\lbrace \| \thetabar_m - \thetastar
\|_\infty + \unicon_4 (\tauOne_m + \tauTwo_m) \right\rbrace \\
& \leq \frac{ \| \thetabar_m - \thetastar \|_\infty }{\baseSq} + \unicon_4
(\tauOne_m + \tauTwo_m),
\end{align*}
where inequality (i) follows from Lemma~\ref{lem:epoch-error-lemma}(a), and inequality (ii) from Lemma~\ref{lem:epoch-deviation-lemma}. Finally, the sequence of inequalities above holds with probability at least $1-\frac{\pardelta}{\TotalEpochs}$ via a union bound. This completes the proof of Lemma~\ref{lem:key_epoch_recursion}.

\subsubsection{Proof of Theorem~\ref{thm:value_function_estimation}, parts (b) and (c)}
\label{proof:cor:VR_SA_poly}

The proofs of Theorem~\ref{thm:value_function_estimation} parts (b) and (c) 
require versions of Lemma~\ref{lem:key_epoch_recursion} for the 
polynomial stepsize~\eqref{eqn:poly_step} and constant stepsize~\eqref{eqn:constant_step}, respectively.
These two versions of Lemma~\ref{lem:key_epoch_recursion}
can be obtained by simply replacing Lemma~\ref{lem:epoch-error-lemma},
part (a), by Lemma~\ref{lem:epoch-error-lemma},
parts (b) and (c), respectively, in the proof of Lemma~\ref{lem:key_epoch_recursion}.


\section{Discussion}
\label{SecDiscussion}

We have discussed the problem of policy evaluation in
discounted Markov decision processes.  Our contribution is
three-fold.  First, we provided a non-asymptotic instance-dependent
local-minimax bound on the $\ell_{\infty}$-error for the policy
evaluation problem under the generative model.  
Next, via careful simulations, we showed that the
standard TD-learning algorithm---even when combined with
Polyak-Rupert iterate averaging---does not yield ideal
non-asymptotic behavior as captured by our lower bound.  In order to
remedy this difficulty, we introduced and analyzed a variance-reduced version of the standard TD-learning algorithm which achieves our
non-asymptotic instance-dependent lower bound up to logarithmic
factors.  Both the upper and lower bounds discussed in this paper
hold when the sample size is bigger than an explicit threshold;
relaxing this minimum sample size requirement is an interesting future
research direction. Finally, we point out that although we have
focused on the tabular policy evaluation problem, the variance-reduced
algorithm discussed in this paper can be applied in more generality,
and it would be interesting to explore applications of this algorithm to non-tabular settings.

\subsection*{Acknowledgements}

This research was partially supported by Office of Naval Research
Grant DOD ONR-N00014-18-1-2640 and National Science Foundation Grant
DMS-1612948 to MJW and by a grant from the DARPA Program on Lifelong Learning
Machines to MIJ.

\bibliographystyle{abbrv} \bibliography{koulikpaperTD-arXiv,MRPs-arXiv}

\appendix



\section{Proofs of auxiliary lemmas for
  Proposition~\ref{prop:lower-bound-local-asymp-minimax-risk}}

In this section, we provide proofs of the auxiliary lemmas that
underlie the proof of
Proposition~\ref{prop:lower-bound-local-asymp-minimax-risk}.


\subsection{Proof of Lemma~\ref{lemma:tedious-calculation}}
\label{sec:detail-calc-thm-one}

The proof is basically a lengthy computation. For clarity, let us
decompose the procedure into three steps.  In the first step, we
compute an explicit form for the inverse information matrix
$\Fisher_{\vartheta}^{\dag}$. In the second step, we evaluate the gradient
$\grad \psi(\vartheta)$. In the third and final step, we use the
result in the previous two steps to prove the
claim~\eqref{eqn:tedious-calculation} of the lemma.

\paragraph{Step 1:} In the first step, we evaluate $\Fisher_{\vartheta}^{\dag}$.
Recall that our data $(\Zsamp, R)$ is generated as follows. We
generate the matrix $\Zsamp$ and the vector $R$ independently. Each
row of $\Zsamp$ is generated independently. Its $j$th row, denoted by
$z_j$, is sampled from a multinomial distribution with parameter
$p_j$, where $p_j$ denotes the $j$th row of $\Pmat$. The vector $R$ is
sampled from $\normal(r, \sigma_r^2 \IdMat)$.  Because of this
independence structure, the Fisher information $\Fisher_{\vartheta}$
is a block diagonal matrix of the form
\begin{align*}
\Fisher_{\vartheta} = \begin{bmatrix}
	\Fisher_{p_1} & 0& 0 \ldots & 0 & 0 \\
	0 & \Fisher_{p_2} & 0 \ldots & 0  & 0 \\
	0 & 0 & \ldots & 0 & 0  \\
	0 & 0 & 0 \ldots & \Fisher_{p_D} & 0 \\
	0 & 0 & 0 \ldots & 0 & \Fisher_r 
\end{bmatrix}.
\end{align*}
Here each sub-block matrix $\Fisher_{p_j}$ is the Fisher information
corresponding to the model where a single data $Z_j$ is sampled from
the multinomial distribution with parameter $p_j$, and $\Fisher_r$ is
the Fisher information corresponding to the model in which a single data
point $R$ is sampled from $\normal(r, \sigma_r^2 \IdMat)$. Thus, the inverse
Fisher information $\Fisher_{\vartheta}^{\dag}$ is also a block
diagonal matrix of the form
\begin{align}
\label{eqn:I-block}
\Fisher_{\vartheta}^{\dag} = \begin{bmatrix} \Fisher_{p_1}^{\dag} & 0& 0 \ldots &
  0 & 0 \\ 0 & \Fisher_{p_2}^{\dag} & 0 \ldots & 0 & 0 \\ 0 & 0 & \ldots & 0
  & 0 \\ 0 & 0 & 0 \ldots & \Fisher_{p_D}^{\dag} & 0 \\ 0 & 0 & 0 \ldots & 0
  & \Fisher_r^{\dag}
\end{bmatrix}.
\end{align}
It is easy to compute $\Fisher_{p_j}^{\dag}$ and $\Fisher_r^{\dag}$:
\begin{subequations}
\begin{align}
\label{eqn:I-p-j}
\Fisher_{p_j}^{\dag} & = \diag(p_j) - p_j p_j^T = \cov(Z_j - p_j)
\qquad \text{for $j \in [D]$, and} \\
\label{eqn:I-r}
\Fisher_r^{\dag} & = \Fisher_r^{-1} = \sigma_r^2 I.
\end{align}
\end{subequations}
For a vector $q \in \R^D$, we use $\diag(q) \in \R^{D\times D}$ to
denote the diagonal matrix with diagonal entries $q_j$.

\paragraph{Step 2:} In the second step, we evaluate $\grad \psi(\vartheta)$.
Recall that $\psi(\vartheta) = (\IdMat - \gamma \Pmat)^{-1}r$. It is
straightforward to see that
\begin{align}
\label{eqn:grad-r}
\nabla_r\psi(\vartheta) = (\IdMat - \gamma \Pmat)^{-1}.
\end{align}
Below we evaluate $\nabla_{p_j} \psi(\theta)$ for $j \in [D]$, where $p_j$ is the $j$th row of the matrix $\Pmat$.
We show that
\begin{align}
\label{eqn:grad-p-j}
\nabla_{p_j}\psi(\vartheta) = \gamma (\IdMat - \gamma \Pmat)^{-1} e_j \theta^T.
\end{align}
Here we recall $\theta = \psi(\vartheta) = (\IdMat - \gamma \Pmat)^{-1}r$.

To prove Eq.~\eqref{eqn:grad-p-j}, we start with the following
elementary fact: for the matrix inverse mapping $A \to A^{-1}$, we
have $\frac{\partial A^{-1}}{\partial A_{jk}} = -A^{-1} e_j e_k^T
A^{-1}$ for all $j, k \in [D]$.  Combining this fact with chain rule,
we find that
\begin{align*}
\frac{\partial \psi(\vartheta)}{\partial \Pmat_{jk}} &= \gamma (\IdMat -
\gamma \Pmat)^{-1} e_j e_k^T (\IdMat - \gamma \Pmat)^{-1}r = \gamma (\IdMat - \gamma
\Pmat)^{-1} e_j \theta^T e_k,
\end{align*}
valid for all $j, k \in [\Dim]$.  This immediately implies
Eq.~\eqref{eqn:grad-p-j} since $p_j$ is the vector with
coordinates $\Pmat_{jk}$.

\paragraph{Step 3:} In the third step, we evaluate
$\grad \psi(\vartheta)^T \Fisher_{\vartheta}^{\dag} \grad
\psi(\vartheta)$.  From Eq.~\eqref{eqn:I-block}, we observe that
the inverse Fisher information $\Fisher_{\vartheta}^{\dag}$ has a
block structure.  Consequently, we can write
\begin{align}
\label{eqn:compute-block}
\grad \psi(\vartheta)^T \Fisher_{\vartheta}^{\dag} \grad
\psi(\vartheta) = \sum_{j \in [D]} \grad_{p_j} \psi(\vartheta)^T
\Fisher_{p_j}^{\dag} \grad_{p_j} \psi(\vartheta) + \grad_{R}
\psi(\vartheta)^T \Fisher_R^{\dag} \grad_{R} \psi(\vartheta).
\end{align}
Combining Eqs.~\eqref{eqn:I-r} and~\eqref{eqn:grad-r} yields
\begin{align}
\label{eqn:compute-term-one}
\grad_{R} \psi(\vartheta)^T \Fisher_R^{\dag} \grad_{R} \psi(\vartheta)
= \sigma_r^2 (\IdMat - \gamma \Pmat)^{-1}(\IdMat - \gamma \Pmat)^{-T}.
\end{align}
Combining Eqs.~\eqref{eqn:I-p-j} and~\eqref{eqn:grad-p-j} yields
\begin{align*}
\grad_{p_j} \psi(\vartheta)^T \Fisher_{p_j}^{\dag} \grad_{p_j}
\psi(\vartheta) &= \gamma^2 (\IdMat - \gamma \Pmat)^{-1} e_j \theta^T \cov(Z_j
- p_j) \theta e_j^T(\IdMat - \gamma \Pmat)^{-T} \\ &= \gamma^2 (\IdMat - \gamma
\Pmat)^{-1} e_j \cov((Z_j-p_j)^T \theta) e_j^T (\IdMat - \gamma \Pmat)^{-T},
\end{align*}
valid for each $j \in [D]$. Summing over $j \in [D]$ then leads to
\begin{align}
\sum_{j \in [D]} \grad_{p_j} \psi(\vartheta)^T \Fisher_{p_j}^{\dag}
\grad_{p_j} \psi(\vartheta) &= \gamma^2 (\IdMat - \gamma \Pmat)^{-1}
\Big(\sum_{j \in [D]} e_j \cov((Z_j-p_j)^T \theta) e_j^T\Big) (\IdMat -
\gamma \Pmat)^{-T} \nonumber \\
\label{eqn:compute-term-two}
& = \gamma^2 (\IdMat - \gamma \Pmat)^{-1} \Sigma_{\Pmat}(\theta) (\IdMat - \gamma
\Pmat)^{-T},
\end{align}
where the last line uses the definition of $\Sigma_{\Pmat}(\theta)$ in
Eq.~\eqref{eqn:def-Sigma}.  Finally, substituting
Eq.~\eqref{eqn:compute-term-one} and
Eq.~\eqref{eqn:compute-term-two} into
Eq.~\eqref{eqn:compute-block} yields the
claim~\eqref{eqn:tedious-calculation}, which completes the proof of
Lemma~\ref{lemma:tedious-calculation}.


\section{Proofs of auxiliary lemmas for Theorem~\ref{thm:lower-bound-local-minimax-risk-alt}}

In this appendix, we detailed proofs of the auxiliary lemmas that
underlie the proof of the non-asymptotic local minimax lower bound
stated in Theorem~\ref{thm:lower-bound-local-minimax-risk-alt}.


\subsection{Proof of Lemma~\ref{lemma:master-lower-bound-lemma}}
\label{sec:proof-lemma-master-lower-bound-lemma}

The proof uses the standard device of reducing estimation to testing
(see, e.g.,~\cite{Birge83, Tsybakov09,wainwright2019high}).  The first step
is to lower bound the minimax risk over $\problem$ and
$\problem^\prime$ by its averaged risk:
\begin{align}
\inf_{\hat{\theta}_\nobs} \max_{\problem \in \{\problem,
  \problem^\prime\}} \E_{\problem}
\left[\normbig{\theta-\theta(\problem)}_\infty\right] \ge \half
\left(\E_{P^\nobs}\left[\normbig{\hat{\theta}_{\nobs} -
    \theta}_{\infty}\right] +
\E_{{P^\prime}^\nobs}\left[\normbig{\hat{\theta}_{\nobs} -
    \theta^\prime}_{\infty}\right] \right).
\end{align}
By Markov's inequality, for any $\delta \ge 0$, we have
\begin{align*}
\E_{P^\nobs}\left[\normbig{\hat{\theta}_{\nobs} -
    \theta}_{\infty}\right] +
\E_{{P^\prime}^\nobs}\left[\normbig{\hat{\theta}_{\nobs} -
    \theta^\prime}_{\infty}\right] \ge \delta
\left[P^\nobs\left(\normbig{\hat{\theta}_{\nobs} - \theta}_{\infty}
  \ge \delta\right) +
      {P^\prime}^\nobs\left(\normbig{\hat{\theta}_{\nobs} -
        \theta^\prime}_{\infty} \ge \delta\right)\right].
\end{align*}
If we define $\delta_{01} \defeq \half \norm{\theta -
  \theta^\prime}_\infty$, then we have the implication
\begin{align}
\label{eqn:mutual-exclusion}
\norm{\theta-\theta}_\infty < \delta_{01} \quad \Longrightarrow \quad
\norm{\theta - \theta^\prime}_\infty > \delta_{01},
\end{align}
from which it follows that
\begin{align*}
\E_{P^n}\left[\normbig{\hat{\theta}_{n} - \theta}_{\infty}\right] +
\E_{{P^\prime}^n}\left[\normbig{\hat{\theta}_{n} -
    \theta^\prime}_{\infty}\right] &\ge \delta_{01} \left[1 -
  P^n(\normbig{\hat{\theta}_{n} - \theta}_{\infty} < \delta_{01}) +
  {P^\prime}^n(\normbig{\hat{\theta}_{n} - \theta^\prime}_{\infty} \ge
  \delta_{01})\right] \\ &\ge \delta_{01} \left[1 -
  P^n(\normbig{\hat{\theta}_\nobs- \theta^\prime}_\infty \ge
  \delta_{01}) + {P^\prime}^n(\normbig{\hat{\theta}_\nobs-
    \theta^\prime}_\infty \ge \delta_{01})\right] \\ &\ge \delta_{01}
\left[1 - \normbig{P^n - {P^\prime}^n}_{\tv}\right] \ge \delta_{01}
\left[1 - \sqrt{2}\dhel(P^n, {P^\prime}^n)^2\right].
\end{align*}
The tensorization property of Hellinger distance (cf.\ Section 15.1 in~\cite{wainwright2019high}) guarantees that
\begin{align*}
\dhel(P^\nobs,{P^\prime}^\nobs)^2 = 1- \left(1- \dhel(P,
     {P^\prime})^2\right)^\nobs \le \nobs \; {\dhel(P, P^\prime)}^2.
\end{align*}
Thus, we have proved that
\begin{align*}
\inf_{\hat{\theta}_\nobs} \max_{\qproblem \in \{\problem,
  \problem^\prime\}} \E_{\qproblem}
\left[\normbig{\theta-\theta(\qproblem)}_\infty\right] \ge \frac{1}{4}
\normbig{\theta(\problem) - \theta(\problem^\prime)}_\infty \cdot
\left(1 - \sqrt{2} \nobs \cdot{\dhel(P, P^\prime)}^2\right)_+.
\end{align*}
Taking the supremum over all the possible alternatives
$\problem^\prime \in \problemclass$ yields
\begin{align}
\minimax_\nobs(\problem; \problemclass) &\ge \sup_{\problem^\prime \in
  \problemclass}~ \frac{1}{4} \cdot \sqrt{\nobs}\normbig{\theta(\problem) -
  \theta(\problem^\prime)}_\infty \cdot \left(1 - \sqrt{2} \nobs
\cdot{\dhel(P, P^\prime)}^2\right)_+.
\end{align}
A calculation shows that this bound implies the claim in
Lemma~\ref{lemma:master-lower-bound-lemma}.


\subsection{Proof of Lemma~\ref{lemma:proof-theta-diff-lower-bound}}
\label{sec:proof-lemma-proof-theta-diff-lower-bound}

Recall the shorthand $\DeltaPPrime = \Pmat - \PmatPrime$ and $\Delta_r
= r - r^\prime$, and let $\thetastar \equiv \theta(\problem)$. We
prove that $ \norm{\theta(\problem) - \theta(\problem^\prime)}_\infty$
is lower bounded by
\begin{multline}
\label{eqn:theta-diff-lower-bound}
  \norm{\discount (\IdMat - \discount \Pmat)^{-1} \DeltaPPrime
    \thetastar + (\IdMat - \discount \Pmat)^{-1}\Delta_r }_\infty -
  \left(\frac{\discount \norm{\DeltaPPrime}_{\infty}}{(1-\discount)}
  \norm{\discount (\IdMat - \discount \Pmat)^{-1} \DeltaPPrime
    \thetastar}_\infty + \frac{\discount
    \norm{\DeltaPPrime}_\infty\norm{\Delta_r}_\infty}{(1-\discount)^2}
  \right).
\end{multline}
Since $\theta(\problem) = (\IdMat - \discount \Pmat)^{-1}r$ and
$\theta(\problem^\prime) = (\IdMat - \discount
\PmatPrime)^{-1}r^\prime$ by definition, if we introduce the shorthand
$\MatDiff{\Pmat} = (\IdMat - \discount \Pmat)^{-1} - (\IdMat -
\discount \PmatPrime)^{-1}$, some elementary calculation gives the
identity
\begin{align}
\label{eqn:represent-theta-0-theta-1}
\theta(\problem) - \theta(\problem^\prime) = \MatDiff{\Pmat}r +
(\IdMat - \discount \Pmat)^{-1}\Delta_r +
\MatDiff{\Pmat}\Delta_r.
\end{align}
Now we find a new expression for $\MatDiff{\Pmat} = (\IdMat -
\discount \Pmat)^{-1} - (\IdMat - \discount \PmatPrime)^{-1}$ that
is easy to control.  Recall the elementary identity $A_1^{-1} =
A_0^{-1} + A_1^{-1}(A_0 - A_1)A_0^{-1}$ for any matrices $A_0,
A_1$. Thus,
\begin{align*}
\begin{split}
\MatDiff{\Pmat}&=(\IdMat - \discount \Pmat)^{-1} - (\IdMat - \discount \PmatPrime)^{-1}\\
	&= \discount (\IdMat - \discount \PmatPrime)^{-1} (\Pmat - \PmatPrime) (\IdMat - \discount \Pmat)^{-1} \\
	&= \discount (\IdMat - \discount \Pmat)^{-1} (\Pmat - \PmatPrime) (\IdMat - \discount \Pmat)^{-1} 
		+ \discount^2 (\IdMat - \discount \PmatPrime)^{-1} (\Pmat - \PmatPrime) (\IdMat - \discount \Pmat)^{-1} (\Pmat - \PmatPrime) (\IdMat - \discount \Pmat)^{-1} \\	
	&= \discount (\IdMat - \discount \Pmat)^{-1} \Delta_{\Pmat} (\IdMat - \discount \Pmat)^{-1} 
		+ \discount^2 (\IdMat - \discount \PmatPrime)^{-1} \Delta_{\Pmat} (\IdMat - \discount \Pmat)^{-1} \Delta_{\Pmat}(\IdMat - \discount \Pmat)^{-1}.
\end{split}
\end{align*}
Substituting this identity into Eq.~\eqref{eqn:represent-theta-0-theta-1}, we obtain 
\begin{align}
\label{eqn:theta-diff-new-expression}
\begin{split}
\theta(\problem) - \theta(\problem^\prime) = \discount (\IdMat - \discount \Pmat)^{-1} \Delta_{\Pmat} \thetastar +  (\IdMat - \discount \Pmat)^{-1}\Delta_r 
	+ \remainder_{01},
\end{split}
\end{align}
where the remainder term $\remainder_{01}$ takes the form 
\begin{align*}
\remainder_{01} = \discount^2 (\IdMat - \discount \PmatPrime)^{-1}
\Delta_{\Pmat} (\IdMat - \discount \Pmat)^{-1} \Delta_{\Pmat}
\thetastar + \MatDiff{\Pmat}\Delta_r.
\end{align*}
Since $(1-\discount) (\IdMat -\discount \PmatPrime)^{-1}$ is a
probability transition matrix, it follows that $\norm{(1-\discount)
  (\IdMat -\discount \PmatPrime)^{-1}}_\infty \le 1$.  Thus, the
remainder term $\remainder_{01}$ satisfies the bound
\begin{align*}
\norm{\remainder_{01}}_\infty \le \frac{\discount}{(1-\discount)}
\norm{\Delta_{\Pmat}}_\infty \norm{\discount (\IdMat - \discount
  \Pmat)^{-1} \Delta_{\Pmat} \thetastar}_\infty +
\frac{\discount}{(1-\discount)^2}\norm{\Delta_{\Pmat}}_\infty\norm{\Delta_r}_\infty.
\end{align*}
The claimed lower bound~\eqref{eqn:theta-diff-lower-bound} now follows
from Eq.~\eqref{eqn:theta-diff-new-expression} and the triangle
inequality. It is clear that
Eq.~\eqref{eqn:theta-diff-lower-bound} implies the claim in the
lemma statement once we restrict $\problem^\prime \in \problemclass_1$
and $\problem^\prime \in \problemclass_2$.


\subsection{Proof of Lemma~\ref{lemma:proof-dhel-upper-bound}}
\label{sec:proof-lemma-proof-dhel-upper-bound}

Throughout the proof, we use $(\obsX, \obsr)$ (respectively
$(\obsX^\prime, \obsr^\prime)$) to denote a sample drawn from the
distribution $P$ (respectively from the distribution ${P^\prime}$). We
use $P_{\obsX}, P_{\obsr}$ (respectively $P^\prime_{\obsX},
P^\prime_{\obsr}$) to denote the marginal distribution of $\obsX,
\obsr$ (respectively $\obsX^\prime$, $\obsr^\prime$).  By the
independence of $\obsX$ and $\obsr$ (and similarly for $(\obsX^\prime,
\obsr^\prime)$, the joint distributions have the product form
\begin{align}
\label{eqn:independence-Z-r}
P = P_{\obsX} \otimes P_{\obsr}, \quad \mbox{and} \quad P^\prime =
P^\prime_{\obsX} \otimes P^\prime_{\obsr}.
\end{align}

\paragraph{Proof of part (a):}
Let $\problem^\prime = (\PmatPrime, \obsr^\prime) \in \problemclass_1$
(so $r^\prime = r$).  Because of the independence between $\obsX$ and
$\obsr$ (see Eq.~\eqref{eqn:independence-Z-r}) and $r =
r^\prime$, we have that
\begin{align*}
\dhel(P, {P^\prime})^2 = \dhel( P_{\obsX}, 
  P_{\obsX^\prime})^2.
\end{align*}
Note that the rows of $\obsX$ and $\obsX^\prime$ are independent.
Thus, if we let $\obsX_{i}, \obsX^\prime_{i}$ denote the $i$-th rows
of $\obsX$ and $\obsX^\prime$, we have
\begin{align*}
\dhel( P_{\obsX}, P_{\obsX^\prime})^2 = 1 - \prod_{i}
\left(1- \dhel( P_{\obsX_{i}}, P_{\obsX^\prime_{i}} )^2\right)
\le \sum_i \dhel( P_{\obsX_{i}}, P_{\obsX^\prime_{i}} )^2.
\end{align*}
Now, note that $\obsX_{i}$ and $\obsX^\prime_{i}$ have multinomial
distribution with parameters $\Pmat_{i}$ and $\PmatPrime_{i}$, where
$P_{0, i}, P_{1, i}$ are the $i$th row of $P_0$ and $P_1$. Thus, we
have
\begin{align*}
\dhel( P_{\obsX_{i}}, P_{\obsX^\prime_{i}})^2 \le \half
\dchi{P_{\obsX^\prime_{i}}}{P_{\obsX_{i}}} = \half \sum_{j}
\frac{\left(\Pmat_{i, j} - \PmatPrime_{i, j}\right)^2}{\Pmat_{i, j}}.
\end{align*}
Putting together the pieces yields the desired upper
bound~\eqref{eqn:dhel-upper-bound-one}.

\paragraph{Proof of part (b):} Let
$\problem^\prime = (\PmatPrime, \obsr^\prime) \in \problemclass_2$ (so
$\PmatPrime = \Pmat$).  Given the independence between $\obsX$ and
$\obsr$ (see Eq.~\eqref{eqn:independence-Z-r}) and $\Pmat =
\PmatPrime$, we have the relation $\dhel( P, {P^\prime} )^2 =
\dhel( P_{\obsr}, P_{\obsr^\prime} )^2$.  Note that $\obsr \sim
\normal(r, \IdMat)$ and $\obsr^\prime~\sim~\normal(r^\prime, \IdMat)$. Thus, we
have
\begin{align*}
 \dhel( P_{\obsr}, P_{\obsr^\prime} )^2 \le
 \dkl{P_{\obsr}}{P_{\obsr^\prime}} =
 \frac{1}{2\sigma_r^2}\ltwo{r-r^\prime}^2,
\end{align*}
as claimed.


\subsection{Proof of Lemma~\ref{lemma:construct-hardest-one-dimensional-alternatives}}
\label{sec:lemma-construct-hardest-one-dimensional-alternatives}

We now specify how to construct the probability matrix $\hardPmat$
that satisfies the desired properties stated in
Lemma~\ref{lemma:construct-hardest-one-dimensional-alternatives}.  We
introduce the shorthand notation $\bar{\theta} = \Pmat \thetastar$,
and $\Umat = (\IdMat - \discount \Pmat)^{-1}$. Let $\bar{\ell} \in
[D]$ be an index such that
\begin{align*}
\bar{\ell} \in \argmax_{\ell \in [D]}\left(e_{\ell}^{\top} (\IdMat -
\discount \Pmat)^{-1} \Sigma (\theta)(\IdMat - \discount \Pmat)^{-\top}
e_{\ell}\right)^{1/2} = \argmax_{\ell \in [D]}\Big( \sum_i
\Umat_{\ell, i}^2 \sigma^2_i(\thetastar)\Big)^{1/2}
\end{align*}
We construct the matrix $\hardPmat$ entrywise as follows:
\begin{align*}
\hardPmat_{i, j} = \Pmat_{i, j} + \frac{1}{\nu\sqrt{2\nobs}} \cdot
\Pmat_{i, j} \Umat_{\bar{\ell}, i} (\thetastar_j - \bar{\theta}_i)
\end{align*}
for $\nu \equiv \nu(\Pmat, \thetastar) = \Big( \sum_i
\Umat_{\bar{\ell}, i}^2 \sigma^2(\theta_i)\Big)^{1/2}$. Now we show
that $\hardPmat$ satisfy the following properties:
\begin{enumerate}
\item[(P1)] The matrix $\hardPmat$ is a probability transition
  matrix.
\item[(P2)] It satisfies the constraint $\sum_{i, j} \frac{\left(
  (\Pmat - \hardPmat)_{i, j} \right)^2}{\Pmat_{i, j}} \le
  \frac{1}{2\nobs}$.
\item[(P3)] It satisfies the inequalities
  \begin{align}
    \label{EqnPropertyThree}
\norm{\Pmat - \hardPmat}_\infty \le \frac{1}{\sqrt{2N}}, \quad
\mbox{and} \quad \norm{\discount (\IdMat - \discount \Pmat)^{-1}
  (\Pmat - \hardPmat) \thetastar}_\infty \ge
\frac{\discount}{\sqrt{2\nobs}} \cdot \nu(\Pmat, \thetastar).
  \end{align}
\end{enumerate}
\noindent We prove each of these properties in turn.


\paragraph{Proof of (P1):}
For each row $i \in [\Dim]$, we have
\begin{align}
  \sum_j \hardPmat_{i, j} = \sum_j \Pmat_{i, j} +
  \frac{1}{\nu\sqrt{2\nobs}} \Umat_{\bar{\ell}, i}
  \sum_j \Pmat_{i, j}(\thetastar_j - \bar{\theta}_{i}) =
  \sum_j \Pmat_{i, j} = 1,
\end{align}
thus showing that $\hardPmat \one = \one$ as desired.  Moreover,
since $(1-\discount)\Umat = (1-\discount) (\IdMat - \discount \Pmat)^{-1}$
is a probability transition matrix, we have
the bound
$|\Umat_{\bar{\ell}, i}| \le \frac{1}{1-\discount}$.
By the triangle inequality, we have
\begin{align*}
  2 \| \thetastar\|_{\myspan}
  \ge |\thetastar_j - \bar{\theta}_{i}|.
\end{align*}
Thus, our assumption on the sample size $\nobs$ implies that $\nu
\sqrt{\nobs} \ge \frac{2}{1-\discount}\| \thetastar\|_{\myspan} \ge
|\Umat_{\bar{l}, i} (\thetastar_j - \bar{\theta}_{j})|$, which further
implies that
\begin{align*}
  \hardPmat_{i, j} = \Pmat_{i, j}\left(1+\frac{1}{\nu\sqrt{2\nobs}}
  \cdot \Umat_{\bar{\ell}, i} (\thetastar_j - \bar{\theta}_i)\right)
  \ge 0.
\end{align*}
In conjunction with the property $\hardPmat \one = \one$, we conclude
that $\hardPmat$ is a probability transition matrix, as claimed.


\paragraph{Proof of (P2):}
We begin by observing that $(\Delta_{\Pmat})_{i, j} =
\frac{1}{\nu\sqrt{2\nobs}} \cdot \Pmat_{i, j} \Umat_{\bar{\ell}, i}
(\thetastar_j - \bar{\theta}_i)$.  Now it is simple to check that
\begin{align}
  \label{eqn:stupid-calc-one}
  \sum_{i, j} \frac{\left( (\Delta_{\Pmat})_{i, j}
    \right)^2}{\Pmat_{i, j}} = \frac{1}{2N \nu^2 }\sum_{i, j}
  \Pmat_{i, j} \Umat_{\bar{\ell}, i}^2 (\thetastar_j -
  \bar{\theta}_i)^2 \stackrel{(i)}{=} \frac{1}{2N \nu^2} \sum_i
  \Umat_{\bar{\ell}, i}^2 \sigma_i^2(\thetastar) = \frac{1}{2N},
\end{align}
where in step (i), we use $\sigma_i^2(\thetastar) = \sum_j \Pmat_{i,
  j}(\thetastar_j - \bar{\theta}_i)^2$ for each $i$, as the $i$th row
of our observation $\obsX$ is a multinomial distribution with mean
specified by the $i$th row of $\Pmat$.  This proves that $\hardPmat$
satisfies the constraint, as desired.

\paragraph{Proof of (P3):}
In order to verify the first inequality, we note that for any row $i$,
\begin{align*}
  \sum_{j} \left|(\Delta_{\Pmat})_{i, j}\right| \stackrel{(i)}{\le}
  \Big(\sum_{j} \frac{(\Delta_{\Pmat})_{i, j}^2}{\Pmat_{i,
      j}}\Big)^{1/2} \le \bigg(\sum_{i, j}\frac{(\Delta_{\Pmat})_{i,
      j}^2}{\Pmat_{i, j}}\bigg)^{1/2} \stackrel{(ii)}{=}
  \frac{1}{\sqrt{2\nobs}},
\end{align*}  
where step (i) follows from the Cauchy-Schwartz inequality, and step
(ii) follows by the previously established Property 2. Taking the
maximum over row $i$ yields
\begin{align*}
  \norm{\Delta_{\Pmat}}_\infty = \max_i \bigg\{\sum_{j}
  \left|(\Delta_{\Pmat})_{i, j}\right|\bigg\} \le
  \frac{1}{\sqrt{2\nobs}},
\end{align*}
thus establishing the first claimed inequality in
Eq.~\eqref{EqnPropertyThree}.

In order to establish the second inequality in
Eq.~\eqref{EqnPropertyThree}, our starting point is the lower
bound
\begin{align*}
  \norm{\discount (\IdMat - \discount \Pmat)^{-1}
    \Delta_{\Pmat} \thetastar}_\infty \ge
  \left|e_{\bar{\ell}}^T \discount (\IdMat - \discount
  \Pmat)^{-1} \Delta_{\Pmat} \thetastar\right| = \discount
  \cdot \Big|\sum_{i, j} \Umat_{\bar{\ell}, i}
  (\Delta_{\Pmat})_{i, j} \thetastar_j\Big|.
\end{align*}
It is straightforward to check that 
\begin{align*}
  \sum_{i, j} \Umat_{\bar{\ell}, i} (\Delta_{\Pmat})_{i,
    j} \thetastar_j \stackrel{(i)}{=} \sum_{i, j}
  \Umat_{\bar{\ell}, i} (\Delta_{\Pmat})_{i, j}
  (\thetastar_j - \bar{\theta}_i) = \frac{1}{\nu
    \sqrt{2N}}\sum_{i, j} \Pmat_{i, j}
  \Umat_{\bar{\ell}, i}^2 (\thetastar_j -
  \bar{\theta}_i)^2 \stackrel{(ii)}{=}
  \frac{\nu}{\sqrt{2N}}.
\end{align*}
Here step (i) follows from the fact that $\sum_j (\Delta_{\Pmat})_{i,
  j} = 0$ for all $i$ (as $\Delta_{\Pmat} \one = \hardPmat \one -
\Pmat\one = 0$); whereas step (ii) follows from our previous
calculation (see Eq.~\eqref{eqn:stupid-calc-one}) showing that
\begin{align*}
  \sum_{i, j} \Pmat_{i, j} \Umat_{\bar{\ell}, i}^2 (\thetastar_j -
  \bar{\theta}_i)^2 = \nu^2.
\end{align*}
Thus, we have verified the second inequality in
Eq.~\eqref{EqnPropertyThree}.


\section{Proofs of auxiliary lemmas for Theorem~\ref{thm:value_function_estimation}}

This appendix is devoted to the proofs of auxiliary lemmas involved in
the proof of Theorem~\ref{thm:value_function_estimation}.


\subsection{Proof of Lemma~\ref{lem:epoch-error-lemma}:}
\label{proof:lem:epoch-error-lemma}

In this section, we prove all three parts of
Lemma~\ref{lem:epoch-error-lemma}, which provides high-probability
upper bounds on the suboptimality gap at the end of each epoch.
Parts (a), (b) and (c), respectively, of
Lemma~\ref{lem:epoch-error-lemma} provides guarantees for the
recentered linear stepsize, polynomially-decaying stepsizes and
constant stepsize.  In order to de-clutter the notation, we omit the
dependence on the epoch $m$ in the operators and epoch initialization
$\thetabar_m$. In order to distinguish between the total sample size
$\nobs$ and the recentering sample size at epoch $m$, we retain
the notation $\nobs_m$ for the recentering sample size.


\subsubsection{Proof of part (a)}

We begin by rewriting the update
Eq,~\eqref{eqn:perturbed-update-equation} in a form suitable for
application of general results from~\cite{wainwright2019stochastic}.  Subtracting off the fixed point
$\thetahat$ of the operator $\Jstar$, we find that
\begin{align*}
\theta_{k+1} - \widehat{\theta} = ( 1 -\tdstep_{k} ) \left(\theta_{k} -
\widehat{\theta}\right) + \tdstep_{k} \left \{ \Jstar_{k}(\theta_{k})
- \widehat{\theta} \right \}.
\end{align*}
Note that the operator $\theta \mapsto \widehat{\Jstar}_{k}(\theta)$
is $\discount$-contractive in the $\ell_\infty$-norm and monotonic
with respect to the orthant ordering; consequently, Corollary 1 from~\cite{wainwright2019stochastic} can be applied. In applying
this corollary, the effective noise term is given by
\begin{align*}
W_{k} & \mydefn \Jstar_{k}( \widehat{\theta}) - \Jstar(\widehat{\theta})
\; = \; \left \{ \That_{k}(\widehat{\theta})-\That_{k}(\thetabar)
\right \} -\left \lbrace \Tstar(\widehat{\theta}) - \Tstar(\thetabar)
\right \rbrace.
\end{align*}
With this setup, by adapting Corollary 1 from~\cite{wainwright2019stochastic} we have
\begin{subequations}
\begin{align}
\label{eqn:SA-bound-linear}
  \left\|\theta_{\epochLen+1}-\widehat{\theta}\right\|_{\infty} \leq
  \frac{2}{1+(1-\discount)
    \epochLen}\left\{\|\thetabar-\widehat{\theta}\|_{\infty}+\sum_{k =
    1}^{\epochLen}\left \|
  \Aux_{\iter}\right\|_{\infty}\right\}+\left\|
  \Aux_{\epochLen+1}\right\|_{\infty},
\end{align}
where the auxiliary stochastic process $\{\Aux_k \}_{k \geq 1}$
evolves according to the recursion
\begin{align}
\label{EqnDefnAuxProcess}
\Aux_{k + 1} & = \big (1 - \tdstep_{k} \big) \Aux_{k} + \tdstep_{k}
W_{k}.
\end{align}
\end{subequations}
We claim that the $\ell_\infty$-norm of this process can be bounded
with high probability as follows:
\begin{lems}
\label{lem:Pl-bound}
Consider any sequence of stepsizes $\{ \tdstep_{k} \}_{k \geq 1}$ in
$(0,1)$ such that
\begin{align}
  \label{EqnStepCondition}
  (1 - \tdstep_{k + 1}) \tdstep_{k} \leq \tdstep_{k + 1}.
\end{align}
Then for any tolerance level $\delta > 0$, we have
\begin{align}
\label{EqnAuxBound}  
  \mathbb{P}\left[\left\|\Aux_{\iter + 1} \right\|_{\infty} \geq 4 \|
    \widehat{\theta} - \thetabar \|_{\infty} \sqrt{\tdstep_{\iter}}
    \sqrt{\log (8 K M D / \delta)}\right] \leq \frac{\delta}{2 K M}.
\end{align}
\end{lems}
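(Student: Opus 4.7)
\textbf{Proof plan for Lemma~\ref{lem:Pl-bound}.}
The plan is to unroll the recursion~\eqref{EqnDefnAuxProcess} into a weighted sum of independent, bounded, mean-zero vectors and then apply a Hoeffding bound coordinatewise. Concretely, setting $V_1 = 0$ and iterating gives
\begin{align*}
V_{k+1} \;=\; \sum_{j=1}^{k} \beta_j^{(k)} W_j,
\qquad \text{where } \beta_j^{(k)} \;\mydefn\; \tdstep_j \prod_{i=j+1}^{k}(1-\tdstep_i).
\end{align*}
Using the definitions of $\That_j$ and $\Tstar$, a direct computation shows that
\begin{align*}
W_j \;=\; \discount \, (\Zsamp_j - \Pmat)(\thetahat - \thetabar),
\end{align*}
which is mean-zero conditional on the recentering dataset $\Dm$ and, as a function of the within-epoch samples $\{(\Zsamp_j, R_j)\}$, independent across $j$. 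Moreover, since each row of $\Zsamp_j$ is one-hot and $\Pmat$ is row-stochastic, every coordinate of $W_j$ satisfies the deterministic bound $|W_{j,\ell}| \le 2\discount \| \thetahat - \thetabar \|_\infty \le 2\| \thetahat - \thetabar \|_\infty$.

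The first key step is to use the stepsize condition~\eqref{EqnStepCondition} to control the weights $\beta_j^{(k)}$. A backward induction on $j$ starting from $\beta_k^{(k)} = \tdstep_k$ and using $(1-\tdstep_{i+1}) \tdstep_i \le \tdstep_{i+1}$ yields $\beta_j^{(k)} \le \tdstep_k$ for all $j \le k$. Combined with the telescoping identity $\sum_{j=1}^{k} \beta_j^{(k)} = 1 - \prod_{i=1}^{k}(1-\tdstep_i) \le 1$, this gives the crucial estimate
\begin{align*}
\sum_{j=1}^{k} \bigl( \beta_j^{(k)} \bigr)^2 \;\le\; \tdstep_k \sum_{j=1}^{k} \beta_j^{(k)} \;\le\; \tdstep_k .
\end{align*}

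Conditioning on $\Dm$ (which determines both $\thetahat$ and $\thetabar$), each coordinate $(V_{k+1})_\ell$ is a sum of independent, mean-zero random variables bounded in absolute value by $\beta_j^{(k)} \cdot 2 \| \thetahat - \thetabar \|_\infty$. Hoeffding's inequality therefore yields
\begin{align*}
\Prob \Bigl[ |(V_{k+1})_\ell| \ge t \,\Big|\, \Dm \Bigr]
\;\le\; 2\exp\!\left( - \frac{t^2}{8 \, \tdstep_k \, \| \thetahat - \thetabar \|_\infty^2 } \right).
\end{align*}
The final step is a union bound over the $D$ coordinates together with the choice $t = 4\| \thetahat - \thetabar \|_\infty \sqrt{\tdstep_k} \sqrt{\log(8 K M D / \delta)}$, which makes the exponent at most $-2\log(8KMD/\delta)$. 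Marginalizing over $\Dm$ (since the bound holds uniformly in $\Dm$) gives the claimed probability $\delta/(2KM)$. The only slightly delicate point is the weight bound $\beta_j^{(k)} \le \tdstep_k$, and that is handled cleanly by the backward induction using hypothesis~\eqref{EqnStepCondition}; everything else reduces to standard concentration.
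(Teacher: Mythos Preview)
Your proposal is correct and follows essentially the same approach as the paper. The only difference is presentational: where you unroll the recursion and bound $\sum_j (\beta_j^{(k)})^2 \le \tdstep_k$ via the monotonicity $\beta_j^{(k)} \le \beta_{j+1}^{(k)}$ implied by~\eqref{EqnStepCondition}, the paper carries out an equivalent induction directly on the moment generating function of $V_k(x)$, using~\eqref{EqnStepCondition} in the inductive step to show $\log \E[e^{sV_{k+1}(x)}]$ is bounded by a constant times $s^2\tdstep_k\|\widehat\theta-\thetabar\|_\infty^2$; both routes yield the same sub-Gaussian parameter and finish with Chernoff plus a union bound.
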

\noindent See Appendix~\ref{proof:lem:Pl-bound} for a proof of this
claim. For future reference, note that all three stepsize
choices~\eqref{eqn:constant_step}--\eqref{eqn:recentered-linear}
satisfy the condition~\eqref{EqnStepCondition}. \\

\noindent Substituting the bound~\eqref{EqnAuxBound} into the
relation~\eqref{eqn:SA-bound-linear} yields
\begin{align*}
\left\|\theta_{\epochLen+1}-\thetahat\right\|_{\infty} & \leq
c\left\{\frac{\|\thetabar-\thetahat\|_{\infty}}{1+(1-\discount)
  \epochLen}+\frac{\|\thetabar-\thetahat\|_{\infty}}{(1-\discount)^{3
    / 2} \sqrt{\epochLen}}\right\} \sqrt{\log (8 \Klambda M D/
  \delta)} \\
& \leq c \|\thetabar-\thetahat\|_{\infty} \left\{\frac{\sqrt{\log (8
    \Klambda M D / \delta)}}{1+(1-\discount) \epochLen} +
\frac{\sqrt{\log (8 \Klambda M D / \delta)}}{(1-\discount)^{3 / 2}
  \sqrt{\epochLen}} \right\},
\end{align*}
with probability at least $1-\frac{\delta}{2 M}$. Combining the last
bound with the fact that $KM = \frac{\nobs}{2}$ we find that for all
$\epochLen \geq \unicon_1 \frac{\log(8 \nobs \Dim/ \delta)
}{(1-\discount)^{3}},$ we have
\begin{align*}
\|\theta_{\epochLen + 1} - \thetahat \|_{\infty} \leq
\tfrac{1}{\TwiceBaseSq} \|\thetabar - \thetahat\|_{\infty} & \leq
\tfrac{1}{\TwiceBaseSq} \|\thetabar - \thetastar \|_{\infty} +
\tfrac{1}{\TwiceBaseSq} \|\thetahat - \thetastar \|_{\infty},
\end{align*}
which completes the proof of part (a).


\subsubsection*{Proof of part (b):}

The proof of part (b) is similar to that of part (a).  In particular,
adapting Corollary 2 from the paper~\cite{wainwright2019stochastic}
for polynomial steps, we have
\begin{align}
\label{eqn:poly-bound-intermediate}
\|\theta_{k+1}- \widehat{\theta} \|_{\infty} \leq
e^{-\frac{1-\discount}{1-\omega}\left(k^{1-\omega}-1\right)} \|
\widebar{\theta} - \widehat{\theta} \|_{\infty} +
e^{-\frac{1-\discount}{1-\omega} k^{1-\omega}} \sum_{\ell=1}^{k}
\frac{e^{\frac{1-\discount}{1-\omega} \ell^{1-\omega}}}{\ell^{\omega}}
\|\Aux_{\ell} \|_{\infty} + \| \Aux_{k+1} \|_{\infty}.
\end{align}
Recall that polynomial stepsize~\eqref{eqn:poly_step} satisfies the
conditions of Lemma~\ref{lem:Pl-bound}. Consequently, applying the
bound from Lemma~\ref{lem:Pl-bound} we find that
\begin{align}
\|\theta_{k+1}- \widehat{\theta} \|_{\infty} & \leq \| \thetabar -
\widehat{\theta} \|_{\infty} \left \{
e^{-\frac{1-\discount}{1-\omega}\left(k^{1-\omega}-1\right)} + 4
\sqrt{\log (8 K M D / \delta)} \left( e^{-\frac{1-\discount}{1-\omega}
  k^{1-\omega}} \sum_{\ell=1}^{k}
\frac{e^{\frac{1-\discount}{1-\omega} \ell^{1-\omega}}}{\ell^{3 \omega
    / 2 }} + \frac{1}{k^{\omega/2}} \right) \right \}.
\end{align} 

It remains to bound the coefficient of $\| \thetabar - \thetahat
\|_{\infty}$ in the last equation, and we do so by using the following
lemma from~\cite{wainwright2019stochastic}:
\begin{lems}[Bounds on exponential-weighted sums]
There is a universal constant $c$ such that for all $\omega \in(0,1)$
and for all $k \geq\left(\frac{3
  \omega}{2(1-\discount)}\right)^{\frac{1}{1-\omega}},$ we have
\begin{align*}
e^{-\frac{1-\discount}{1-\omega} k^{1-\omega}} \sum_{\ell=1}^{k}
\frac{e^{\frac{1-\discount}{1-\omega} \ell^{1-\omega}}}{\ell^{3 \omega
    / 2}} & \leq c \left \{ \frac{e^{-\frac{1-\discount}{1-\omega}
    (k^{1-\omega})}}{(1-\discount)^{\frac{1}{1-\omega}}} +
\frac{1}{(1-\discount)} \frac{1}{k^{\omega / 2}} \right \}.
\end{align*}
\end{lems}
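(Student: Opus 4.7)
The plan is to split the sum at a threshold $\ell^\star$ of order $(1-\discount)^{-1/(1-\omega)}$ and treat the head and tail separately. Specifically, I would take $\ell^\star := \lceil (3\omega/(2(1-\discount)))^{1/(1-\omega)} \rceil$, which (up to the ceiling) equals the lower bound imposed on $k$ by the hypothesis, so automatically $\ell^\star \leq k$. With this choice the quantity $\tfrac{1-\discount}{1-\omega}(\ell^\star)^{1-\omega}$ is a constant depending only on $\omega$, and a direct logarithmic-derivative computation shows that $F(t)\,t^{-3\omega/2}$, with $F(t) := e^{\frac{1-\discount}{1-\omega}t^{1-\omega}}$, is monotonically increasing on $[\ell^\star, k]$ (indeed, positivity of $(d/dt)\log(F(t) t^{-3\omega/2}) = (1-\discount)t^{-\omega} - 3\omega/(2t)$ is equivalent to $t^{1-\omega} \geq 3\omega/(2(1-\discount))$, which is the hypothesis).

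For indices $\ell \leq \ell^\star$, I would bound $F(\ell) \leq F(\ell^\star) = \order(1)$ by monotonicity, bound the remaining weights crudely by $\sum_{\ell \leq \ell^\star} \ell^{-3\omega/2} \leq \ell^\star$, and multiply by the prefactor $e^{-\frac{1-\discount}{1-\omega}k^{1-\omega}}$ to obtain a contribution of order $e^{-\frac{1-\discount}{1-\omega}k^{1-\omega}}/(1-\discount)^{1/(1-\omega)}$, which matches the first term of the claim. For indices $\ell \in \{\ell^\star+1,\dots,k\}$, the monotonicity established above lets me compare the sum to the integral $\int_{\ell^\star}^{k+1} F(t) t^{-3\omega/2}\,dt$. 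The key device for bounding the integral is the identity $F'(t) = (1-\discount) t^{-\omega} F(t)$, which invites integration by parts against the antiderivative $(1-\discount)^{-1} t^{-\omega/2}$:
\begin{align*}
\int_{\ell^\star}^{k} \frac{F(t)}{t^{3\omega/2}}\,dt \;=\; \left. \frac{F(t)}{(1-\discount)\,t^{\omega/2}}\right|_{\ell^\star}^{k} + \frac{\omega/2}{1-\discount}\int_{\ell^\star}^{k} \frac{F(t)}{t^{\omega/2 + 1}}\,dt.
\end{align*}
Bounding $t^{-\omega/2-1} \leq (\ell^\star)^{\omega - 1} t^{-3\omega/2}$ on $[\ell^\star, k]$, the residual integral has coefficient at most $\omega (\ell^\star)^{\omega-1} / (2(1-\discount)) \leq 1/2$ by the definition of $\ell^\star$, so I can absorb it into the left-hand side to conclude $\int \leq 2 F(k) / ((1-\discount) k^{\omega/2})$. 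Multiplying by $e^{-\frac{1-\discount}{1-\omega} k^{1-\omega}} = 1/F(k)$ then yields the second, steady-state term $1/((1-\discount) k^{\omega/2})$ of the claim.

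The main obstacle is the self-absorption step: I need $\ell^\star$ large enough that $\omega (\ell^\star)^{\omega-1} / (2(1-\discount)) \leq 1/2$, yet not so large that the head contribution exceeds the first term of the claim, nor so large that $\ell^\star > k$. The choice $\ell^\star \asymp (1-\discount)^{-1/(1-\omega)}$ achieves the first two constraints simultaneously, and the hypothesis $k \geq (3\omega/(2(1-\discount)))^{1/(1-\omega)}$ is exactly the condition ensuring the third. Beyond this, the remaining work is a routine sum-to-integral comparison at the endpoints (using monotonicity on $[\ell^\star, k]$) and tracking of constants that may depend on $\omega$; these are not reflected in the stated form of the bound.
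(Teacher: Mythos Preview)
The paper does not prove this lemma; it is quoted from the reference~\cite{wainwright2019stochastic} and invoked as a black box in the proof of Lemma~\ref{lem:epoch-error-lemma}(b). There is therefore no in-paper argument to compare your proposal against.

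That said, your approach is sound and is essentially the natural one. The split at $\ell^\star \asymp (1-\discount)^{-1/(1-\omega)}$, the sum-to-integral comparison via monotonicity of $t \mapsto F(t)\,t^{-3\omega/2}$ on $[\ell^\star,\infty)$, and the integrate-by-parts-and-absorb device (rewriting $F(t)/t^{3\omega/2} = (1-\discount)^{-1} t^{-\omega/2} F'(t)$ and bounding the residual integral by a fraction strictly less than one of the original) together deliver the two claimed terms. Your self-absorption constant is in fact $1/3$, not just $1/2$, since $(\ell^\star)^{1-\omega} \geq 3\omega/(2(1-\discount))$ gives $\tfrac{\omega}{2(1-\discount)}(\ell^\star)^{\omega-1} \leq 1/3$.

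Your caveat about $\omega$-dependent constants is accurate and worth recording: the head contribution carries a factor $F(\ell^\star) = \exp\!\big(\tfrac{3\omega}{2(1-\omega)}\big)$, and $\ell^\star$ itself scales like $(3\omega/2)^{1/(1-\omega)}\,(1-\discount)^{-1/(1-\omega)}$, both of which diverge as $\omega \uparrow 1$. Since the paper only applies the lemma at a fixed $\omega \in (0,1)$, and the downstream sample-size requirement already scales as $(1-\discount)^{-(1/(1-\omega)) \vee (2/\omega)}$, this does not affect Theorem~\ref{thm:value_function_estimation}; but the word ``universal'' in the lemma statement should be read as ``absolute for each fixed $\omega$.''
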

\vspace{10pt}

\noindent 
Substituting the last bound in
Eq.~\eqref{eqn:poly-bound-intermediate} yields
\begin{align*}
  \left\|\theta_{k+1}- \widehat{\theta} \right\|_{\infty}
  & \leq \unicon \| \thetabar - \widehat{\theta} \|_{\infty}
  \left\lbrace e^{-\frac{1-\discount}{1-\omega}\left(k^{1-\omega}-1\right)}
  + 4 \sqrt{\log (8 K M D / \delta)}  \left( \frac{e^{-\frac{1-\discount}{1-\omega}\left(k^{1-\omega}-1\right)}}{(1-\discount)^{\frac{1}{1-\omega}}} + \frac{1}{(1-\discount)} \frac{1}{k^{\omega / 2}}  + \frac{1}{k^{\omega/2}} \right)  \right\rbrace \\
  & \leq   \unicon \| \thetabar - \widehat{\theta} \|_{\infty}
  \cdot \sqrt{\log (8 K M D / \delta)} 
  \left\lbrace  5 \cdot \frac{e^{-\frac{1-\discount}{1-\omega} \left(k^{1-\omega}-1\right)}}{(1-\discount)^{\frac{1}{1-\omega}}} + \frac{2}{(1 - \discount) k^{\omega/2}} \right\rbrace .
\end{align*}
Finally, doing some algebra and using the fact that $KM =
\frac{\nobs}{2}$ we find that there is an absolute constant $\unicon$
such that for all $\Klambda$ lower bounded as \mbox{$\Klambda \geq c
  \log (4 \nobs \Dim / \delta) \cdot \left( \frac{1}{1 - \discount}
  \right)^{\frac{1}{1 - \omega} \vee \frac{2}{\omega}}$,} we have
\begin{align*}
\| \theta_{\Klambda + 1} - \thetahat \|_\infty \leq \frac{\| \thetabar
  - \thetahat \|_{\infty}}{\TwiceBaseSq} \leq \frac{1}{\TwiceBaseSq}
\|\thetabar - \thetastar \|_{\infty} + \frac{1}{\TwiceBaseSq}
\|\thetahat - \thetastar \|_{\infty}.
\end{align*}
The completes the proof of part (b).


\subsubsection*{Proof of part (c):}

Invoking Theorem 1 from~\cite{wainwright2019stochastic}, we
have $\| \theta_{\Klambda} - \widehat{\theta} \|_{\infty} \leq
a_{\Klambda} + b_{\Klambda} + \| \Aux_{\Klambda} \|_{\infty}$.  For a
constant stepsize $\tdstep_{k} = \constStep$, the pair $(a_{\Klambda},
b_{\Klambda})$ is given by
\begin{align*}
  b_{\Klambda} & = \left\| \thetabar - \widehat{\theta}
  \right\|_{\infty} \cdot \left(1 - \constStep (1-\discount)
  \right)^{\Klambda - 1}, \\ a_{\Klambda} & = \discount \constStep
  \left\| \Aux_{k}\right\|_{\infty} + \discount \constStep \left\|
  \Aux_{\ell} \right\|_{\infty} \sum_{k=1}^{\Klambda - 1} \left\{
  \left(1-(1-\discount) \constStep \right)^{\Klambda - k} \right\}
  \\ & \stackrel{(i)}{\leq} \| \thetabar - \widehat{\theta}
  \|_{\infty} \cdot \left( 2 \discount \constStep^{\frac{3}{2}}
  \sqrt{\log (8 \Klambda \TotalEpochs \Dim / \delta)} + \frac{2 \gamma
    {\constStep}^{ \frac{1}{2}}}{1 - \gamma} \sqrt{\log (8 \Klambda
    \TotalEpochs \Dim / \delta)} \right),
\end{align*}
where inequality~(i) follows by substituting $\tdstep_{k} =
\constStep$, and using the bound on $\| \Aux_{\ell} \|_{\infty}$ from
Lemma~\ref{lem:Pl-bound}.

It remains to choose the pair $(\constStep, \Klambda)$ such that $\|
\theta_{\Klambda + 1} - \widehat{\theta} \|_{\infty} \leq
\tfrac{1}{\TwiceBaseSq} \| \thetabar - \widehat{\theta} \|_{\infty}$.
Doing some simple algebra and using the fact that $\Klambda
\TotalEpochs = \frac{\nobs}{2}$ we find that it is sufficient to
choose the pair $(\constStep, \Klambda)$ satisfying the conditions
\begin{align*}
  0 < \constStep \leq \frac{(1 - \discount)^2}{ \log \left( 4 \nobs
    \Dim / \delta \right)} \cdot \frac{1}{5^2 \cdot 32^2}, \quad
  \text{and} \quad \Klambda \geq 1 + \frac{2 \log 16}{\log \left(
    \frac{1}{1 - \tdstep(1 - \discount)} \right)}.
\end{align*}
With this choice, we have
\begin{align*}
  \| \theta_{\Klambda + 1} - \thetahat \|_\infty \leq \frac{\|
    \thetabar - \thetahat \|_{\infty}}{\TwiceBaseSq} \leq
  \frac{1}{\TwiceBaseSq} \left \| \thetabar - \thetastar
  \right\|_{\infty} + \frac{1}{\TwiceBaseSq} \left \| \thetahat -
  \thetastar \right\|_{\infty},
\end{align*}
which completes the proof of part (c).


\subsection{Proof of Lemma~\ref{lem:epoch-deviation-lemma}}
\label{proof:lem:epoch-deviation-lemma}

Recall our shorthand notation for the local
complexities~\eqref{eq:local-complexity}. The following lemma
characterizes the behavior of various random variables as a function
of these complexities.  In stating the lemma, we let $\Phat_{n}$ be a
sample transition matrix constructed as the average of $n$
i.i.d. samples, and let $\rhat_n$ denote the reward vector
constructed as the average of $n$ i.i.d. samples.

\begin{lemma}
\label{lem:bern-hoeff}
Each of the following statements holds with probability exceeding $1 -
\frac{\delta}{\TotalEpochs}$:
\begin{align*}
\| (\IdMat - \discount \Pmat)^{-1} (\Phat_n - \Pmat) \thetastar
\|_{\infty} &\leq 2 \nu(\Pmat, \thetastar) \cdot \sqrt{ \frac{\log (4
    \Dim \TotalEpochs/ \delta)}{n}} + 4 \cdot b(\thetastar) \cdot
\frac{\log (4 \Dim \TotalEpochs/ \delta)}{n}, \text{ and } \\ \|
(\IdMat - \discount \Pmat)^{-1} (\rhat_n - r) \|_{\infty} &\leq 2
\rho(\Pmat, r) \cdot \sqrt{ \frac{\log (4 \Dim \TotalEpochs /
    \delta)}{n}}.
\end{align*}
\end{lemma}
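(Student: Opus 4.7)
}

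The plan is to apply standard concentration inequalities coordinate-by-coordinate and then take a union bound over $\ell \in [D]$. Fix an index $\ell \in [D]$ and write $a_\ell^\top \mydefn e_\ell^\top (\IdMat - \discount \Pmat)^{-1}$. For the first bound, define
\begin{align*}
  Y_k^{(\ell)} \mydefn a_\ell^\top (\Zsamp_k - \Pmat) \thetastar,
\end{align*}
so that $e_\ell^\top (\IdMat - \discount \Pmat)^{-1} (\Phat_n - \Pmat) \thetastar = \frac{1}{n} \sum_{k=1}^n Y_k^{(\ell)}$ is an empirical mean of i.i.d.\ zero-mean random variables. The variance of each summand is exactly $a_\ell^\top \Sigma_\Pmat(\thetastar) a_\ell$, which by definition~\eqref{eq:local-complexity} is upper bounded by $\nu^2(\Pmat, \thetastar)$. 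For a uniform bound on $|Y_k^{(\ell)}|$, I would use two observations: (i) the $j$-th entry of $(\Zsamp_k - \Pmat)\thetastar$ equals $\thetastar_{X_{k,j}} - (\Pmat \thetastar)_j$, which is a difference of two quantities living in the range $[\min_i \thetastar_i, \max_i \thetastar_i]$, hence is at most $\|\thetastar\|_{\myspan}$ in magnitude; and (ii) the row-sum $\|a_\ell\|_1 \le \|(\IdMat - \discount \Pmat)^{-1}\|_\infty = \frac{1}{1-\discount}$ since $(1 - \discount)(\IdMat - \discount \Pmat)^{-1}$ is row-stochastic. Combining these, $|Y_k^{(\ell)}| \le \|a_\ell\|_1 \|(\Zsamp_k - \Pmat)\thetastar\|_\infty \le \frac{\|\thetastar\|_{\myspan}}{1-\discount} = b(\thetastar)$.

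With the variance bound $\nu^2(\Pmat, \thetastar)$ and the uniform bound $b(\thetastar)$ in hand, I would invoke Bernstein's inequality to conclude that for each $\ell$,
\begin{align*}
  \Pr\left[ \Big|\tfrac{1}{n}\textstyle\sum_k Y_k^{(\ell)}\Big| > \nu(\Pmat,\thetastar)\sqrt{\tfrac{2t}{n}} + \tfrac{2 b(\thetastar) t}{3n} \right] \le 2 e^{-t},
\end{align*}
and then set $t = \log(4 D \TotalEpochs / \delta)$. A union bound over $\ell \in [D]$ gives the first claimed inequality, after absorbing constants.

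For the second bound, $\rhat_n - r$ is a mean of $n$ i.i.d.\ $\normal(0, \sigma_r^2 \IdMat)$ vectors, so $(\IdMat - \discount \Pmat)^{-1}(\rhat_n - r)$ is exactly a centered Gaussian vector whose $\ell$-th coordinate has variance $\tfrac{\sigma_r^2}{n}\|a_\ell\|_2^2 \le \tfrac{\rho^2(\Pmat, r)}{n}$ by the very definition of $\rho(\Pmat, r)$. A standard Gaussian tail bound followed by a union bound over $\ell \in [D]$ yields the desired inequality, again after trivial adjustment of constants.

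The argument is essentially routine; the only mild subtlety is the uniform boundedness claim $|Y_k^{(\ell)}| \le b(\thetastar)$, which relies on the combined use of the span-seminorm bound on $(\Zsamp - \Pmat)\thetastar$ and the row-stochasticity of $(1-\discount)(\IdMat - \discount \Pmat)^{-1}$. All other steps are direct applications of Bernstein's and Gaussian concentration inequalities.
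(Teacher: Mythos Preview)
Your proposal is correct and follows essentially the same route as the paper: a coordinate-wise application of Bernstein's inequality (using the variance bound $\nu^2(\Pmat,\thetastar)$ and the almost-sure bound $b(\thetastar)$) together with a union bound for the first claim, and a Gaussian/Hoeffding tail bound plus union bound for the second. If anything, you fill in the boundedness step $|Y_k^{(\ell)}|\le b(\thetastar)$ more carefully than the paper does, by combining the span-seminorm bound on $(\Zsamp_k-\Pmat)\thetastar$ with the row-stochasticity of $(1-\discount)(\IdMat-\discount\Pmat)^{-1}$.
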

\begin{proof}
Entry $\ell$ of the vector $(\IdMat - \discount \Pmat)^{-1} (\Phat_n -
\Pmat) \thetastar$ is zero mean with variance given by the $\ell^{th}$
diagonal entry of the matrix $(\IdMat - \discount \Pmat)^{-1}
\Sigma(\thetastar) (\IdMat - \discount \Pmat)^{-T}$, and is bounded by
$b(\thetastar)$ almost surely. Consequently, applying the Bernstein
bound in conjunction with the union bound completes the proof of the
first claim.
In order to establish the second claim, note that the vector $(\IdMat
- \discount \Pmat)^{-1} (\rhat_n - r)$ has sub-Gaussian entries, and
apply the Hoeffding bound in conjunction with the union bound.
\end{proof}

In light of Lemma~\ref{lem:bern-hoeff}, note that it suffices to
establish the inequality
\begin{align}
\label{eq:equiv-bd}
\Pr \left\{ \| \thetahat_m - \thetastar \|_{\infty} \geq
\frac{\left\|\thetabar-\thetastar\right\|_{\infty}}{9} + \| (\IdMat -
\discount \Pmat)^{-1} (\Phat_{\Nm} - \Pmat) \thetastar \|_{\infty} +
\| (\IdMat - \discount \Pmat)^{-1} (\rhat_{\Nm} - r) \|_{\infty}
\right\} \leq \frac{\delta}{2 \TotalEpochs},
\end{align}
where we have let $\Phat_{\Nm}$ and $\rhat_{\Nm}$ denote the empirical
mean of the observed transitions and rewards in epoch $m$,
respectively.  The proof of Lemma~\ref{lem:epoch-deviation-lemma}
follows from Eq.~\eqref{eq:equiv-bd} by a union bound.


\paragraph{Establishing the bound~\eqref{eq:equiv-bd}:}

Since the epoch number $m$ should be clear from context, let us adopt
the shorthand $\thetahat \equiv \thetahat_m$, along with the shorthand
$\rhat \equiv \rhat_{\Nm}$ and $\Phat \equiv \Phat_{\Nm}$.  Note that
$\thetahat$ is the fixed point of the following operator:
\begin{align*}
  \Jstar(\theta) & \mydefn
  \Tstar(\theta)-\Tstar(\thetabar)+\Ttil_{\Nm}(\thetabar) =
  \underbrace{{\widehat{r} + \discount \left( \widehat{\Pmat} - \Pmat
      \right) \thetabar}}_{\widetilde{r}} + \discount \Pmat\theta,
\end{align*}
where we have used the fact that $\widetilde{T}_{\Nm}(\theta) =
\widehat{r} + \discount \widehat{\Pmat} \theta$.

Thus, we have $\thetahat = (\IdMat - \discount \Pmat)^{-1} \rtil$, so
that $\thetahat - \thetastar = (\IdMat - \discount \Pmat )^{-1}
\left(\rtil - r\right)$.  Also note that we have
\begin{align*}
\widetilde{r}-r = {\widehat{r} + \discount \left( \widehat{\Pmat} -
  \Pmat \right) \thetabar} - r &= \widehat{r} - r + \discount \left(
\widehat{\Pmat} - \Pmat \right) \thetastar + \discount \left(
\widehat{\Pmat} - \Pmat \right) \left( \thetabar - \thetastar \right),
\end{align*}
so that putting together the pieces and using the triangle inequality
yields the bound
\begin{align*}
\| \thetahat - \thetastar \|_{\infty} &\leq \| (\IdMat - \discount
\Pmat)^{-1} (\widehat{r} - r) \|_{\infty} + \discount \| (\IdMat -
\discount \Pmat)^{-1} \left( \widehat{\Pmat} - \Pmat \right)
\thetastar \|_{\infty} + \discount \|(\IdMat - \discount \Pmat)^{-1}
\left( \widehat{\Pmat} - \Pmat \right) \left( \thetabar - \thetastar
\right) \|_{\infty} \\
& \leq \| (\IdMat - \discount \Pmat)^{-1} (\widehat{r} - r)
\|_{\infty} + \discount \| (\IdMat - \discount \Pmat)^{-1} \left(
\widehat{\Pmat} - \Pmat \right) \thetastar \|_{\infty} +
\frac{\discount}{1 - \discount} \| \left( \widehat{\Pmat} - \Pmat
\right) \left( \thetabar - \thetastar \right) \|_{\infty}.
\end{align*}
Note that the random vector $\left( \widehat{\Pmat} - \Pmat \right)
\left( \thetabar - \thetastar \right)$ is the empirical average of
$\Nm$ i.i.d. random vectors, each of which is bounded entrywise by $2
\| \thetabar - \thetastar \|_\infty$. Consequently, by a combination
of Hoeffding's inequality and the union bound, we find that
\begin{align*}
\left\| \left( \widehat{\Pmat} - \Pmat \right) \left( \thetabar -
\thetastar \right) \right\|_\infty \leq 4 \| \thetabar - \thetastar
\|_\infty \sqrt{\LOGterm{m}},
\end{align*}
with probability at least $1-\frac{\delta}{4 M}$. Thus, provided $\Nm \geq 4^2 \cdot {\TwiceBaseSqPlusOne}^2 \cdot \frac{\discount^2}{(1 - \discount)^2} \log (8\Dim M /\delta)$ for a
large enough constant $\unicon_1$, we have
\begin{align*}
\frac{\discount}{1 - \discount} \left\| \left( \widehat{\Pmat} - \Pmat \right) \left( \thetabar - \thetastar \right)  \right\|_\infty \leq \frac{\| \thetabar - \thetastar \|_\infty}{\TwiceBaseSqPlusOne}.
\end{align*}
This completes the proof.

\subsection{Proof of Lemma~\ref{lem:Pl-bound}}

\label{proof:lem:Pl-bound}
Recall that by definition, the stochastic process $\{\Aux_k \}_{k \geq
  1}$ evolves according to the linear recursion \mbox{$\Aux_k = (1 -
  \tdstep_{k}) \Aux_{k -1} + \tdstep_{k} \W_{k - 1}$,} where the
effective noise sequence $\{\W_{k} \}_{k \geq 0}$ satisfies the
uniform bound
\begin{align*}
  \|\W_{k}\|_{\infty} \leq \left\|
  \That_{k}(\widehat{\theta})-\That_{k}(\thetabar)\right\|_{\infty} +
  \|\Tstar(\widehat{\theta})-\Tstar(\thetabar)\|_\infty \leq
  \underbrace {2 \| \widehat{\theta} - \thetabar \|_{\infty}}_{:= b}
  \quad \text{for all } k \geq 0.
\end{align*}
Moreover, we have $\Exs[\W_{k}] = 0$ by construction so that each
entry of the random vector $\W_{k}$ is a zero-mean sub-Gaussian random
variable with sub-Gaussian parameter at most $2 \| \widehat{\theta} -
\thetabar \|_{\infty}$. Consequently, by known properties of
sub-Gaussian random variables (cf.\ Chapter 2 in~\cite{wainwright2019high}), we have
\begin{align}
\label{eqn:mgf-bound-Wk}
\log \Exs \left[ e^{s \W_{k}(x)} \right] \leq \frac{s^2 b^2}{8} \qquad
\mbox{for all scalars $s \in \real$, and states $x$.}
\end{align}

We complete the proof by using an inductive argument to upper bound
the moment generating function of the random variable $\Aux_{\ell}$;
given this inequality, we can then apply the Chernoff bound to obtain
the stated tail bounds.  Beginning with the bound on the moment
generating function, we claim that
\begin{align}
  \label{eqn:mgf-bound-Pk}
  \log \Exs \big[ e^{s \Aux_{k}(x)} \big] \leq \tfrac{s^2 \tdstep_{k}
    b^2 }{8} \qquad \mbox{for all scalars $s \in \real$ and states
    $x$.}
\end{align}
We prove this claim via induction on $k$.

\paragraph{Base case:} For $k = 1$, we have
\begin{align*}
  \log \Exs \left[ e^{s \Aux_{1}(x)} \right] & = \log \Exs \left[ e^{s
      \tdstep_{1} \W_{0}(x)} \right] \leq \tfrac{s^2 \tdstep_{1}^2
    b^2}{8},
\end{align*}
where the first equality follows from the definition of
$\Aux_1$, and the second inequality follows by applying the
bound~\eqref{eqn:mgf-bound-Wk}.

\paragraph{Inductive step:} We now assume that the bound~\eqref{eqn:mgf-bound-Pk} holds for some 
iteration $k \geq 1$ and prove that it holds for iteration $k +
1$. Recalling the definition of $\Aux_k$, and the independence of the
random variables $\Aux_k$ and $\W_{k}$, we have
\begin{align*}
\Exs \left[e^{s \Aux_{k+1}(x)}\right] & = \log \Exs\left[e^{s
    \left(1-\tdstep_{k}\right) \Aux_{k} (x)} \right] + \log
\Exs\left[e^{s \tdstep_{k} \W_{k}(x)} \right] \\ 
& \leq \tfrac{s^{2} \left(1-\tdstep_{k}\right)^{2} \tdstep_{k-1}
  b^{2}}{8} + \tfrac{s^{2} \tdstep_{k}^{2} b^{2}}{8} \\
& \stackrel{(i)}{\leq} \tfrac{s^{2} \left(1-\tdstep_{k}\right)
  \tdstep_{k} b^{2} }{8} + \tfrac{s^{2} \tdstep_{k}^{2} b^{2}}{8} \\
& = \tfrac{s^{2} \tdstep_{k} b^{2}}{8},
\end{align*}  
where inequality (i) follows from the assumed
condition~\eqref{EqnStepCondition} on the stepsizes.

Simple algebra yields that all the stepsize
choices~\eqref{eqn:constant_step}--\eqref{eqn:recentered-linear}
satisfy the condition~\eqref{EqnStepCondition}.  Finally, combining
the bound~\eqref{eqn:mgf-bound-Pk} with the Chernoff bounding technique
along with a union bound over iterations $k = 1, \ldots \Klambda$
yields
\begin{align*}
  \Prob \left[\left\| \Aux_{\ell} \right\|_{\infty} \geq 2 b
    \sqrt{\tdstep_{\ell-1}} \sqrt{\log (8 K M D / \delta)}\right] \leq
  \frac{\delta}{8 \Klambda \TotalEpochs},
\end{align*}
as claimed.


\end{document}